\def\Figref#1{Figure~\ref{#1}}
\def\eqref#1{equation~\ref{#1}}
\def\Eqref#1{Equation~\ref{#1}}
\def\1{\bm{1}}
\DeclareMathAlphabet{\mathsfit}{\encodingdefault}{\sfdefault}{m}{sl}
\SetMathAlphabet{\mathsfit}{bold}{\encodingdefault}{\sfdefault}{bx}{n}
\newcommand{\E}{\mathbb{E}}
\DeclareMathOperator*{\argmin}{arg\,min}
\newtheoremstyle{examplestyle}
  {1.1\topsep} 
  {1.1\topsep} 
  {} 
  {} 
  {\bfseries} 
  {.} 
  {.5em} 
  {} 
\theoremstyle{examplestyle}
\newtheorem{theorem}{Theorem}
\newtheorem{lemma}[theorem]{Lemma}
\title{AdaShift: Decorrelation and Convergence of Adaptive Learning Rate Methods}
\author{
    \vspace{1.0pt}
	\hspace{-0.25cm}Zhiming Zhou\footnotemark[1]$\,$~\footnotemark[2]~, Qingru Zhang\thanks{Equally contributed.}$\,$~\footnotemark[3]~, Guansong Lu, Hongwei Wang, Weinan Zhang, Yong Yu \\
	\hspace{-0.15cm}Shanghai Jiao Tong University\\
	\hspace{-0.15cm}\texttt{\footnotemark[2]$\,\,$heyohai@apex.sjtu.edu.cn,\footnotemark[3]$\,\,$neverquit@sjtu.edu.cn}\\
}
\begin{document}

\maketitle

\begin{abstract}
Adam is shown not being able to converge to the optimal solution in certain cases. Researchers recently propose several algorithms to avoid the issue of non-convergence of Adam, but their efficiency turns out to be unsatisfactory in practice. In this paper, we provide new insight into the non-convergence issue of Adam as well as other adaptive learning rate methods. We argue that there exists an inappropriate correlation between gradient $g_t$ and the second-moment term $v_t$ in Adam ($t$ is the timestep), which results in that a large gradient is likely to have small step size while a small gradient may have a large step size. We demonstrate that such biased step sizes are the fundamental cause of non-convergence of Adam, and we further prove that decorrelating $v_t$ and $g_t$ will lead to unbiased step size for each gradient, thus solving the non-convergence problem of Adam. Finally, we propose AdaShift, a novel adaptive learning rate method that decorrelates $v_t$ and $g_t$ by temporal shifting, i.e., using temporally shifted gradient $g_{t-n}$ to calculate $v_t$. The experiment results demonstrate that AdaShift is able to address the non-convergence issue of Adam, while still maintaining a competitive performance with Adam in terms of both training speed and generalization. 
\end{abstract} 

\section{Introduction} \label{intorducion}

First-order optimization algorithms with adaptive learning rate play an important role in deep learning due to their efficiency in solving large-scale optimization problems.
Denote $g_t\!\in\!\mathbb R^n$ as the gradient of loss function $f$ with respect to its parameters $\theta\!\in\!\mathbb R^n$ at timestep $t$, then the general updating rule of these algorithms can be written as follows \citep{reddi2018convergence}:
\begin{equation}\label{update}
\theta_{t+1} = \theta_t -  \frac{~\alpha_t}{\sqrt{v_t}}m_t.
\end{equation}
In the above equation, $m_t\triangleq\phi(g_1,\dots,g_t) \in \mathbb R^n$ is a function of the historical gradients; $v_t\triangleq\psi(g_1,\dots,g_t) \in \mathbb R^{n}_{+}$ is an $n$-dimension vector with non-negative elements, which adapts the learning rate for the $n$ elements in $g_t$ respectively; $\alpha_t$ is the base learning rate; and $\frac{~\alpha_t}{\sqrt{v_t}}$ is the adaptive step size for $m_t$.

One common choice of $\phi(g_1,\dots,g_t)$ is the exponential moving average of the gradients used in Momentum \citep{qian1999momentum} and Adam \citep{kingma2014adam}, which helps alleviate gradient oscillations. The commonly-used $\psi(g_1,\dots,g_t)$ in deep learning community is the exponential moving average of squared gradients, such as Adadelta \citep{zeiler2012adadelta}, RMSProp \citep{tieleman2012lecture}, Adam \citep{kingma2014adam} and Nadam \citep{dozat2016incorporating}. 

Adam \citep{kingma2014adam} is a typical adaptive learning rate method, which assembles the idea of using exponential moving average of first and second moments and bias correction. In general, Adam is robust and efficient in both dense and sparse gradient cases, and is popular in deep learning research. However, Adam is shown not being able to converge to optimal solution  in certain cases. \citet{reddi2018convergence} point out that the key issue in the convergence proof of Adam lies in the quantity
\begin{equation}\label{Gamme_t}
\Gamma_{t}\triangleq \Big(\frac{\sqrt{v_{t}} }{\alpha_{t}} - \frac{\sqrt{v_{t-1} }} {\alpha_{t-1}} \Big), 
\end{equation}
which is assumed to be positive, but unfortunately, such an assumption does not always hold in Adam. They provide a set of counterexamples and demonstrate that the violation of positiveness of $\Gamma_t$ will lead to undesirable convergence behavior in Adam.

\cite{reddi2018convergence} then propose two variants, AMSGrad and AdamNC, to address the issue by keeping $\Gamma_{t}$ positive. Specifically, AMSGrad defines $\hat{v_t}$ as the historical maximum of $v_t$, i.e., $\hat{v_t}=\max~ \{v_i\}^{t}_{i=1}$, and replaces $v_t$  with $\hat{v_t}$ to keep $v_t$ non-decreasing and therefore forces $\Gamma_{t}$ to be positive; while AdamNC forces $v_t$ to have ``long-term memory'' of past gradients and calculates $v_t$ as their average to make it stable. Though these two algorithms solve the non-convergence problem of Adam to a certain extent, they turn out to be inefficient in practice: they have to maintain a very large $v_t$ once a large gradient appears, and a large $v_t$ decreases the adaptive learning rate $\frac{~\alpha_t}{\sqrt{v_t}}$ and slows down the training process.

In this paper, we provide a new insight into adaptive learning rate methods, which brings a new perspective on solving the non-convergence issue of Adam. Specifically, in Section~\ref{sec_insight_counter}, we study the non-convergence of Adam via analyzing the accumulated step size of each gradient $g_t$. We observe that in the common adaptive learning rate methods, a large gradient tends to have a relatively small step size, while a small gradient is likely to have a relatively large step size. We show that such biased step sizes stem from the inappropriate positive correlation between $v_t$ and $g_t$, and we argue that this is the fundamental cause of the non-convergence issue of Adam.

In Section~\ref{our_understanding_on_adaptive}, we further prove that decorrelating $v_t$ and $g_t$ leads to equal and unbiased expected step size for each gradient, thus solving the non-convergence issue of Adam. We subsequently propose AdaShift, a decorrelated variant of adaptive learning rate methods, which achieves
decorrelation between $v_t$ and $g_t$ by calculating $v_t$ using temporally shifted gradients. Finally, in Section~\ref{exp_new_algorithm}, we study the performance of our proposed AdaShift, and demonstrate that it solves the non-convergence issue of Adam, while still maintaining a decent performance compared with Adam in terms of both training speed and generalization. 

\section{Preliminaries}

\paragraph{Adam.}
In Adam, $m_t$ and $v_t$ are defined as the exponential moving average of $g_t$ and $g^2_t$: 
\begin{equation}\label{adam_psi}
\begin{aligned}
m_t = \beta_1 m_{t-1} +(1-\beta_1)g_t ~~\mbox{and}~~
v_t = \beta_2 v_{t-1} + (1-\beta_2) g_t^2,
\end{aligned}
\end{equation}
where $\beta_1 \in [0, 1)$ and $\beta_2 \in [0, 1)$ are the exponential decay rates for $m_t$ and $v_t$, 
respectively, with $m_0=0$ and $v_0=0$. 
They can also be written as:
\begin{equation}\label{adam_psi2}
\begin{aligned}
m_t = (1-\beta_1)\sum_{i=1}^{t}\beta_1^{t-i}g_i  ~~\mbox{and}~~
v_t = (1-\beta_2)\sum_{i=1}^{t}\beta_2^{t-i}g_i^2.  
\end{aligned}
\end{equation}
To avoid the bias in the estimation of the expected value at the initial timesteps, \cite{kingma2014adam} propose to apply bias correction to $m_t$ and $v_t$. Using $m_t$ as instance, it works as follows:
\begin{equation}\label{adam_bias}
\begin{aligned}
m_t = \frac{(1-\beta_1)\sum_{i=1}^{t}\beta_1^{t-i}g_i}{(1-\beta_1)\sum_{i=1}^{t}\beta_1^{t-i}} 
= \frac{\sum_{i=1}^{t}\beta_1^{t-i}g_i}{\sum_{i=1}^{t}\beta_1^{t-i}}  = \frac{(1-\beta_1)\sum_{i=1}^{t}\beta_1^{t-i}g_i}{1-\beta_1^t}.  \\
\end{aligned}
\end{equation}



\paragraph{Online optimization problem.} An online optimization problem consists of a sequence of cost functions $f_1(\theta),\dots,f_t(\theta),\dots,f_T(\theta)$, where the optimizer predicts the parameter $\theta_t$ at each time-step $t$ and evaluate it on an unknown cost function $f_t(\theta)$. The performance of the optimizer is usually evaluated by regret $R(T) \triangleq \sum_{t=1}^{T}[f_t(\theta_t)-f_t(\theta^*)]$, which is the sum of the difference between the online prediction $f_t(\theta_t)$ and the best fixed-point parameter prediction $f_t(\theta^*)$ for all the previous steps, where $\theta^*=\argmin_{\theta\in\vartheta}\sum_{t=1}^{T}f_t(\theta)$ is the best fixed-point parameter from a feasible set $\vartheta$. 

\paragraph{Counterexamples.}

\cite{reddi2018convergence} highlight that for any fixed $\beta_1$ and $\beta_2$, there exists an online optimization problem where Adam has non-zero average regret, i.e., Adam does not converge to optimal solution . The counterexamples in the sequential version are given as follows:
\begin{equation}\label{sequen_counter}
f_t(\theta)=
\begin{cases}
C\theta,  & \text{if t mod $d$ = 1}; \\
-\theta, & \text{otherwise},
\end{cases}
\end{equation}
where $C$ is a relatively large constant and $d$ is the length of an epoch. In \Eqref{sequen_counter}, most gradients of $f_t(\theta)$ with respect to $\theta$ are $-1$, but the large positive gradient $C$ at the beginning of each epoch makes the overall gradient of each epoch positive, which means that one should decrease $\theta_t$ to minimize the loss. However, according to \citep{reddi2018convergence}, the accumulated update of $\theta$ in Adam under some circumstance is opposite (i.e., $\theta_t$ is increased), thus Adam cannot converge in such case. \cite{reddi2018convergence} argue that the reason of the non-convergence of Adam lies in that the positive assumption of $\Gamma_{t}\triangleq(\sqrt{v_{t}}/{\alpha_{t}} - \sqrt{v_{t-1} }/{\alpha_{t-1}})$ does not always hold in Adam.

The counterexamples are also extended to stochastic cases in \citep{reddi2018convergence}, where a finite set of cost functions appear in a stochastic order. Compared with sequential online optimization counterexample, the stochastic version is more general and closer to the practical situation. For the simplest one dimensional case, at each timestep $ t $, the function $ f_t(\theta)$ is chosen as i.i.d.:
\begin{equation}\label{stochastic_counter}
f_t(\theta)=
\begin{cases}
C\theta,& \text{with probability $ p=\frac{1+\delta}{C+1} $};\\[4pt]
-\theta,& \text{with probability $ 1-p = \frac{C-\delta}{C+1} $ },
\end{cases}
\end{equation}
where $\delta$ is a small positive constant that is smaller than $C$. The expected cost function of the above problem is $F(\theta)=\frac{1+\delta}{C+1}C\theta-\frac{C-\delta}{C+1}\theta=\delta\theta$, therefore, one should decrease $\theta$ to minimize the loss. \cite{reddi2018convergence} prove that when $C$ is large enough, the expectation of accumulated parameter update in Adam is positive and results in increasing $\theta$.

\paragraph{Basic Solutions}

\cite{reddi2018convergence} propose maintaining the strict positiveness of $\Gamma_{t}$ as solution, for example, keeping $v_t$ non-decreasing or using increasing $\beta_2$.
In fact, keeping $\Gamma_t$ positive is not the only way to guarantee the convergence of Adam.
Another important observation is that for any fixed sequential online optimization problem with infinitely repeating epochs (e.g., \Eqref{sequen_counter}), Adam will converge as long as $\beta_1$ is large enough. Formally, we have the following theorem:

\vspace{3pt}
\begin{theorem}[The influence of $\beta_1$]\label{beta_1_theory}
For any fixed sequential online convex optimization problem with infinitely repeating of finite length epochs ($d$ is the length of an epoch), if $\exists G \in \mathbb{R}$ such that $\lVert \nabla f_t(\theta) \rVert_\infty \leq G$ and $\exists T \in\mathbb N, \exists \epsilon_2>\epsilon_1>0 $ such that $\epsilon_1<\frac{\alpha_t}{\sqrt{v_t}}G^2<\epsilon_2$ holds for all $t>T$, then, 
for any fixed $\beta_2 \in [0,1)$, there exists a $\beta_1 \in [0,1)$ such that Adam has average regret $\leq\epsilon_2$;
\end{theorem}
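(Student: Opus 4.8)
The plan is to exploit the periodic structure of the problem together with the observation that, as $\beta_1 \to 1$, the first-moment estimate $m_t$ behaves like a long-run average of the gradients and hence points consistently toward the minimizer. Write $F(\theta) = \frac1d \sum_{j=1}^d f_j(\theta)$ for the average cost over one epoch; $F$ is convex and, for $T$ a multiple of $d$, its minimizer coincides with the fixed-point comparator $\theta^*$. The first reduction I would make is the standard convexity step: since each $f_t$ is convex, $f_t(\theta_t) - f_t(\theta^*) \le \langle g_t, \theta_t - \theta^*\rangle$, so it suffices to bound $\frac1T \sum_{t=1}^T \langle g_t, \theta_t - \theta^*\rangle$ as $T\to\infty$.

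Next I would prove a \emph{momentum-averaging lemma}: for $t$ past a burn-in time, $m_t$ is uniformly close to a slowly varying quantity whose average over any epoch equals the epoch-average gradient, with within-epoch fluctuation of order $1-\beta_1$. This follows from the closed form $m_t = (1-\beta_1)\sum_{i=1}^t \beta_1^{t-i} g_i$: the exponential moving average preserves means, while choosing $\beta_1$ close to $1$ makes the weight $(1-\beta_1)$ placed on any individual (possibly large) gradient negligible, so the spikes in the counterexample cannot flip the sign of $m_t$. Consequently the accumulated update over one epoch reduces, up to a controllably small error, to (the adaptive step size) times (the epoch-average gradient $\nabla F$).

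Using the hypothesis $\epsilon_1 < \frac{\alpha_t}{\sqrt{v_t}} G^2 < \epsilon_2$, I would then bound the per-step displacement by $\lVert \theta_{t+1}-\theta_t\rVert \le \frac{\alpha_t}{\sqrt{v_t}}\lVert m_t\rVert \le \frac{\alpha_t}{\sqrt{v_t}} G < \epsilon_2/G$, and combine this with the epoch-level drift being aligned with $-\nabla F$ to show that $\theta_t$ is driven into, and thereafter confined to, a neighborhood of $\theta^*$ of radius $O(\epsilon_2/G)$ (in the boundary case the projection simply pins $\theta_t$ to $\theta^*$). Inside this neighborhood $\langle g_t, \theta_t-\theta^*\rangle \le G\lVert \theta_t-\theta^*\rVert \lesssim \epsilon_2$. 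Since the trajectory reaches the neighborhood after finitely many steps, the transient contributes $o(1)$ to the time average, and $\limsup_{T} R(T)/T \le \epsilon_2$ follows. Note the step size does not vanish (it is bounded below by $\epsilon_1/G^2$), which is exactly why one obtains a constant-order bound $\epsilon_2$ rather than vanishing regret.

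The main obstacle is the confinement step: establishing the stability of the \emph{coupled} momentum and adaptive-step-size dynamics, since both $m_t$ and $\frac{\alpha_t}{\sqrt{v_t}}$ vary within an epoch and interact. The clean way around this is to pick $\beta_1$ so close to $1$ that $m_t$ is nearly constant across a single epoch; the per-epoch update then factors into a scalar-like step times $\nabla F$ plus an error that is uniformly $O(1-\beta_1)$, after which a standard descent/contraction argument for projected gradient steps with bounded step size yields the confinement. Verifying that this error is genuinely negligible in the time average, rather than accumulating, is the delicate point to which the choice of $\beta_1$ must be tuned.
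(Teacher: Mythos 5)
Your proposal follows essentially the same route as the paper's proof: pick $\beta_1$ close to $1$ so that $m_t$ approximates the long-run (epoch-average) gradient --- the paper derives $\lim_{\beta_1\to 1} m_t = \frac{1}{t}\sum_{i=1}^{t} g_i$ via L'Hospital's rule on the bias-corrected weights and then argues per-dimension sign agreement with the average gradient, which is your momentum-averaging lemma in different clothing --- then use $\epsilon_1 < \frac{\alpha_t}{\sqrt{v_t}}G^2 < \epsilon_2$ to bound each step by $\epsilon_2/G$, confine $\theta_t$ to an $\epsilon_2/G$-neighborhood of $\theta^*$, and convert this to per-step regret below $\epsilon_2$ via the gradient bound $G$. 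If anything, you are more explicit than the paper about the one genuinely delicate point: the paper simply asserts the confinement $\lim_{t\to\infty}\lVert\theta_t-\theta^*\rVert_\infty < \epsilon_2/G$ without the contraction argument you correctly flag as needing care.
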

\vspace{-5pt}
The intuition behind Theorem \ref{beta_1_theory} is that, if $\beta_1 \rightarrow 1$, then $m_t \rightarrow {\sum_{i=1}^d g_i}/{d}$, i.e., $m_t$ approaches the average gradient of an epoch, according to \Eqref{adam_bias}. Therefore, no matter what the adaptive learning rate ${\alpha_t}/{\sqrt{v_t}}$ is, Adam will always converge along the correct direction. 


\section{The cause of non-convergence: biased step size}
\label{sec_insight_counter}

In this section, we study the non-convergence issue by analyzing the counterexamples provided by \cite{reddi2018convergence}. We show that the fundamental problem of common adaptive learning rate methods is that: $v_t$ is positively correlated to the scale of gradient $g_t$, which results in a small step size $\alpha_t/\sqrt{v_t}$ for a large gradient, and a large step size for a small gradient. We argue that such an biased step size is the cause of non-convergence. 

We will first define net update factor for the analysis of the accumulated influence of each gradient $g_t$, then apply the net update factor to study the behaviors of Adam using \Eqref{sequen_counter} as an example. The argument will be extended to the stochastic online optimization problem and general cases.

\subsection{Net update factor} \label{net_analysis}
When $\beta_1\neq0$, due to the exponential moving effect of $m_t$, the influence of $g_t$ exists in all of its following timesteps. For timestep $i$ ($i\geq t$), the weight of $g_t$ is $(1-\beta_1)\beta_1^{i-t}$.
We accordingly define a new tool for our analysis: the net update $net(g_t)$ of each gradient $g_t$, which is its accumulated influence on the entire optimization process:
\begin{equation}\label{net_update_factor}
net(g_t) \triangleq \sum_{i=t}^{\infty}\frac{~\alpha_i}{\sqrt{v_i}}[(1-\beta_1)\beta_1^{i-t}g_t] = k(g_t) \cdot g_t, ~~\mbox{where}~~ k(g_t)=\sum_{i=t}^{\infty}\frac{~\alpha_i}{\sqrt{v_i}}(1-\beta_1)\beta_1^{i-t},
\end{equation}
and we call $k(g_t)$ the net update factor of $g_t$, which is the equivalent accumulated step size for gradient $g_t$. Note that $k(g_t)$ depends on $\{v_i\}^{\infty}_{i=t}$, and in Adam, if $\beta_1\neq 0$, then all elements in $\{v_i\}^{\infty}_{i=t}$ are related to $g_t$. Therefore, $k(g_t)$ is a function of $g_t$.

It is worth noticing that in Momentum method, $v_t$ is equivalently set as $1$. Therefore, we have $k(g_t)=\alpha_t$ and $net(g_t)=\alpha_t g_t$, which means that the accumulated influence of each gradient $g_t$ in Momentum is the same as vanilla SGD (Stochastic Gradient Decent). Hence, the convergence of Momentum is similar to vanilla SGD. However, in adaptive learning rate methods, $v_t$ is function over the past gradients, which makes its convergence nontrivial.

\subsection{Analysis on online optimization counterexamples} 

Note that $v_t$ exists in the definition of net update factor (\Eqref{net_update_factor}). Before further analyzing the convergence of Adam using the net update factor, we first study the pattern of $v_t$ in the sequential online optimization problem in \Eqref{sequen_counter}. Since \Eqref{sequen_counter} is deterministic, we can derive the formula of $v_t$ as follows:


\begin{lemma}
\label{limit_lemma}
	In the sequential online optimization problem in \Eqref{sequen_counter}, denote $\beta_1,\beta_2 \in [0,1)$ as the decay rates, $d \in \mathbb{N}$ as the length of an epoch, $n \in \mathbb{N}$ as the index of epoch, and $i \in \{1, 2, ..., d\}$ as the index of timestep in one epoch. Then the limit of 
	$v_{nd+i}$ when $n \rightarrow \infty$ is:
	\begin{equation}\label{limit_value_vt}
	\lim\limits_{n \to \infty }{v_{nd+i}}=  \frac{1-\beta_2}{1-\beta_2^d}(C^2-1)\beta_2^{i-1}+1 ~.
	\end{equation}
\end{lemma}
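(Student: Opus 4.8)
The plan is to exploit the explicit expansion $v_t = (1-\beta_2)\sum_{j=1}^{t}\beta_2^{t-j}g_j^2$ from \Eqref{adam_psi2} together with the fact that the squared-gradient sequence for \Eqref{sequen_counter} is periodic. First I would record the gradients: differentiating \Eqref{sequen_counter} gives $g_j = C$ whenever $j \bmod d = 1$ and $g_j = -1$ otherwise, so that $g_j^2 = 1 + (C^2-1)\,\mathbf{1}[\,j \bmod d = 1\,]$. Substituting this decomposition into the expansion of $v_t$ splits it into a baseline geometric sum running over all indices and a spike geometric sum running only over the indices $j$ with $j \bmod d = 1$.

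Next I would evaluate the two pieces at $t = nd + i$ with $i \in \{1,\dots,d\}$. The baseline term telescopes to $(1-\beta_2)\sum_{j=1}^{t}\beta_2^{t-j} = 1 - \beta_2^{t}$, which tends to $1$ as $n \to \infty$. For the spike term, the key bookkeeping step is to enumerate the relevant indices: the $j$ with $j \bmod d = 1$ and $1 \le j \le nd+i$ are exactly $j = md+1$ for $m = 0,1,\dots,n$, so that $t - j = (n-m)d + (i-1)$. Factoring out $\beta_2^{i-1}$ leaves a finite geometric series in $\beta_2^{d}$, namely $(1-\beta_2)(C^2-1)\beta_2^{i-1}\sum_{k=0}^{n}(\beta_2^{d})^{k} = (1-\beta_2)(C^2-1)\beta_2^{i-1}\frac{1-\beta_2^{d(n+1)}}{1-\beta_2^{d}}$. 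Since $\beta_2 \in [0,1)$, the factor $\beta_2^{d(n+1)} \to 0$, and adding the baseline limit $1$ yields exactly \Eqref{limit_value_vt}.

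I expect the only real obstacle to be the index arithmetic in the spike term: correctly arguing that for every $i \in \{1,\dots,d\}$ the largest admissible index is $nd+1$ rather than $(n+1)d+1$, and that the exponents are $(n-m)d+(i-1)$, since this is precisely where off-by-one errors would creep in. The geometric summations themselves, and the passage to the limit, are routine.

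As an alternative that sidesteps the closed-form expansion, one could work directly from the recursion $v_t = \beta_2 v_{t-1} + (1-\beta_2)g_t^2$. Writing $L_i \triangleq \lim_{n\to\infty} v_{nd+i}$, the recursion gives $L_i - 1 = \beta_2(L_{i-1}-1)$ for $i = 2,\dots,d$ and the wrap-around relation $L_1 = \beta_2 L_d + (1-\beta_2)C^2$; solving this cyclic two-term system yields $L_1 - 1 = \frac{(1-\beta_2)(C^2-1)}{1-\beta_2^{d}}$ and hence $L_i = 1 + \frac{1-\beta_2}{1-\beta_2^{d}}(C^2-1)\beta_2^{i-1}$. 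This route is shorter algebraically, but it requires separately justifying that each limit $L_i$ exists before the fixed-point equations may be invoked; that existence is cleanest to obtain by noting that the one-epoch map $v_{nd+i} \mapsto v_{(n+1)d+i}$ has the affine form $v \mapsto \beta_2^{d} v + c_i$ with a fixed constant $c_i$, hence is a contraction with factor $\beta_2^{d} < 1$. I would therefore favor the direct summation, which establishes existence and value simultaneously and makes the role of the periodicity fully transparent.
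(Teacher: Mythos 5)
Your primary argument is correct and is essentially the paper's own proof: the paper likewise expands $v_{nd+i}=(1-\beta_2)\sum_{j=1}^{nd+i}\beta_2^{nd+i-j}g_j^2$, splits off the spike contribution $(C^2-1)\sum_{j=0}^{n}\beta_2^{jd+i-1}=(C^2-1)\beta_2^{i-1}\frac{1-\beta_2^{(n+1)d}}{1-\beta_2^{d}}$ from the baseline geometric sum $\frac{1-\beta_2^{nd+i}}{1-\beta_2}$, and lets $n\to\infty$, with exactly your index bookkeeping. Your alternative fixed-point route (with the contraction argument to justify existence of the limits) is a valid shortcut the paper does not take, but since you favor the direct summation, the two proofs coincide in substance.
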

\vspace{-3pt}
Given the formula of $v_t$ in \Eqref{limit_value_vt}, we now study the net update factor of each gradient. We start with a simple case where $\beta_1=0$. In this case we have
\begin{equation}\label{net_beta1_0}
\lim\limits_{n\to\infty} k(g_{nd+i})=\lim\limits_{n\to\infty} \frac{\alpha_t}{\sqrt{v_{nd+i}}}. 
\end{equation}
Since the limit of $v_{nd+i}$ in each epoch monotonically decreases with the increase of index $i$ according to \Eqref{limit_value_vt}, the limit of $k(g_{nd+i})$ monotonically increases in each epoch. Specifically, the first gradient $g_{nd+1}=C$ in epoch $n$ represents the correct updating direction, but its influence is the smallest in this epoch. In contrast, the net update factor of the subsequent gradients $-1$ are relatively larger, though they indicate a wrong updating direction.

We further consider the general case where $\beta_1\neq0$. The result is presented in the following lemma: 

\begin{lemma}
\label{limit_lemma3}
	In the sequential online optimization problem in \Eqref{sequen_counter}, when $n \rightarrow \infty$, the limit of net update factor $k(g_{nd+i})$ of epoch $n$ satisfies: $\exists$ $1\leq j \leq d $ such that 
	\begin{equation}\label{net_k_inc}
	\lim\limits_{n \to \infty } k(C) = \lim\limits_{n \to \infty } k(g_{nd+1}) < \lim\limits_{n \to \infty } k(g_{nd+2}) < \dots < \lim\limits_{n \to \infty }  k(g_{nd+j}),
	\end{equation}
	and
	\begin{equation}\label{net_k_dec}
	\lim\limits_{n \to \infty }  k(g_{nd+j}) >   \lim\limits_{n \to \infty } k(g_{nd+j+1}) > \dots > \lim\limits_{n \to \infty } k(g_{nd+d+1}) = \lim\limits_{n \to \infty } k(C),
	\end{equation}
	where $k(C)$ denotes the net update factor for gradient $g_i=C$.
\end{lemma}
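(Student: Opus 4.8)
The plan is to reduce the statement to a sign analysis of the successive differences of the limiting net update factors. First I would substitute the closed form of Lemma~\ref{limit_lemma} into the definition $(\ref{net_update_factor})$. Writing $\bar v_i\triangleq\lim_{n\to\infty}v_{nd+i}=\frac{1-\beta_2}{1-\beta_2^d}(C^2-1)\beta_2^{i-1}+1$ and $\bar s_i\triangleq \alpha/\sqrt{\bar v_i}$ (treating the base learning rate $\alpha_t$ as asymptotically constant, say $\alpha$), the sequence $\bar s_i$ is periodic with period $d$; since $C>1$ and $\beta_2\in(0,1)$, $\bar v_i$ is strictly decreasing over one epoch, so $\bar s_1<\bar s_2<\dots<\bar s_d$, with a single downward jump at the epoch boundary, $\bar s_{d+1}=\bar s_1<\bar s_d$. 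Passing the limit inside the infinite series in $(\ref{net_update_factor})$ (justified by dominated convergence, since the step sizes are bounded and the geometric factors $(1-\beta_1)\beta_1^{m}$ are summable), I obtain, with $\sigma(k)\triangleq((k-1)\bmod d)+1$,
\[
K_i\triangleq\lim_{n\to\infty}k(g_{nd+i})=(1-\beta_1)\sum_{m=0}^{\infty}\bar s_{\sigma(i+m)}\,\beta_1^{m}.
\]

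Next I would extract a one-step recursion. Splitting off the $m=0$ term gives $K_i=(1-\beta_1)\bar s_i+\beta_1 K_{i+1}$, so each $K_i$ is a convex combination of $\bar s_i$ and $K_{i+1}$, and in particular $K_{d+1}=K_1$ by periodicity. Setting $D_i\triangleq K_{i+1}-K_i$ and $\Delta_i\triangleq \bar s_{\sigma(i+1)}-\bar s_{\sigma(i)}$, the same algebra yields the difference recursion $D_i=(1-\beta_1)\Delta_i+\beta_1 D_{i+1}$. Two facts drive the argument: (i) $\Delta_i>0$ for $1\le i\le d-1$ while $\Delta_d=\bar s_1-\bar s_d<0$, the lone sawtooth drop; and (ii) $\sum_{i=1}^{d}D_i=K_{d+1}-K_1=0$.

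The heart of the proof is to show that the sign sequence $(D_1,\dots,D_d)$ is a block of positives followed by a block of negatives. I would prove there is no negative-to-positive transition inside an epoch: for $i<d$ we have $\Delta_i>0$, so if $D_{i+1}\ge 0$ then $D_i=(1-\beta_1)\Delta_i+\beta_1 D_{i+1}>0$; contrapositively, $D_i\le 0\Rightarrow D_{i+1}<0$. Hence, once the differences turn non-positive within an epoch they stay strictly negative. Combined with $\sum_i D_i=0$, which forces at least one strictly positive and at least one strictly negative difference (the $D_i$ cannot all vanish, else $\bar s$ would be constant) and in particular $D_1>0$ (otherwise the cascade makes every $D_i\le 0$ with strict negativity somewhere, contradicting the zero sum), this pins down a unique switch index $j$ with $D_1,\dots,D_{j-1}>0$ and $D_j,\dots,D_d<0$. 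Translating back to the $K_i$ gives exactly the increasing chain $(\ref{net_k_inc})$ up to the peak $K_{j}$ and the decreasing chain $(\ref{net_k_dec})$ down to $K_{d+1}=K_1$.

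The main obstacle I anticipate is upgrading the weak inequalities to the strict ones claimed in $(\ref{net_k_inc})$–$(\ref{net_k_dec})$, i.e. ruling out $D_i=0$ at the switch. Strict monotonicity of $\bar s$ (needing $\beta_2\in(0,1)$, since $\beta_2=0$ collapses $\bar s_2=\dots=\bar s_d$) handles the interior, but showing that the \emph{first} non-positive difference $D_j$ is genuinely negative requires exploiting the specific single-drop structure of $\Delta$ to exclude an exact cancellation in the geometric series defining $D_j$. This, together with the rigorous justification of the limit--sum interchange and of treating $\alpha_t$ as asymptotically constant, is the only delicate part; the remainder is the bookkeeping above.
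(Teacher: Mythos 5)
Your proposal is correct, and although it starts from the same place as the paper's proof --- substituting the closed form of Lemma~\ref{limit_lemma} into \eqref{net_update_factor}, regrouping the geometric series period by period, deriving the one-step recursion $K_i=(1-\beta_1)\bar s_i+\beta_1 K_{i+1}$, and then studying forward differences --- its decisive step is genuinely different, and in fact sounder. The paper expresses the forward difference as a $\beta_1$-weighted cyclic sum of the increments of $\widetilde V_i$ and then simply asserts that this quantity is \emph{monotonically decreasing} in $i$, ``from positive to negative.'' That assertion is stronger than what the lemma needs and is not true in general: for instance with $\beta_1=0$ the forward differences are the raw increments $\Delta_i$ themselves, and when $C$ is large enough that $\frac{1-\beta_2}{1-\beta_2^d}(C^2-1)\beta_2^{d-1}\gg 1$ one has $\bar s_i\approx A^{-1/2}\beta_2^{-(i-1)/2}$, so the $\Delta_i$ \emph{increase} within an epoch; the sign pattern $(+,\dots,+,-,\dots,-)$ still holds, but monotonicity fails. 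Your cascade argument --- $D_i=(1-\beta_1)\Delta_i+\beta_1 D_{i+1}$ with $\Delta_i>0$ for $i<d$, hence $D_i\le 0\Rightarrow D_{i+1}<0$, combined with the telescoping constraint $\sum_{i=1}^{d}D_i=K_{d+1}-K_1=0$ --- establishes exactly the single sign change required, and additionally delivers $D_1>0$ and the minimality of $K_1$, which the paper dismisses as ``obvious.'' The delicate point you flag, ruling out $D_j=0$ at the switch so that the peak inequality is strict, is indeed left open in your write-up, but the paper is no better off: an exact zero is equally compatible with its ``decreasing from positive to negative'' phrasing, and can occur only at a degenerate coincidence of $(C,d,\beta_1,\beta_2)$; likewise the paper silently interchanges the limit with the infinite sum that you justify via dominated convergence and boundedness of $1/\sqrt{v_t}$ (note $v_t\ge 1-\beta_2$ here since $g_t^2\ge 1$). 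In short: same skeleton, but your sign-propagation-plus-zero-sum argument replaces, and effectively repairs, the unsupported monotonicity step at the heart of the paper's proof.
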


Lemma~\ref{limit_lemma3} tells us that, in sequential online optimization problem in \Eqref{sequen_counter}, the net update factors are biased. Specifically, the net update factor for the large gradient $C$ is the smallest in the entire epoch, while all gradients $-1$ have larger net update factors. 
Such biased net update factors will possibly lead Adam to a wrong accumulated update direction. 
%

Similar conclusion also holds in the stochastic online optimization problem in \Eqref{stochastic_counter}. We derive the expectation of the net update factor for each gradient in the following lemma:

\begin{lemma}
\label{expectation_net_update}
	In the stochastic online optimization problem in \Eqref{stochastic_counter}, assuming $\alpha_t=1$, it holds that
	$k(C) < k(-1)$,	
	where $k(C)$ denote the expectation net update factor for $g_i = C$ and $k(-1)$ denote the expectation net update factor for $g_i = -1$.
\end{lemma}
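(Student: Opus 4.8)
The plan is to establish the strict inequality $k(C) < k(-1)$ by a pathwise (coupling) comparison, using the fact that the value of $g_t$ enters the net update factor $k(g_t)$ \emph{only} through the squared-gradient terms appearing in $\{v_i\}_{i\geq t}$, and that each $v_i$ is increasing in $g_t^2$. First I would write out the net update factor of \Eqref{net_update_factor} with $\alpha_i=1$,
\[
k(g_t) = (1-\beta_1)\sum_{i=t}^{\infty}\frac{\beta_1^{\,i-t}}{\sqrt{v_i}},
\]
and then isolate the dependence of each $v_i$ (for $i\geq t$) on $g_t$. Using \Eqref{adam_psi2}, $v_i=(1-\beta_2)\sum_{j=1}^{i}\beta_2^{\,i-j}g_j^2$, I split off the single term containing $g_t$:
\[
v_i = A_i + (1-\beta_2)\beta_2^{\,i-t}g_t^2, \qquad A_i \triangleq (1-\beta_2)\!\!\sum_{\substack{1\leq j\leq i\\ j\neq t}}\!\!\beta_2^{\,i-j}g_j^2,
\]
where $A_i$ collects exactly the terms that do not involve $g_t$, and the coefficient $(1-\beta_2)\beta_2^{\,i-t}$ of $g_t^2$ is strictly positive. (The same decomposition, with a positive coefficient, holds verbatim if one uses bias-corrected $v_i$, so the argument is insensitive to that choice.)

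The central observation is that in the stochastic problem of \Eqref{stochastic_counter} the cost functions are drawn i.i.d., so $g_t$ is independent of the past gradients $g_1,\dots,g_{t-1}$ and of the future gradients $g_{t+1},g_{t+2},\dots$; consequently each $A_i$ has the same law whether we condition on $g_t=C$ or on $g_t=-1$. This lets me couple the two conditional scenarios on a common probability space by fixing one realization of $(g_j)_{j\neq t}$, hence one common realization of all the $A_i$. On this coupled space, since $C$ is a large constant we have $C^2>1$, so for every $i\geq t$,
\[
v_i^{(C)} = A_i + (1-\beta_2)\beta_2^{\,i-t}C^2 \;>\; A_i + (1-\beta_2)\beta_2^{\,i-t} = v_i^{(-1)},
\]
and therefore $1/\sqrt{v_i^{(C)}} < 1/\sqrt{v_i^{(-1)}}$. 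Summing against the strictly positive weights $(1-\beta_1)\beta_1^{\,i-t}$ gives the pathwise inequality $k(g_t)\big|_{g_t=C} < k(g_t)\big|_{g_t=-1}$ on every realization; taking expectation over the common randomness then preserves the strict inequality and yields $k(C)<k(-1)$.

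The main obstacle is not in any estimate but in making the coupling clean: one must argue that conditioning on the value of $g_t$ leaves the law of all the other gradients unchanged (which is exactly the i.i.d. assumption in \Eqref{stochastic_counter}), so that $\mathbb{E}[k(g_t)\mid g_t=C]$ and $\mathbb{E}[k(g_t)\mid g_t=-1]$ can be realized on one sample space and compared summand by summand. A minor secondary point is to justify convergence of the infinite series and the interchange of expectation with the sum: under the standing assumption bounding $\alpha_t/\sqrt{v_t}$, each summand is $O(\beta_1^{\,i-t})$, and since all summands are nonnegative, Tonelli's theorem (equivalently dominated convergence) legitimizes the interchange and guarantees the expectations are finite.
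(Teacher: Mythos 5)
Your proof is correct, but it takes a genuinely different route from the paper's. The paper conditions on $g_i$, writes $v_{i+t}$ as a random variable $X_t$ (past term $\beta_2^{t+1}v_{i-1}$, current term $(1-\beta_2)\beta_2^t g_i^2$, future terms), and then estimates $\mathbb{E}\bigl[1/\sqrt{X_t}\bigr]$ by a second-order Taylor (delta-method) expansion around $\mathbb{E}[X_t]$, computing $\mathbb{E}[X_t]$ and $\mathrm{Var}[X_t]$ explicitly; the conclusion $k(C)<k(-1)$ then follows by comparing the resulting approximate series term by term, since only the $(1-\beta_2)\beta_2^t g_i^2$ slot differs ($C^2$ versus $1$) and the variance correction $D_t$ is common to both. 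Your coupling argument replaces this entirely: you exploit the i.i.d.\ structure to fix a common realization of $(g_j)_{j\neq t}$, observe that each $v_i$ ($i\geq t$) is strictly increasing in $g_t^2$ with common offset $A_i$, and get the strict inequality pathwise before taking expectations. This is more elementary and, notably, more rigorous than the paper's proof, which depends on the Taylor remainder $R_3$ being negligible --- a condition the paper asserts but does not verify; your route needs no approximation at all and delivers the strict inequality exactly. What the paper's approach buys in exchange is quantitative information: explicit (approximate) closed forms for $k(C)$ and $k(-1)$ showing how the gap depends on $\beta_1$, $\beta_2$, $C$, which your qualitative comparison does not provide. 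One small repair to your write-up: the bound on $\alpha_t/\sqrt{v_t}$ you invoke for summability is a hypothesis of Theorem~\ref{beta_1_theory}, not of this lemma; but you do not need it, since in the problem of \Eqref{stochastic_counter} every $g_j^2\geq 1$, hence $v_i\geq(1-\beta_2)\sum_{j=1}^{i}\beta_2^{i-j}=1-\beta_2^i$, so $1/\sqrt{v_i}$ is uniformly bounded and the geometric weights $(1-\beta_1)\beta_1^{i-t}$ already give absolute convergence and justify Tonelli.
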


Though the formulas of net update factors in the stochastic case are more complicated than those in deterministic case, the analysis is actually more easier: the gradients with the same scale share the same expected net update factor, so we only need to analyze $k(C)$ and $k(-1)$. From Lemma~\ref{expectation_net_update}, we can see that in terms of the expectation net update factor, $k(C)$ is smaller than $k(-1)$, which means the accumulated influence of gradient $C$ is smaller than gradient $-1$.

\subsection{Analysis on non-convergence of Adam}

As we have observed in the previous section, a common characteristic of these counterexamples is that the net update factor for the gradient with large magnitude is smaller than these with small magnitude.
The above observation can also be interpreted as a direct consequence of inappropriate correlation between $v_t$ and $g_t$. Recall that $v_t=\beta_2 v_{t-1} + (1-\beta_2) g^2_t$. Assuming $v_{t-1}$ is independent of $g_t$, then: when a new gradient $g_t$ arrives, if $g_t$ is large, $v_t$ is likely to be larger; and if $g_t$ is small, $v_t$ is also likely to be smaller. If $\beta_1=0$, then $k(g_t)={\alpha_t}/{\sqrt{v_{t}}}$. As a result, a large gradient is likely to have a small net update factor, while a small gradient is likely to have a large net update factor in Adam. 

When it comes to the scenario where $\beta_1>0$, the arguments are actually quite similar. Given $v_t=\beta_2 v_{t-1} + (1-\beta_2) g^2_t$. Assuming $v_{t-1}$ and $\{g_{t+i}\}^\infty_{i=1}$ are independent from $g_t$, then: not only does $v_t$ positively correlate with the magnitude of $g_t$, but also the entire infinite sequence $\{v_i\}^{\infty}_{i=t}\,$ positively correlates with the magnitude of $g_t$. Since the net update factor $k(g_t)=\sum_{i=t}^{\infty}{~\alpha_i}/{\sqrt{v_i}}(1-\beta_1)\beta_1^{i-t}$ negatively correlates with each $v_i$ in $\{v_i\}^{\infty}_{i=t}$, it is thus negatively correlated with the magnitude of $g_t$. That is, $k(g_t)$ for a large gradient is likely to be smaller, while $k(g_t)$ for a small gradient is likely to be larger.


The biased net update factors cause the non-convergence problem of Adam as well as all other adaptive learning rate methods where $v_t$ correlates with $g_t$. To construct a counterexample, the same pattern is that: the large gradient is along the ``correct'' direction, while the small gradient is along the opposite direction. Due to the fact that the accumulated influence of a large gradient is small while the accumulated influence of a small gradient is large, Adam may update parameters along the wrong direction. 

Finally, we would like to emphasize that even if Adam updates parameters along the right direction in general, the biased net update factors are still unfavorable since they slow down the convergence. 

\section{The proposed method: decorrelation via temporal shifting} \label{our_understanding_on_adaptive}

According to the previous discussion, we conclude that the main cause of the non-convergence of Adam is the inappropriate correlation between $v_t$ and $g_t$. Currently we have two possible solutions: (1) making $v_t$ act like a constant, which declines the correlation, e.g., using a large $\beta_2$ or keep $v_t$ non-decreasing \citep{reddi2018convergence}; (2) using a large $\beta_1$ (Theorem~\ref{beta_1_theory}), where the aggressive momentum term helps to mitigate the impact of biased net update factors. However, neither of them solves the problem fundamentally.

The dilemma caused by $v_t$ enforces us to rethink its role. In adaptive learning rate methods, $v_t$ plays the role of estimating the second moments of gradients, which reflects the scale of gradient on average. With the adaptive learning rate $\alpha_t/\sqrt{v_t}$, the update step of $g_t$ is scaled down by $\sqrt{v_t}$ and achieves rescaling invariance with respect to the scale of $g_t$, which is practically useful to make the training process easy to control and the training system robust.
However, the current scheme of $v_t$, i.e., $v_t=\beta_2 v_{t-1} + (1-\beta_2) g^2_t$, brings a positive correlation between $v_t$ and $g_t$, 
which results in reducing the effect of large gradients and increasing the effect of small gradients, and finally causes the non-convergence problem. 
Therefore, the key is to let $v_t $ be a quantity that reflects the scale of the gradients, while at the same time, be decorrelated with current gradient $g_t$. 
Formally, we have the following theorem:  

\vspace{2pt}
\begin{theorem}[Decorrelation leads to convergence]\label{decorrelate vt make adam converge}
For any fixed online optimization problem with infinitely repeating of a finite set of cost functions $\{f_1(\theta),\dots, f_t(\theta), \dots f_n(\theta)\}$, assuming $\beta_1=0$ and $\alpha_t$ is fixed, we have, if $v_t$ follows a fixed distribution and is independent of the current gradient $g_t$, then the expected net update factor for each gradient is identical. 
\end{theorem}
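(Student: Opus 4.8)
The plan is to exploit the fact that the hypothesis $\beta_1=0$ collapses the net update factor to a single term, after which the independence assumption does essentially all of the remaining work. First I would substitute $\beta_1=0$ into the definition of $k(g_t)$ in \Eqref{net_update_factor}. Since the weight $(1-\beta_1)\beta_1^{i-t}$ equals $1$ when $i=t$ and vanishes for every $i>t$, the infinite sum retains only its leading term, so $k(g_t)=\alpha_t/\sqrt{v_t}$, consistent with \Eqref{net_beta1_0}. Invoking the hypothesis that $\alpha_t$ is fixed, I would write $\alpha_t=\alpha$, giving the clean expression $k(g_t)=\alpha/\sqrt{v_t}$. The key point is that, unlike the general $\beta_1>0$ case discussed in Section~\ref{sec_insight_counter} where $k(g_t)$ depends on the whole tail $\{v_i\}_{i\geq t}$, here it depends on $g_t$ only through $v_t$.

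Next I would form the expected net update factor conditioned on a particular realized gradient value $g$, matching the conditional quantities used in Lemma~\ref{expectation_net_update}. Because $v_t$ is assumed independent of the current gradient $g_t$, conditioning on $g_t=g$ leaves the law of $v_t$ unchanged, so $\E[k(g_t)\mid g_t=g]=\alpha\,\E[1/\sqrt{v_t}\mid g_t=g]=\alpha\,\E[1/\sqrt{v_t}]$. The assumption that $v_t$ follows a fixed distribution then makes the right-hand side a single constant in which $g$ does not appear. Hence $\E[k(g_t)\mid g_t=g]$ takes the same value for every gradient, which is exactly the asserted identity. As a corollary I would note that the bias exhibited in Lemmas~\ref{limit_lemma3} and \ref{expectation_net_update}, namely $k(C)<k(-1)$, disappears the moment the correlation between $v_t$ and $g_t$ is removed, so the expected accumulated update $\E[net(g_t)\mid g_t=g]=\alpha\,\E[1/\sqrt{v_t}]\,g$ points along the correct direction just as in SGD.

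The computation is short, so the substance of the argument lies entirely in correctly reading and applying the two hypotheses rather than in any estimation. The point I expect to require the most care is the interpretation of ``$v_t$ follows a fixed distribution and is independent of $g_t$'': in the infinitely repeating problem $v_t$ need not have a literally constant law at every finite $t$, so I would read the hypothesis as a statement about the limiting (stationary) distribution of $v_t$ as $t\to\infty$, in keeping with the $n\to\infty$ limits taken throughout Section~\ref{sec_insight_counter}. The only genuine technical caveat is ensuring $\E[1/\sqrt{v_t}]$ is finite and well defined; this holds because $v_t$ is strictly positive once any nonzero gradient has appeared, so $1/\sqrt{v_t}$ is integrable under the fixed distribution. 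With that understood, the reductions above yield the conclusion directly and no further work is needed.
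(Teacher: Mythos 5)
Your proposal is correct and follows essentially the same route as the paper: the paper's own (inline) proof likewise takes the expectation of $k(g_t)$ with $v_i$ drawn from the fixed distribution $P_v$ and concludes from independence that $\mathbb{E}[k(g_t)]$ does not depend on $g_t$. You merely make two steps explicit that the paper leaves implicit --- the collapse of the sum to the single term $\alpha/\sqrt{v_t}$ under $\beta_1=0$, and the conditioning argument $\mathbb{E}[k(g_t)\mid g_t=g]=\alpha\,\mathbb{E}[1/\sqrt{v_t}]$ together with the integrability caveat --- which is a faithful, slightly more careful rendering of the same argument.
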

Let $P_{v}$ denote the distribution of $v_t$. In the infinitely repeating online optimization scheme, the expectation of net update factor for each gradient $g_t$ is
\begin{equation}
\mathbb E [k(g_t)]=\sum_{i=t}^{\infty}\mathbb E_{v_i\sim P_{v}}[\frac{~\alpha_i}{\sqrt{v_i}}(1-\beta_1)\beta_1^{i-t}].
\end{equation}
Given $P_{v}$ is independent of $g_t$, the expectation of the net update factor $\mathbb E [k(g_t)]$ is independent of $g_t$ and remains the same for different gradients. With the expected net update factor being a fixed constant, the convergence of the adaptive learning rate method reduces to vanilla SGD.

Momentum \citep{qian1999momentum} can be viewed as setting $v_t$ as a constant, which makes $v_t$ and $g_t$ independent. Furthermore, in our view, using an increasing $\beta_2$ (AdamNC) or keeping $\hat{v_t}$ as the largest $v_t$ (AMSGrad) is also to make $v_t$ almost fixed. However, fixing $v_t$ is not a desirable solution, because it damages the adaptability of Adam with respect to the adapting of step size.

We next introduce the proposed solution to make $v_t$ independent of $g_t$, which is based on temporal independent assumption among gradients. We first introduce the idea of temporal decorrelation, then extend our solution to make use of the spatial information of gradients. Finally, we incorporate first moment estimation. The pseudo code of the proposed algorithm is presented as follows. 


\begin{algorithm}[h]
	\caption{AdaShift: Temporal Shifting with Block-wise Spatial Operation}
	\label{AdaShift_main_text}
	\begin{algorithmic}[1]
		\REQUIRE $n$, $\beta_1$, $\beta_2$, $\phi$, $\theta_0$, $\{f_t(\theta)\}^T_{t=1}$, $\{\alpha_t\}^T_{t=1}$, $\{g_{-t}\}^{n-1}_{t=0}$, 
		\STATE set $ v_0=0 $
		\FOR{$t=1$ \textbf{to} $T$}
		\STATE $g_t = \nabla f_t(\theta_t)$
		\STATE $m_t = {\sum_{i=0}^{n-1} \beta_1^i g_{t-i}}/{\sum_{i=0}^{n-1} \beta_1^i}$
		\FOR{$ i=1 $ \textbf{to} $ M $ }
    		\STATE $v_{t}[i] = \beta_2v_{t-1}[i] + (1-\beta_2)\phi(g^2_{t-n}[i])$
    		\STATE $\theta_{t}[i]=\theta_{t-1}[i]- \alpha_t/\sqrt{v_{t}[i]} \cdot m_{t}[i]$
		\ENDFOR	
	    \ENDFOR
	    \\
	    \STATE // We ignore the bias-correction, epsilon and other misc for the sake of clarity
	\end{algorithmic}
\end{algorithm}

\subsection{Temporal decorrelation} \label{temporal}

In practical setting, $f_t(\theta)$ usually involves different mini-batches $x_t$, i.e., $f_t(\theta) = f(\theta; x_t)$. Given the randomness of mini-batch, we assume that the mini-batch $x_t$ is independent of each other and further assume that $f(\theta; x)$ keeps unchanged over time, then the gradient $g_t = \nabla f(\theta; x_t)$ of each mini-batch is independent of each other. 

Therefore, we could change the update rule for $v_t$ to involve $g_{t-n}$ instead of $g_t$, which makes $v_t$ and $g_t$ temporally shifted and hence decorrelated:
\vspace{-1pt}
\begin{align}
    \label{eq_temprotal_only}
    v_{t} = \beta_2v_{t-1} + (1-\beta_2)g_{t-n}^2.
\end{align} 
Note that in the sequential online optimization problem, the assumption ``$g_t$ is independent of each other'' does not hold. However, in the stochastic online optimization problem and practical neural network settings, our assumption generally holds. 

\subsection{Making use of the spatial elements of previous timesteps} \label{Spatial}

Most optimization schemes involve a great many parameters. The dimension of $\theta$ is high, thus $g_t$ and $v_t$ are also of high dimension. However, $v_t$ is element-wisely computed in \Eqref{eq_temprotal_only}. Specifically, we only use the $i$-th dimension of $g_{t-n}$ to calculate the $i$-th dimension of $v_t$. In other words, it only makes use of the independence between $g_{t-n}[i]$ and $g_t[i]$, where $g_t[i]$ denotes the $i$-th element of $g_t$. 
Actually, in the case of high-dimensional $g_t$ and $v_t$, we can further assume that \textbf{all elements of gradient $g_{t-n}$ at previous timesteps} are independent with the $i$-th dimension of $g_t$. 
Therefore, all elements in $g_{t-n}$ can be used to compute $v_t$ without introducing correlation. To this end, we propose introducing a function $\phi$ over all elements of $g^2_{t-n}$, i.e.,
\begin{align}\label{eq_spatial}
    v_t = \beta_2v_{t-1} + (1-\beta_2)\phi(g^2_{t-n}).
\end{align}
For easy reference, we name the elements of $g_{t-n}$ other than $g_{t-n}[i]$ as the spatial elements of $g_{t-n}$ and name $\phi$ the spatial function or spatial operation. 
There is no restriction on the choice of $\phi$, and we use $\phi(x)=\max_i x[i]$ for most of our experiments, which is shown to be a good choice. The $\max_i x[i]$ operation has a side effect that turns the adaptive learning rate $v_t$ into a shared scalar. 

An important thing here is that, 
we no longer interpret $v_t$ as the second moment of $g_t$. It is merely a random variable that is independent of $g_t$, while at the same time, reflects the overall gradient scale. 
We leave further investigations on $\phi$ as future work. 


\subsection{Block-wise adaptive learning rate SGD} \label{blockwise}

In practical setting, e.g., deep neural network, $\theta$ usually consists of many parameter blocks, e.g., the weight and bias for each layer. In deep neural network, the gradient scales (i.e., the variance) for different layers tend to be different \citep{glorot2010understanding,he2015delving}. Different gradient scales make it hard to find a learning rate that is suitable for all layers, when using SGD and Momentum methods. 
In traditional adaptive learning rate methods, they apply element-wise rescaling for each gradient dimension, which achieves rescaling-invariance and somehow solves the above problem. However, Adam sometimes does not generalize better than SGD~\citep{wilson2017marginal,keskar2017improving}, which might relate to the excessive learning rate adaptation in Adam. 

In our temporal decorrelation with spatial operation scheme, we can solve the ``different gradient scales'' issue more naturally, by applying $\phi$ block-wisely and outputs a shared adaptive learning rate scalar $v_t[i]$ for each block:
\vspace{-3pt}
\begin{align} \label{eq_block_wise}
    v_{t}[i] = \beta_2v_{t-1}[i] + (1-\beta_2)\phi(g^2_{t-n}[i]).
    \vspace{4pt}
\end{align}
It makes the algorithm work like an adaptive learning rate SGD, where each block has an adaptive learning rate $\alpha_t/\sqrt{v_t[i]}$ while the relative gradient scale among in-block elements keep unchanged. 
%
%
As illustrated in Algorithm \ref{AdaShift_main_text}, the parameters $\theta_t$ including the related $g_t$ and $v_t$ are divided into $M$ blocks. Every block contains the parameters of the same type or same layer in neural network.


\subsection{Incorporating first moment estimation: moving averaging windows} \label{moving}

First moment estimation, i.e., defining $m_t$ as a  moving average of $g_t$, is an important technique of modern first order optimization algorithms, which alleviates mini-batch oscillations. In this section, we extend our algorithm to incorporate first moment estimation. 

We have argued that $v_t$ needs to be decorrelated with $g_t$. Analogously, when introducing the first moment estimation, we need to make $v_t$ and $m_t$  independent to make the expected net update factor unbiased. Based on our assumption of temporal independence, we further keep out the latest $n$ gradients $\{g_{t-i}\}^{n-1}_{i=0}$, and update $v_t$ and $m_t$ via
\vspace{-3pt}
\begin{equation}\label{eq_moving}
v_t = \beta_2v_{t-1} + (1-\beta_2)\phi(g^2_{t-n}) ~~\mbox{and}~~m_t = \frac{ \sum_{i=0}^{n-1} \beta_1^i g_{t-i}}{\sum_{i=0}^{n-1} \beta_1^i}.
\end{equation}
In \Eqref{eq_moving}, $\beta_1\in[0,1]$ plays the role of decay rate for temporal elements. It can be viewed as a truncated version of exponential moving average that only applied to the latest few elements. Since we use truncating, it is feasible to use large $\beta_1$ without taking the risk of using too old gradients. In the extreme case where $\beta_1=1$, it becomes vanilla averaging.

The pseudo code of the algorithm that unifies all proposed techniques is presented in Algorithm \ref{AdaShift_main_text} and a more detailed version can be found in the Appendix. It has the following parameters: spatial operation $\phi$, $n\in \mathbb N^+$, $\beta_1\in[0,1]$, $\beta_2\in[0,1)$ and $\alpha_t$. 

\vspace{-2pt}
\paragraph{Summary} The key difference between Adam and the proposed method is that the latter temporally shifts the gradient $g_t$ for $n$-step, i.e., using $g_{t-n}$ for calculating $v_t$ and using the kept-out $n$ gradients to evaluate $m_t$ (\Eqref{eq_moving}), which makes $v_t$ and $m_t$ decorrelated and consequently solves the non-convergence issue. In addition, based on our new perspective on adaptive learning rate methods, $v_t$ 
is not necessarily the second moment and it is valid to further involve the calculation of $v_t$ with the spatial elements of previous gradients. We thus proposed to introduce the spatial operation $\phi$ that outputs a shared scalar for each block. The resulting algorithm turns out to be closely related to SGD, where each block has an overall adaptive learning rate and the relative gradient scale in each block is maintained. 
We name the proposed method that makes use of temporal-shifting to decorrelated $v_t$ and $m_t$ \textbf{AdaShift}, which means ``ADAptive learning rate method with temporal SHIFTing''.

\section{Experiments} \label{exp_new_algorithm}

In this section, we empirically study the proposed method and compare them with Adam, AMSGrad and SGD, on various tasks in terms of training performance and generalization. Without additional declaration, the reported result for each algorithm is the best we have found via parameter grid search. The anonymous code is provided at \url{http://bit.ly/2NDXX6x}. 

\subsection{Online Optimization Counterexamples}

Firstly, we verify our analysis on the stochastic online optimization problem in \Eqref{stochastic_counter}, where we set $C=101$ and $\delta=0.02$. We compare Adam, AMSGrad and AdaShift in this experiment. For fair comparison, we set $\alpha=0.001$, $\beta_1=0$ and $\beta_2=0.999$ for all these methods. The results are shown in \Figref{exp_countereg_comp}. We can see that Adam tends to increase $\theta$, that is, the accumulate update of $\theta$ in Adam is along the wrong direction, while AMSGrad and AdaShift update $\theta$ in the correct direction. Furthermore, given the same learning rate, AdaShift decreases $\theta$ faster than AMSGrad, which validates our argument that AMSGrad has a relatively higher $v_t$ that slows down the training. 

In this experiment, we also verify Theorem~\ref{beta_1_theory}. As shown in \Figref{exp_countereg_beta_test}, Adam is also able to converge to the correct direction with a sufficiently large $\beta_1$ and $\beta_2$. Note that (1) AdaShift still converges with the fastest speed; (2) a small $\beta_1$ (e.g., $\beta_1 = 0.9$, the light-blue line in \Figref{exp_countereg_beta_test}) does not make Adam converge to the correct direction.
We do not conduct the experiments on the sequential online optimization problem in \Eqref{sequen_counter}, because it does not fit our temporal independence assumption. To make it converge, one can use a large $\beta_1$ or $\beta_2$, or set $v_t$ as a constant.

\begin{figure}[h!]
    \vspace{-8pt}
	\centering 
	\begin{subfigure}{0.4\linewidth}
    	\hspace{-8pt}
    	\includegraphics[width=0.99\textwidth]{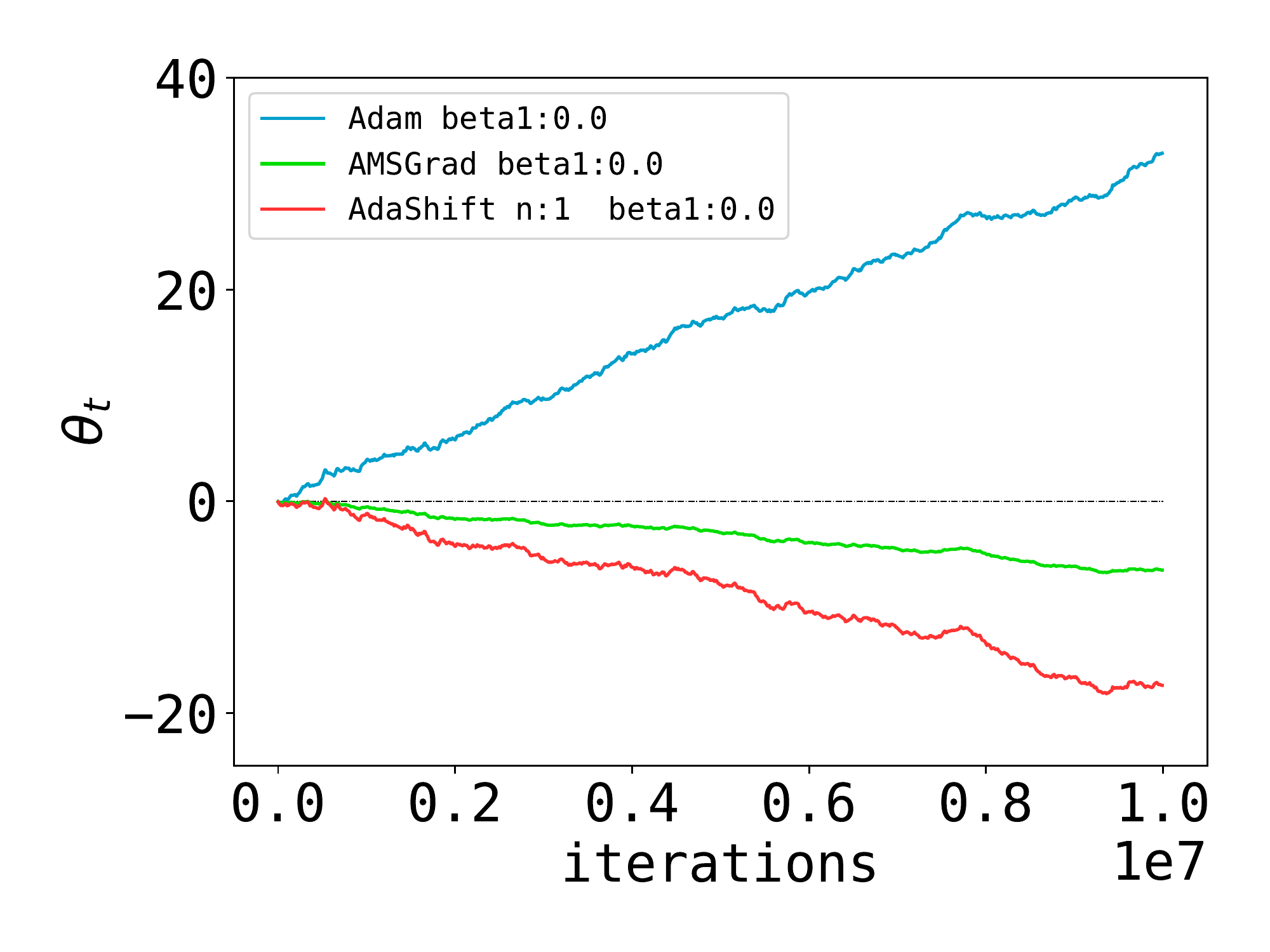}
    	\vspace{-0.3cm}
        \caption{Adam, AMSGrad and AdaShift.}
        \label{exp_countereg_comp}
    \end{subfigure}
    \hspace{10pt}
    \begin{subfigure}{0.4\linewidth}
        \hspace{-12pt}
    	\includegraphics[width=0.99\textwidth]{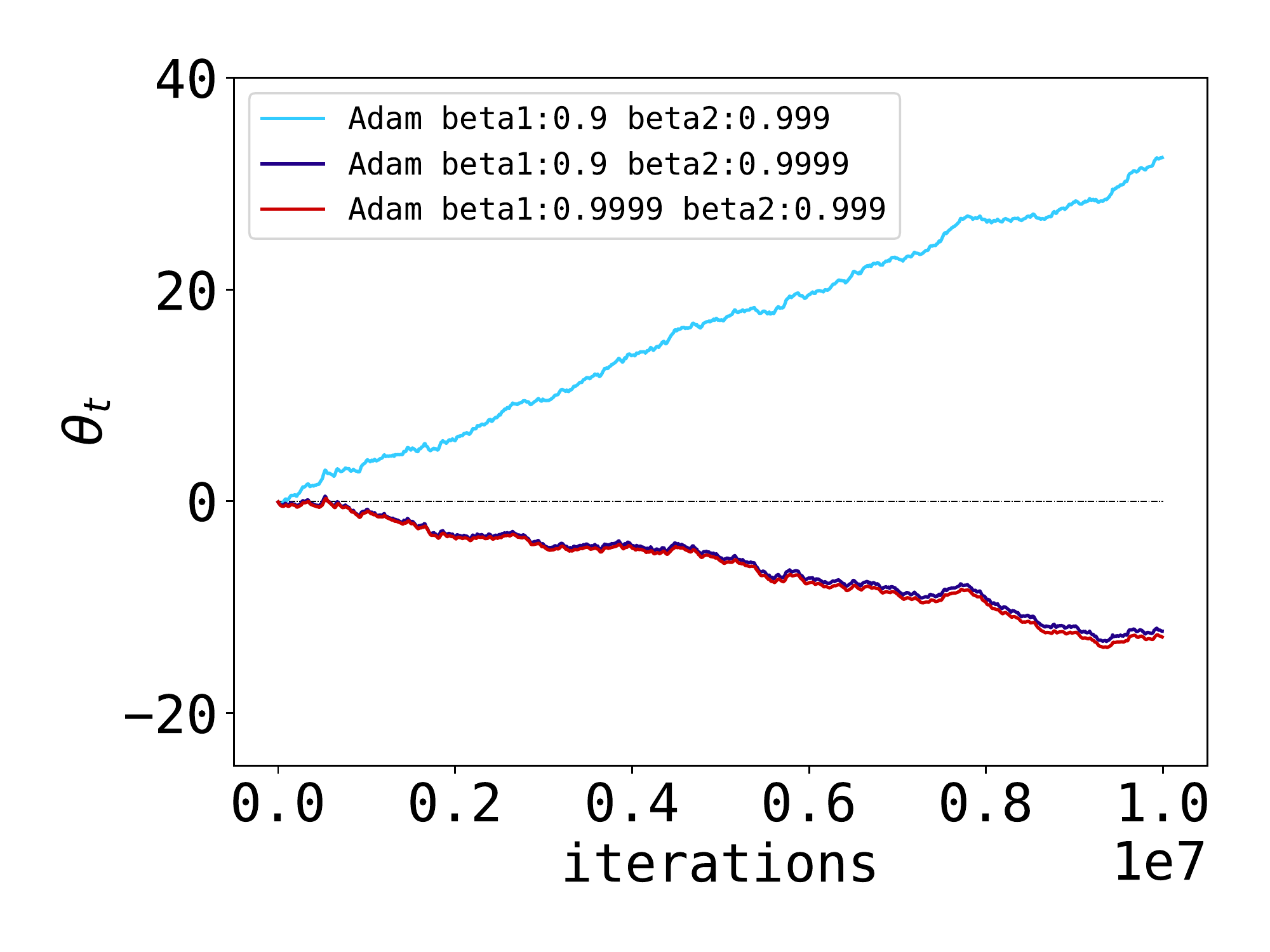}
    	\vspace{-0.3cm}
    	\caption{Adam with large $\beta_1$ and $\beta_2$.}
        \label{exp_countereg_beta_test}
    \end{subfigure}	
	\vspace{-0.23cm}
	\caption{Experiments on stochastic counterexample.} 
	\label{exp_countereg}
	\vspace{-10pt}
\end{figure}

\subsection{Logistic Regression and Multilayer Perceptron on MNIST}

We further compare the proposed method with Adam, AMSGrad and SGD by using Logistic Regression and Multilayer Perceptron on MNIST, where the Multilayer Perceptron has two hidden layers and each has $256$ hidden units with no internal activation. The results are shown in Figure~\ref{exp_fig_LR} and Figure~\ref{exp_fig_MultiLayer}, respectively. We find that in Logistic Regression, these learning algorithms achieve very similar final results in terms of both training speed and generalization. 
In Multilayer Perceptron, we compare Adam, AMSGrad and AdaShift with reduce-max spatial operation (max-AdaShift) and without spatial operation (non-AdaShift). We observe that max-AdaShift achieves the lowest training loss, while non-AdaShift has mild training loss oscillation and at the same time achieves better generalization. The worse generalization of max-AdaShift may be due to overfitting in this task, and the better generalization of non-AdaShift may stem from the regularization effect of its relatively unstable step size. 

\begin{figure}[h!]
\vspace{-0.42cm}
    \centering
	\begin{subfigure}{0.4\linewidth}
    	\centering
    	\hspace{-17pt}
        \includegraphics[width=0.99\textwidth]{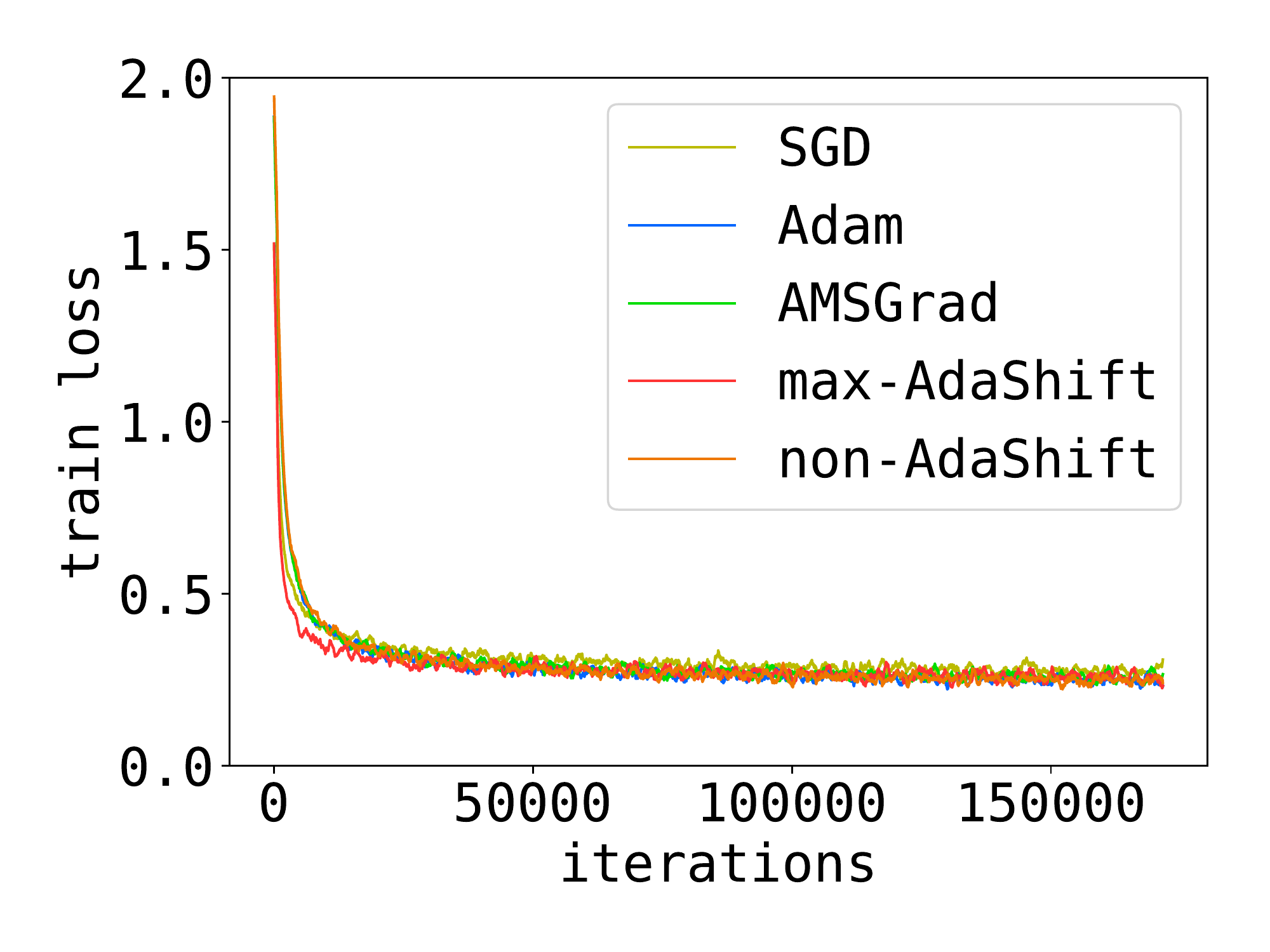}
    \end{subfigure}	
    \hspace{10pt}
	\begin{subfigure}{0.4\linewidth}
    	\centering
        \includegraphics[width=0.99\textwidth]{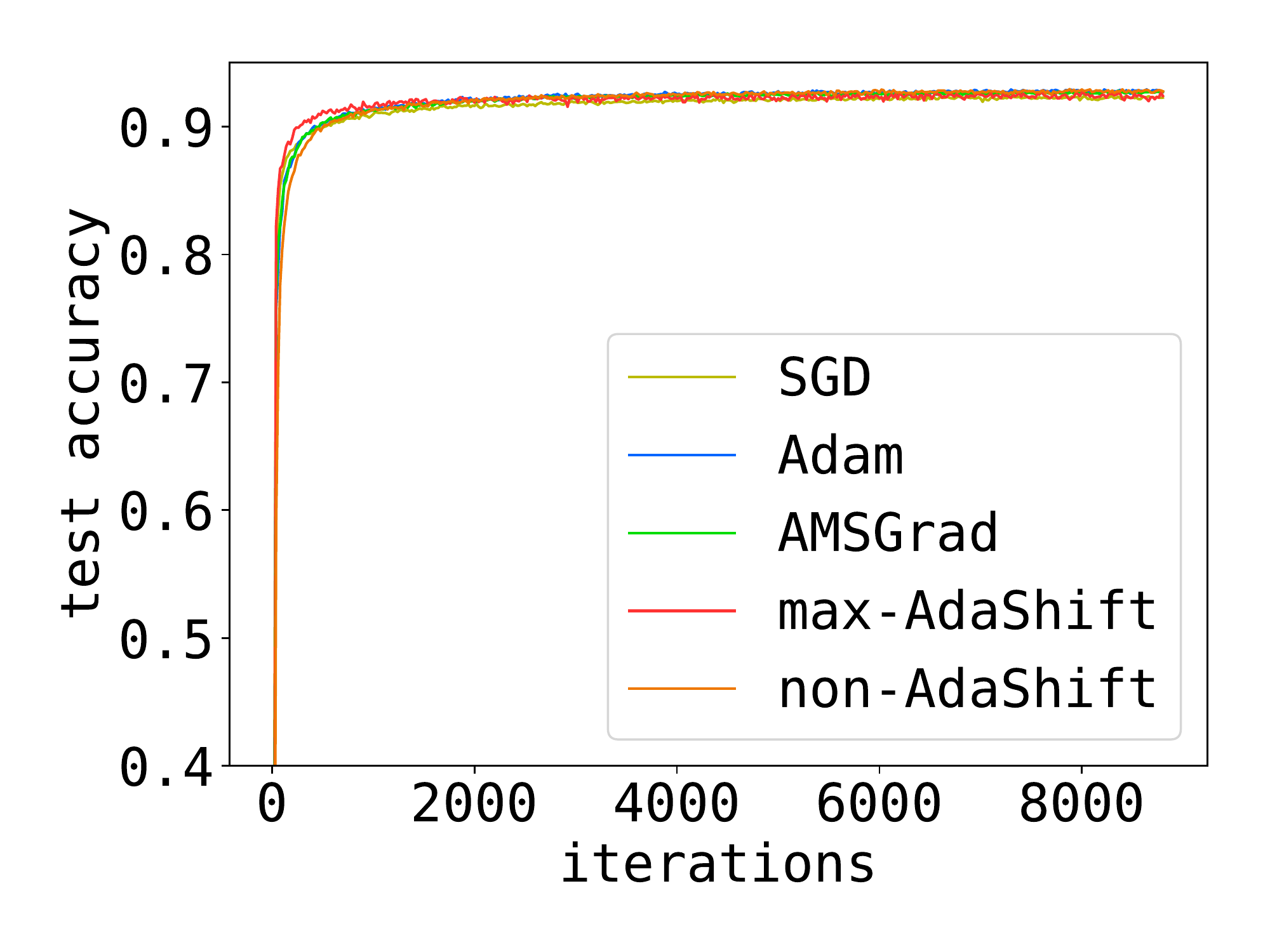}
    \end{subfigure}	
	\vspace{-0.50cm}
	\caption{Logistic Regression on MNIST.}
	\label{exp_fig_LR}
    \vspace{-0.4cm}
\end{figure}

\begin{figure}[h!]
\vspace{-0.00cm}
    \centering
	\begin{subfigure}{0.4\linewidth}
    	\centering
    	\hspace{-9pt}
        \includegraphics[width=0.99\textwidth]{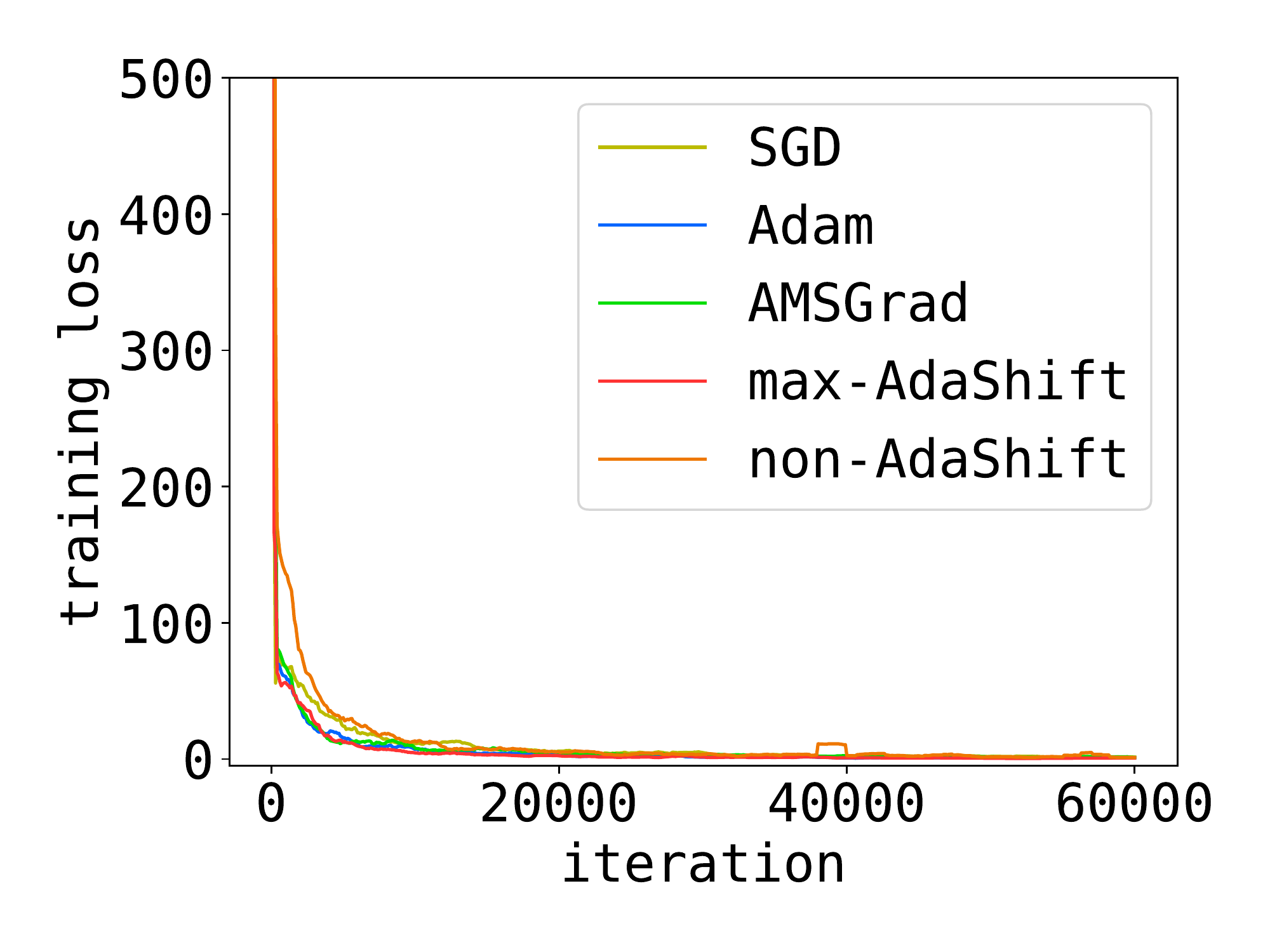}
    \end{subfigure}	
    \hspace{10pt}
	\begin{subfigure}{0.4\linewidth}
    	\centering
        \includegraphics[width=0.99\textwidth]{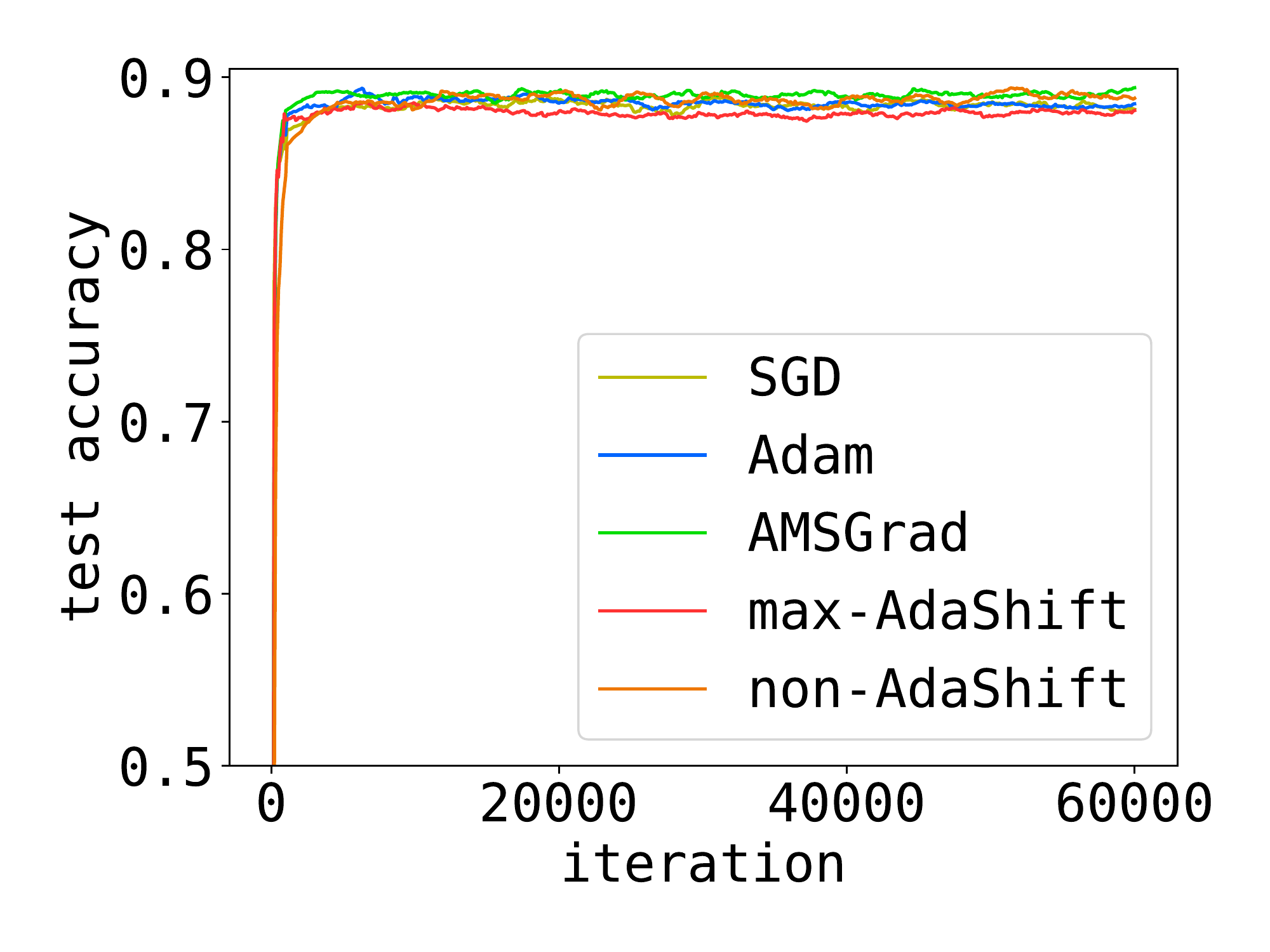}
    \end{subfigure}	
	\vspace{-0.50cm}
	\caption{Multilayer Perceptron on MNIST.}
	\label{exp_fig_MultiLayer}
    \vspace{-0.4cm}
\end{figure}


\subsection{DenseNet and ResNet on CIFAR-10}

ResNet~\citep{he2016deep} and DenseNet~\citep{huang2017densely} are two typical modern neural networks, which are efficient and widely-used. We test our algorithm with ResNet and DenseNet on CIFAR-10 datasets. We use a $18$-layer ResNet and $100$-layer DenseNet in our experiments. We plot the best results of Adam, AMSGrad and AdaShift in Figure~\ref{exp_resnet_cifar10} and Figure~\ref{exp_densenet_cifar10} for ResNet and DenseNet, respectively. We can see that AMSGrad is relatively worse in terms of both training speed and generalization. Adam and AdaShift share competitive results, while AdaShift is generally slightly better, especially the test accuracy of ResNet and the training loss of DenseNet.

\begin{figure}[h!]
\centering
    \centering
	\begin{subfigure}{0.4\linewidth}
    	\centering
        \includegraphics[width=0.99\textwidth]{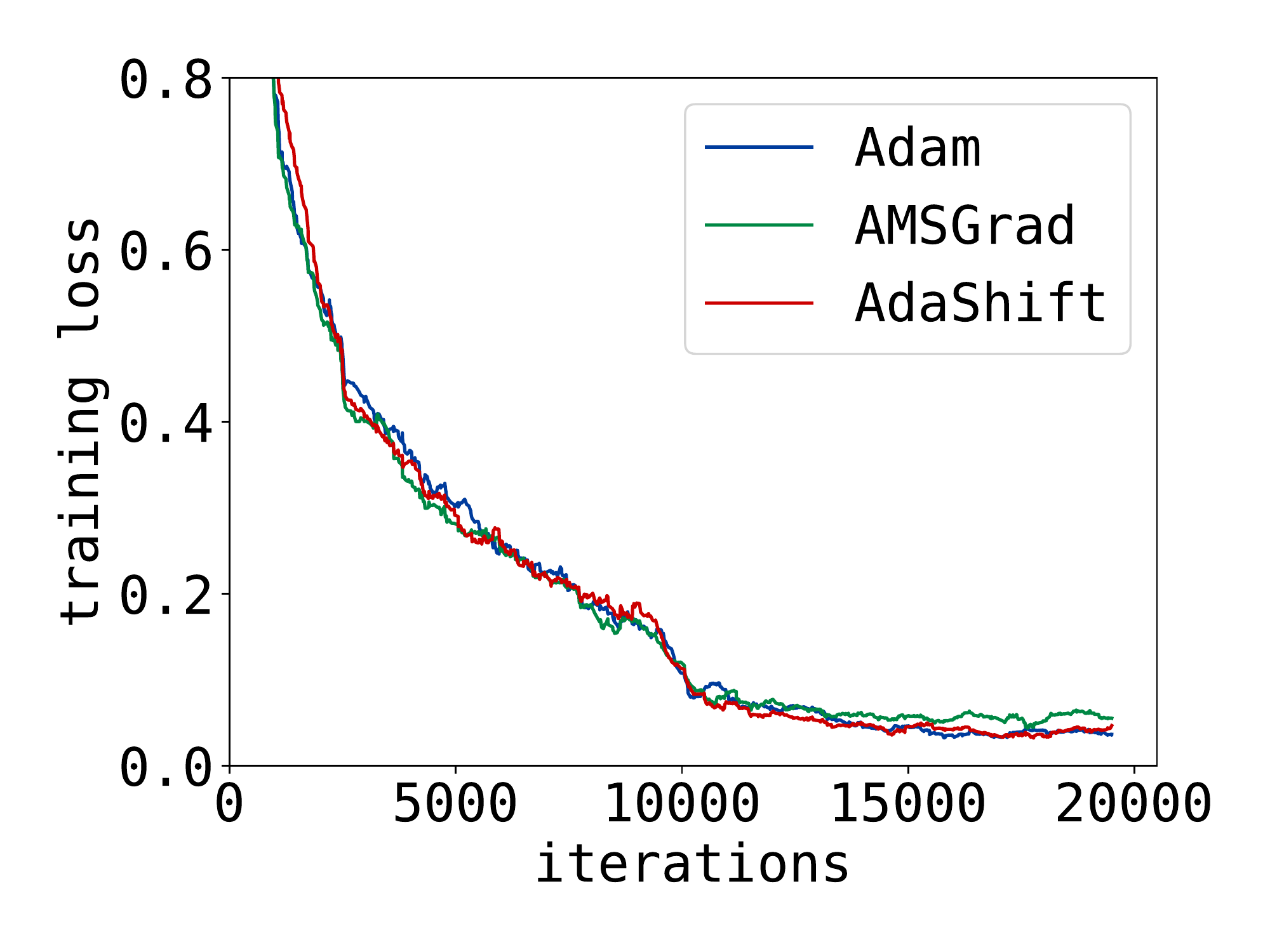}
    \end{subfigure}	
    \hspace{10pt}
	\begin{subfigure}{0.4\linewidth}
    	\centering
    	\hspace{-9pt}
        \includegraphics[width=0.99\textwidth]{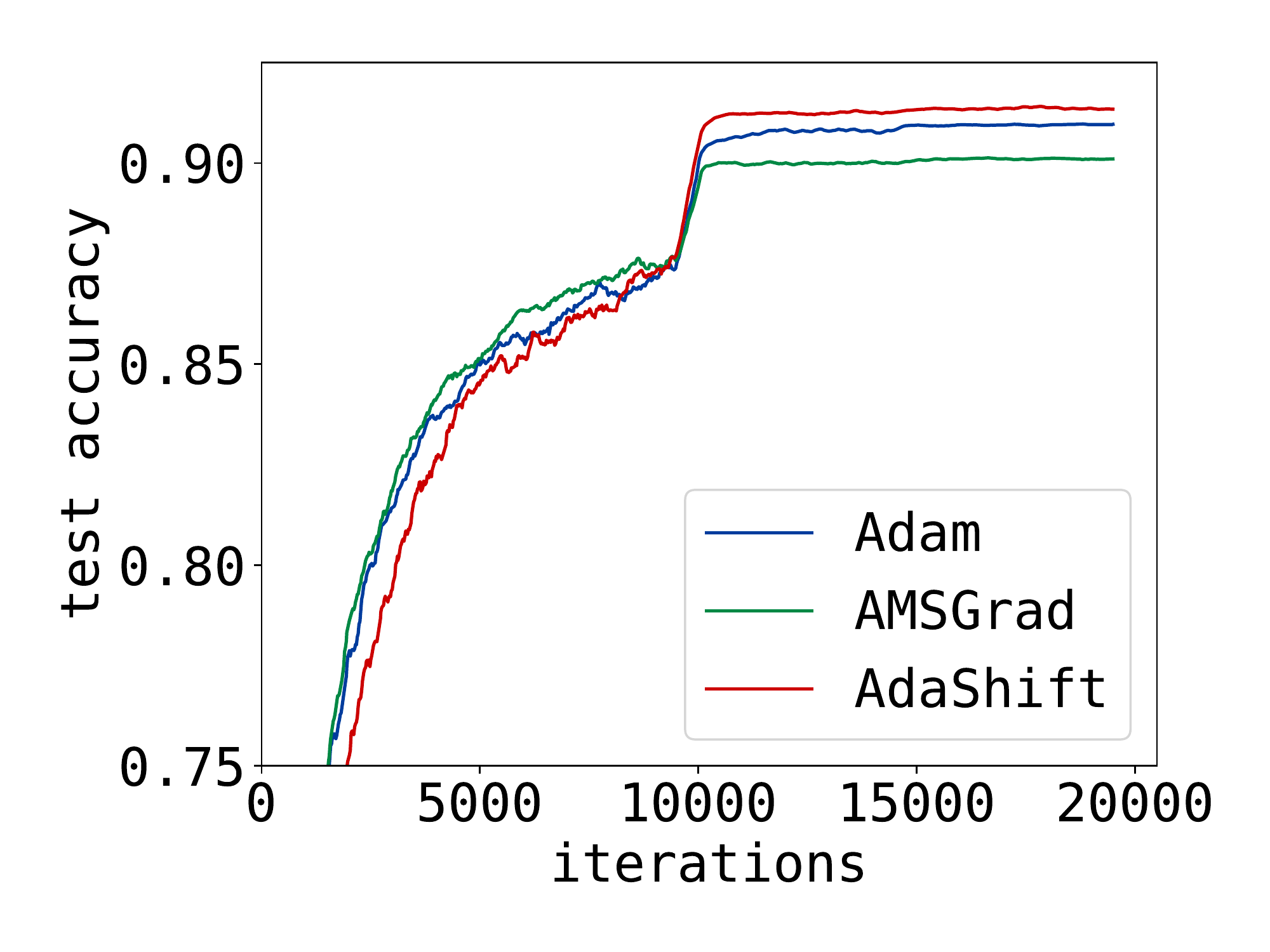}
    \end{subfigure}	
	\vspace{-0.50cm}
	\caption{ResNet on Cifar-10.}
	\label{exp_resnet_cifar10}
	\vspace{-0.4cm}
\end{figure}

\begin{figure}[h!]
    \centering
	\begin{subfigure}{0.4\linewidth}
    	\centering
        \includegraphics[width=0.99\textwidth]{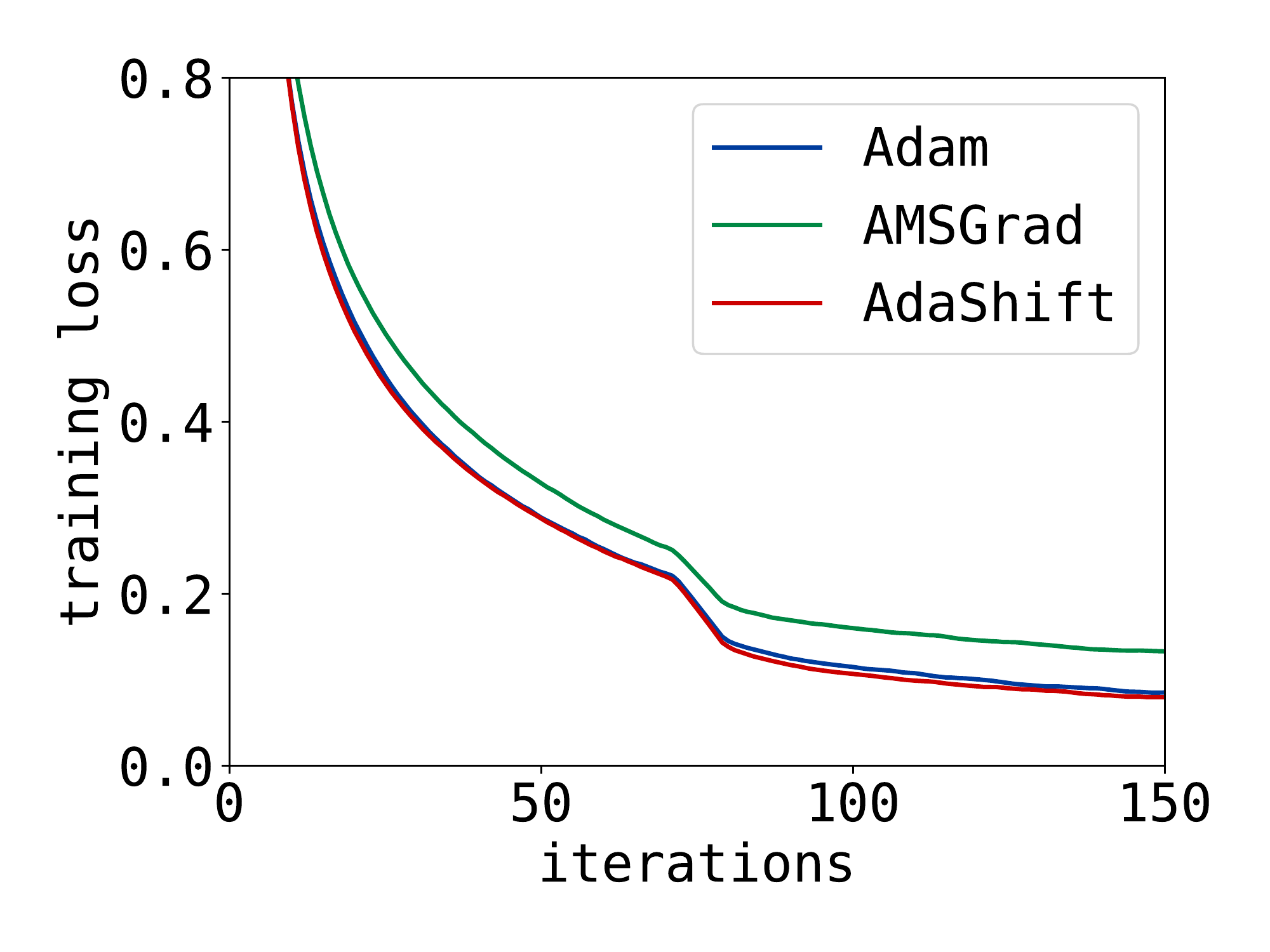}
    \end{subfigure}	
    \hspace{10pt}
	\begin{subfigure}{0.4\linewidth}
    	\centering
    	\hspace{-9pt}
    	\includegraphics[width=0.99\textwidth]{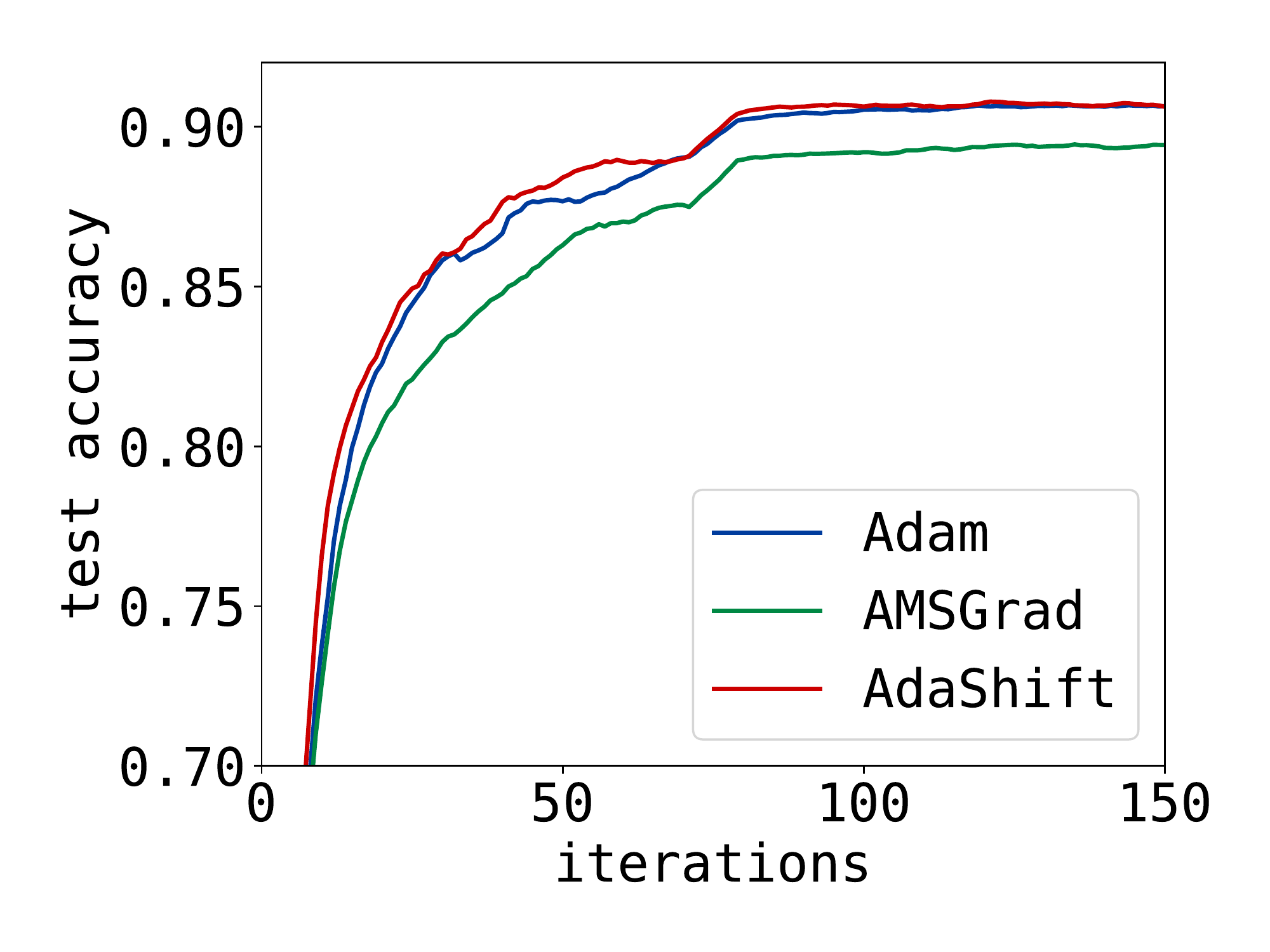}
    \end{subfigure}	
	\vspace{-0.50cm}
	\caption{DenseNet on Cifar-10.}
	\label{exp_densenet_cifar10}
    \vspace{-0.4cm}
\end{figure}

\subsection{DenseNet with Tiny-ImageNet}

We further increase the complexity of dataset, switching from CIFAR-10 to Tiny-ImageNet, and compare the performance of Adam, AMSGrad and AdaShift with DenseNet. The results are shown in Figure~\ref{exp_densenet_tiny}, from which we can see that the training curves of Adam and AdaShift are basically overlapped, but AdaShift achieves higher test accuracy than Adam. AMSGrad has relatively higher training loss, and its test accuracy is relatively lower at the initial stage.

\subsection{Generative model and Recurrent model}

We also test our algorithm on the training of generative model and recurrent model. We choose WGAN-GP \citep{gulrajani2017improved} that involves Lipschitz continuity condition (which is hard to optimize), and Neural Machine Translation (NMT) \citep{luong17} that involves typical recurrent unit LSTM, respectively. In Figure~\ref{fig_wGAN_D_train_loss}, we compare the performance of Adam, AMSGrad and AdaShift in the training of WGAN-GP discriminator, given a fixed generator. We notice that AdaShift is significantly better than Adam, while the performance of AMSGrad is relatively unsatisfactory. The test performance in terms of BLEU of NMT is shown in Figure~\ref{fig_lstm}, where AdaShift achieves a higher BLEU than Adam and AMSGrad. 

\begin{figure}[h!]
\centering
    \vspace{-0.15cm}
    \centering
	\begin{subfigure}{0.4\linewidth}
    	\centering
        \includegraphics[width=0.99\textwidth]{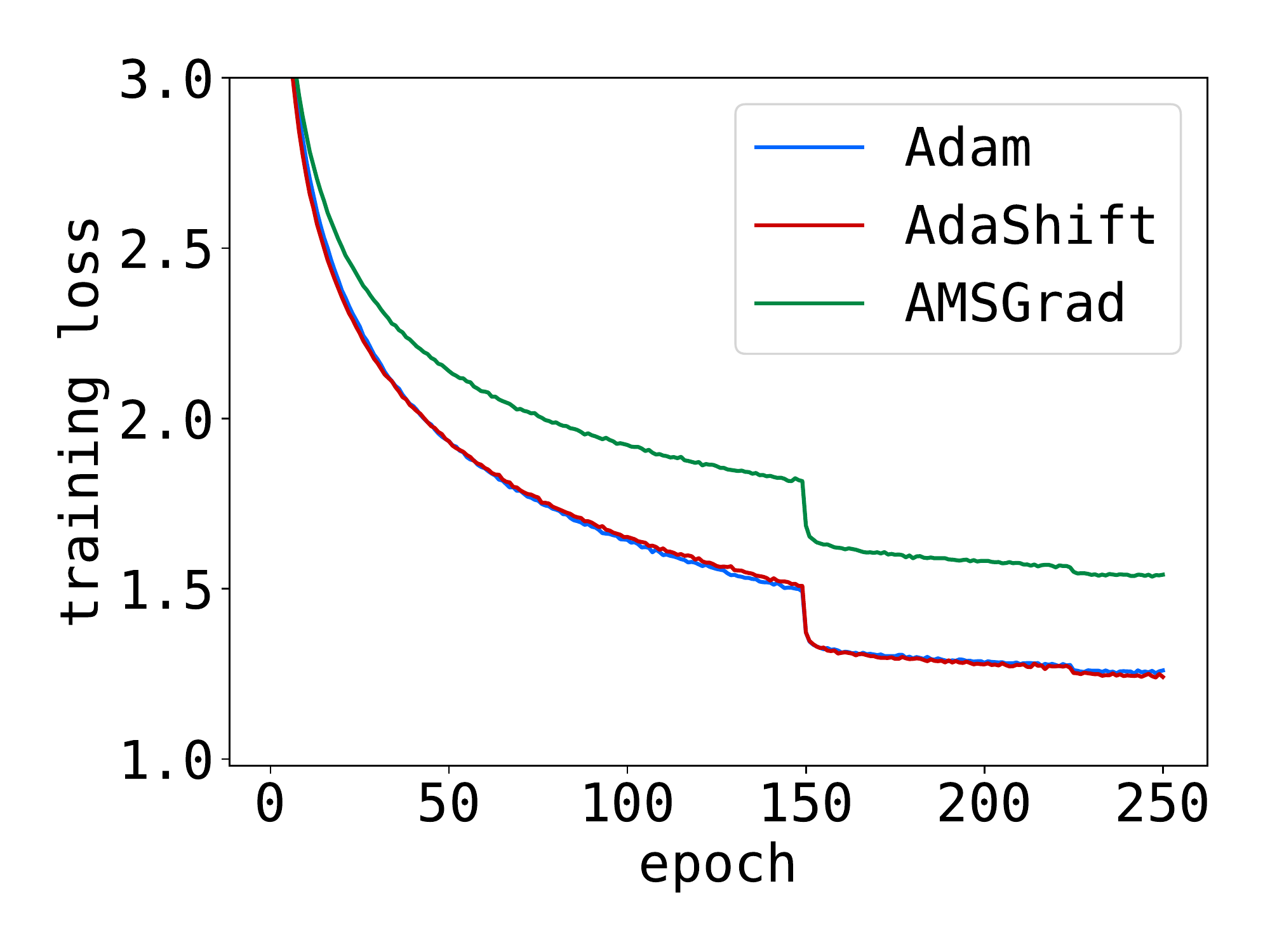}
    \end{subfigure}	
    \hspace{10pt}
	\begin{subfigure}{0.4\linewidth}
    	\centering
        \includegraphics[width=0.99\textwidth]{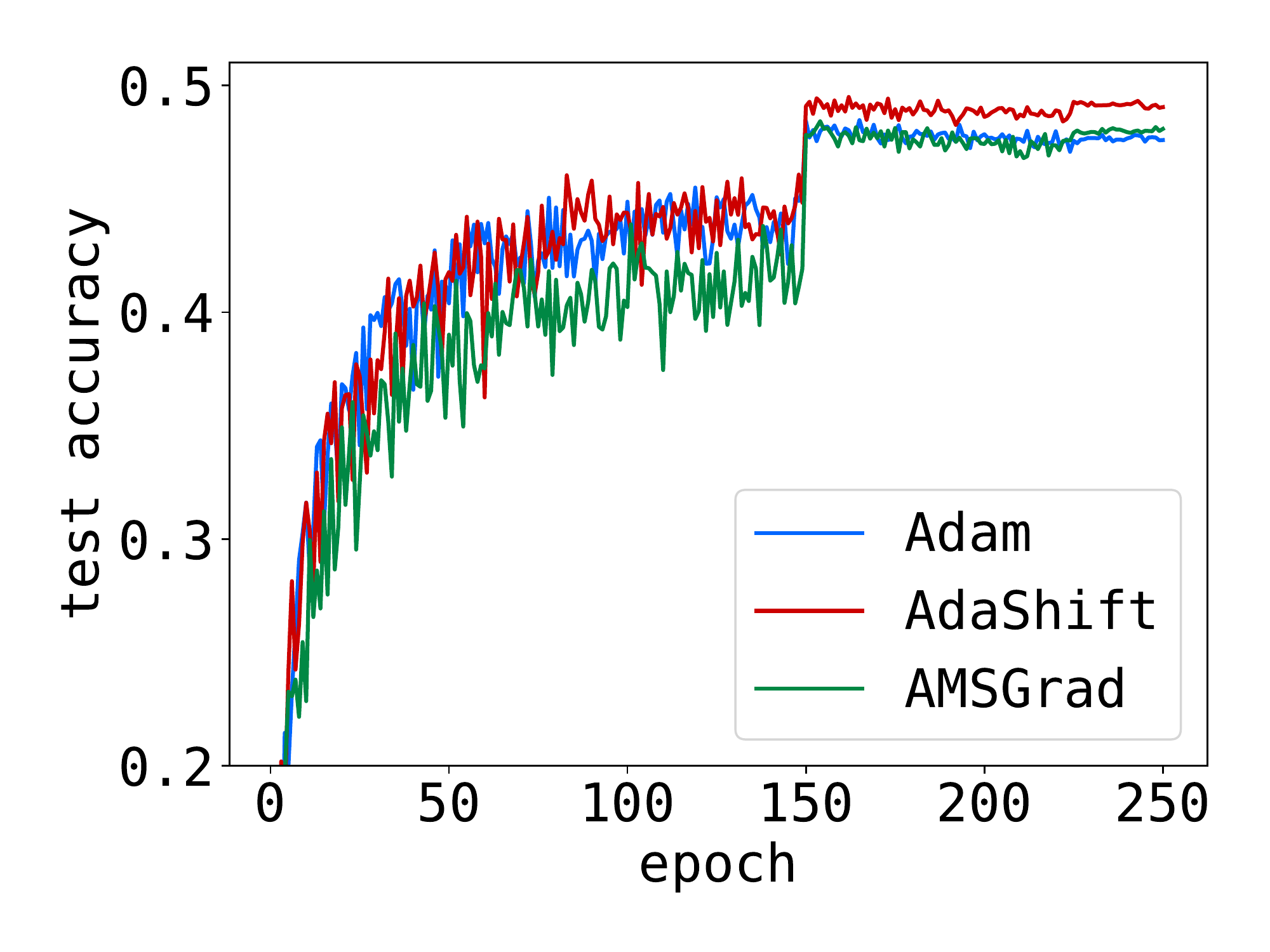}
    \end{subfigure}	    
	\vspace{-0.5cm}
	\caption{DenseNet on Tiny-ImageNet.}
	\label{exp_densenet_tiny}
	\vspace{-0.40cm}
\end{figure}

\begin{figure}[h]
    \vspace{-0.03cm}
    \centering
	\begin{subfigure}{0.4\linewidth}
    	\centering
        \includegraphics[width=0.99\textwidth]{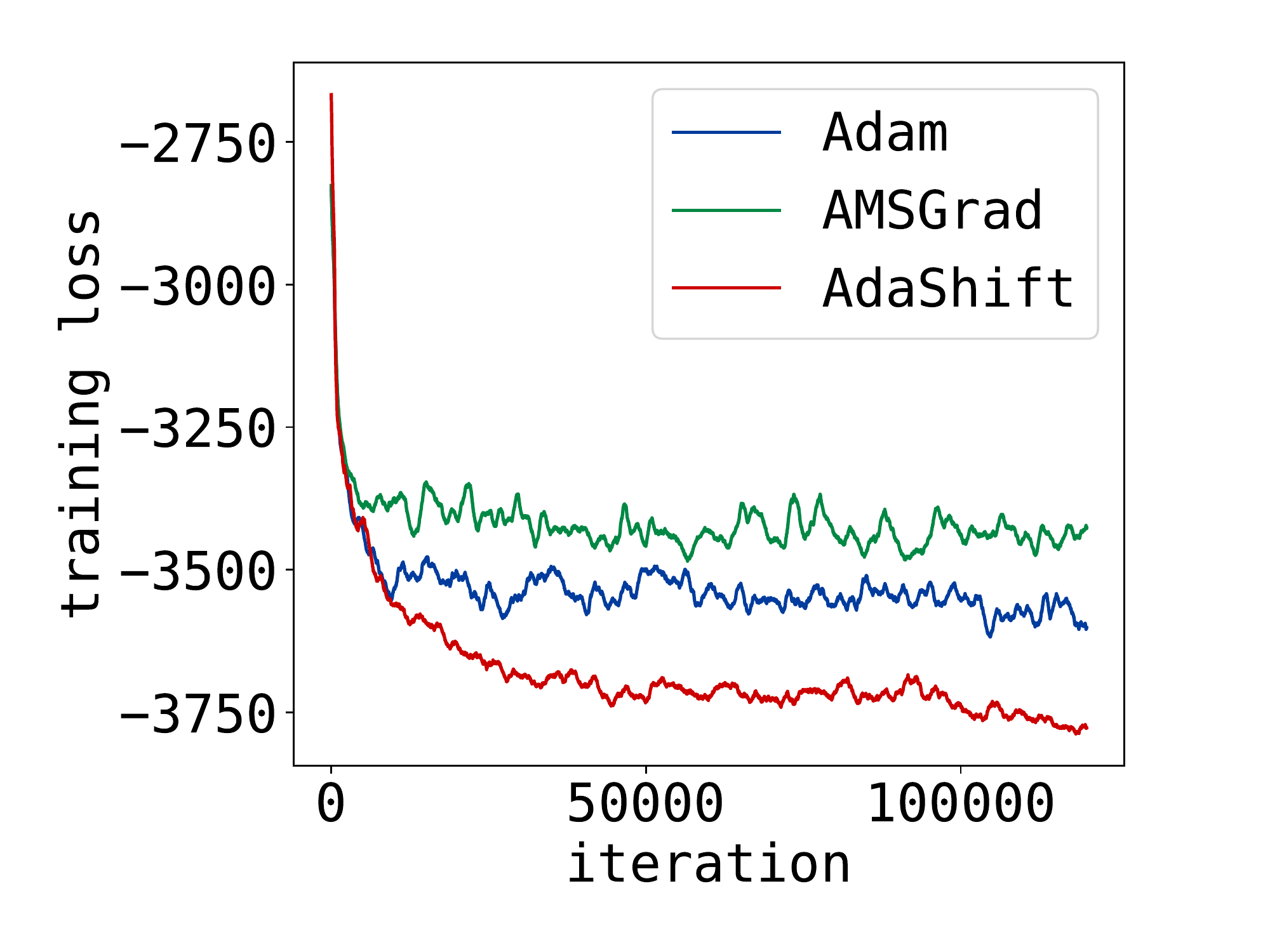}
        \vspace{-0.65cm}
        \caption{Training WGAN Discriminator.}
        \label{fig_wGAN_D_train_loss}
    \end{subfigure}	
    \hspace{10pt}
	\begin{subfigure}{0.4\linewidth}
    	\hspace{4pt}\includegraphics[width=0.99\textwidth]{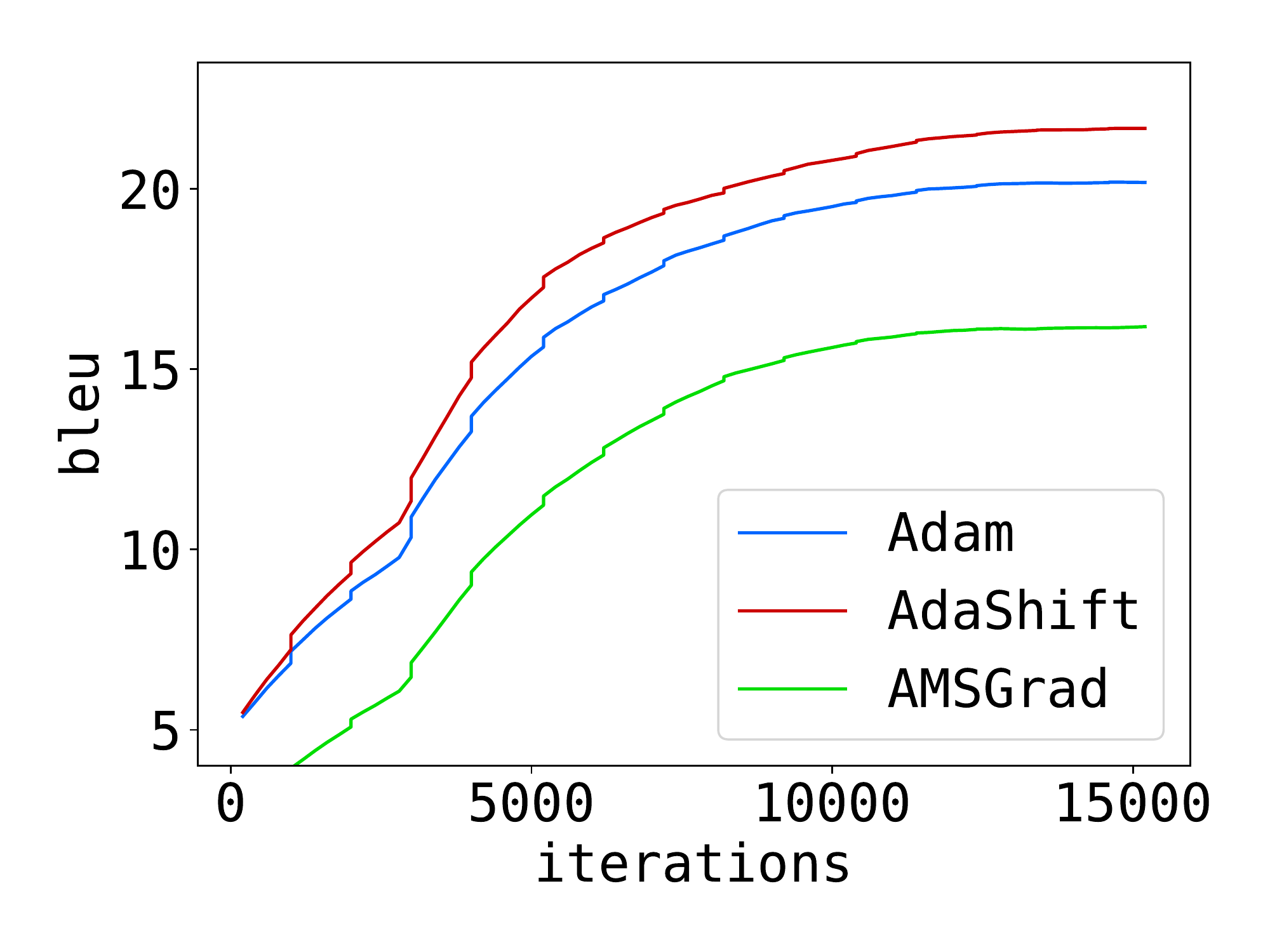}
        \vspace{-0.65cm}
        \caption{Neural Machine Translation BLEU.}
        \label{fig_lstm}
    \end{subfigure}	
	\vspace{-0.25cm}
	\caption{Generative and Recurrent model.}
	\label{exp_densenet_ge}
\vspace{-0.4cm}
\end{figure} 

\section{Conclusion}

In this paper, we study the non-convergence issue of adaptive learning rate methods from the perspective of the equivalent accumulated step size of each gradient, i.e., the net update factor defined in this paper. We show that there exists an inappropriate correlation between $v_t$ and $g_t$, which leads to biased net update factor for each gradient. We demonstrate that such biased step sizes are the fundamental cause of non-convergence of Adam, and we further prove that decorrelating $v_t$ and $g_t$ will lead to unbiased expected step size for each gradient, thus solving the non-convergence problem of Adam. Finally, we propose AdaShift, a novel adaptive learning rate method that decorrelates $v_t$ and $g_t$ via calculating $v_t$ using temporally shifted gradient $g_{t-n}$.

In addition, based on our new perspective on adaptive learning rate methods, $v_t$ is no longer necessarily the second moment of $g_t$, but a random variable that is independent of $g_t$ and reflects the overall gradient scale. Thus, it is valid to calculate $v_t$ with the spatial elements of previous gradients. We further found that when the spatial operation $\phi$ outputs a shared scalar for each block, the resulting algorithm turns out to be closely related to SGD, where each block has an overall adaptive learning rate and the relative gradient scale in each block is maintained. 
The experiment results demonstrate that AdaShift is able to solve the non-convergence issue of Adam. In the meantime, AdaShift achieves competitive and even better training and testing performance when compared with Adam.

\bibliography{iclr2019_conference}

\begin{thebibliography}{15}
\providecommand{\natexlab}[1]{#1}
\providecommand{\url}[1]{\texttt{#1}}
\expandafter\ifx\csname urlstyle\endcsname\relax
  \providecommand{\doi}[1]{doi: #1}\else
  \providecommand{\doi}{doi: \begingroup \urlstyle{rm}\Url}\fi

\bibitem[Dozat(2016)]{dozat2016incorporating}
Timothy Dozat.
\newblock Incorporating nesterov momentum into adam.
\newblock \emph{International Conference on Learning Representations, Workshop
  track}, 2016.

\bibitem[Glorot \& Bengio(2010)Glorot and Bengio]{glorot2010understanding}
Xavier Glorot and Yoshua Bengio.
\newblock Understanding the difficulty of training deep feedforward neural
  networks.
\newblock In \emph{Proceedings of the thirteenth international conference on
  artificial intelligence and statistics}, pp.\  249--256, 2010.

\bibitem[Gulrajani et~al.(2017)Gulrajani, Ahmed, Arjovsky, Dumoulin, and
  Courville]{gulrajani2017improved}
Ishaan Gulrajani, Faruk Ahmed, Martin Arjovsky, Vincent Dumoulin, and Aaron~C
  Courville.
\newblock Improved training of wasserstein gans.
\newblock In \emph{Advances in Neural Information Processing Systems}, pp.\
  5767--5777, 2017.

\bibitem[He et~al.(2015)He, Zhang, Ren, and Sun]{he2015delving}
Kaiming He, Xiangyu Zhang, Shaoqing Ren, and Jian Sun.
\newblock Delving deep into rectifiers: Surpassing human-level performance on
  imagenet classification.
\newblock In \emph{Proceedings of the IEEE international conference on computer
  vision}, pp.\  1026--1034, 2015.

\bibitem[He et~al.(2016)He, Zhang, Ren, and Sun]{he2016deep}
Kaiming He, Xiangyu Zhang, Shaoqing Ren, and Jian Sun.
\newblock Deep residual learning for image recognition.
\newblock In \emph{Proceedings of the IEEE conference on computer vision and
  pattern recognition}, pp.\  770--778, 2016.

\bibitem[Huang et~al.(2017)Huang, Liu, Van Der~Maaten, and
  Weinberger]{huang2017densely}
Gao Huang, Zhuang Liu, Laurens Van Der~Maaten, and Kilian~Q Weinberger.
\newblock Densely connected convolutional networks.
\newblock 2017.

\bibitem[Keskar \& Socher(2017)Keskar and Socher]{keskar2017improving}
Nitish~Shirish Keskar and Richard Socher.
\newblock Improving generalization performance by switching from adam to sgd.
\newblock \emph{arXiv preprint arXiv:1712.07628}, 2017.

\bibitem[Kingma \& Ba(2014)Kingma and Ba]{kingma2014adam}
Diederik~P Kingma and Jimmy Ba.
\newblock Adam: A method for stochastic optimization.
\newblock \emph{arXiv preprint arXiv:1412.6980}, 2014.

\bibitem[Luong et~al.(2017)Luong, Brevdo, and Zhao]{luong17}
Minh{-}Thang Luong, Eugene Brevdo, and Rui Zhao.
\newblock Neural machine translation (seq2seq) tutorial.
\newblock \emph{https://github.com/tensorflow/nmt}, 2017.

\bibitem[Qian(1999)]{qian1999momentum}
Ning Qian.
\newblock On the momentum term in gradient descent learning algorithms.
\newblock \emph{Neural networks}, 12\penalty0 (1):\penalty0 145--151, 1999.

\bibitem[Rahimi \& Recht()Rahimi and Recht]{test_of_time}
Ali Rahimi and Ben Recht.
\newblock “test of time” talk at nips 2017.
\newblock URL \url{http://www.argmin.net/2017/12/11/alchemy-addendum/}.

\bibitem[Reddi et~al.(2018)Reddi, Kale, and Kumar]{reddi2018convergence}
Sashank~J. Reddi, Satyen Kale, and Sanjiv Kumar.
\newblock On the convergence of adam and beyond.
\newblock In \emph{International Conference on Learning Representations}, 2018.
\newblock URL \url{https://openreview.net/forum?id=ryQu7f-RZ}.

\bibitem[Tieleman \& Hinton(2012)Tieleman and Hinton]{tieleman2012lecture}
Tijmen Tieleman and Geoffrey Hinton.
\newblock Lecture 6.5-rmsprop: Divide the gradient by a running average of its
  recent magnitude.
\newblock \emph{COURSERA: Neural networks for machine learning}, 4\penalty0
  (2):\penalty0 26--31, 2012.

\bibitem[Wilson et~al.(2017)Wilson, Roelofs, Stern, Srebro, and
  Recht]{wilson2017marginal}
Ashia~C Wilson, Rebecca Roelofs, Mitchell Stern, Nati Srebro, and Benjamin
  Recht.
\newblock The marginal value of adaptive gradient methods in machine learning.
\newblock In \emph{Advances in Neural Information Processing Systems}, pp.\
  4148--4158, 2017.

\bibitem[Zeiler(2012)]{zeiler2012adadelta}
Matthew~D Zeiler.
\newblock Adadelta: an adaptive learning rate method.
\newblock \emph{arXiv preprint arXiv:1212.5701}, 2012.

\end{thebibliography}
\bibliographystyle{iclr2019_conference}

\newpage
\appendix

\section{The relation among $\beta_1$, $\beta_2$ and $C$}

\begin{figure}[t]
\vspace{-20pt}
	\centering 
	\begin{subfigure}{0.4\textwidth}
    	\centering
        \includegraphics[width=0.99\textwidth]{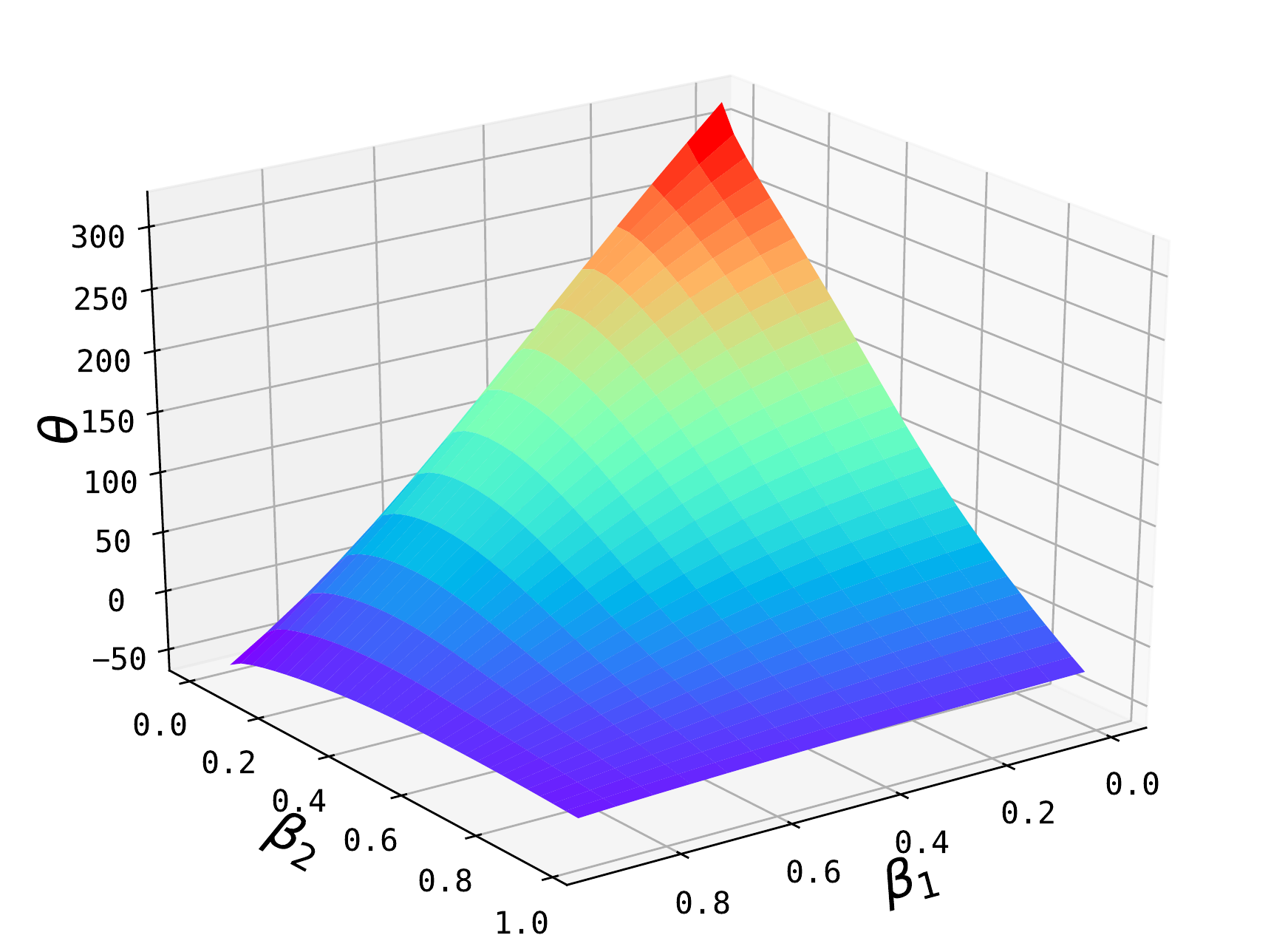}
        \caption{Final result of $\theta$ for sequential problem after 2000 updates, varied with $\beta_1 $ and  $\beta_2 $.}
        \label{fig_x_C6}
    \end{subfigure}	
    \hspace{0.2cm}
	\begin{subfigure}{0.4\textwidth}
    	\centering
        \includegraphics[width=0.99\textwidth]{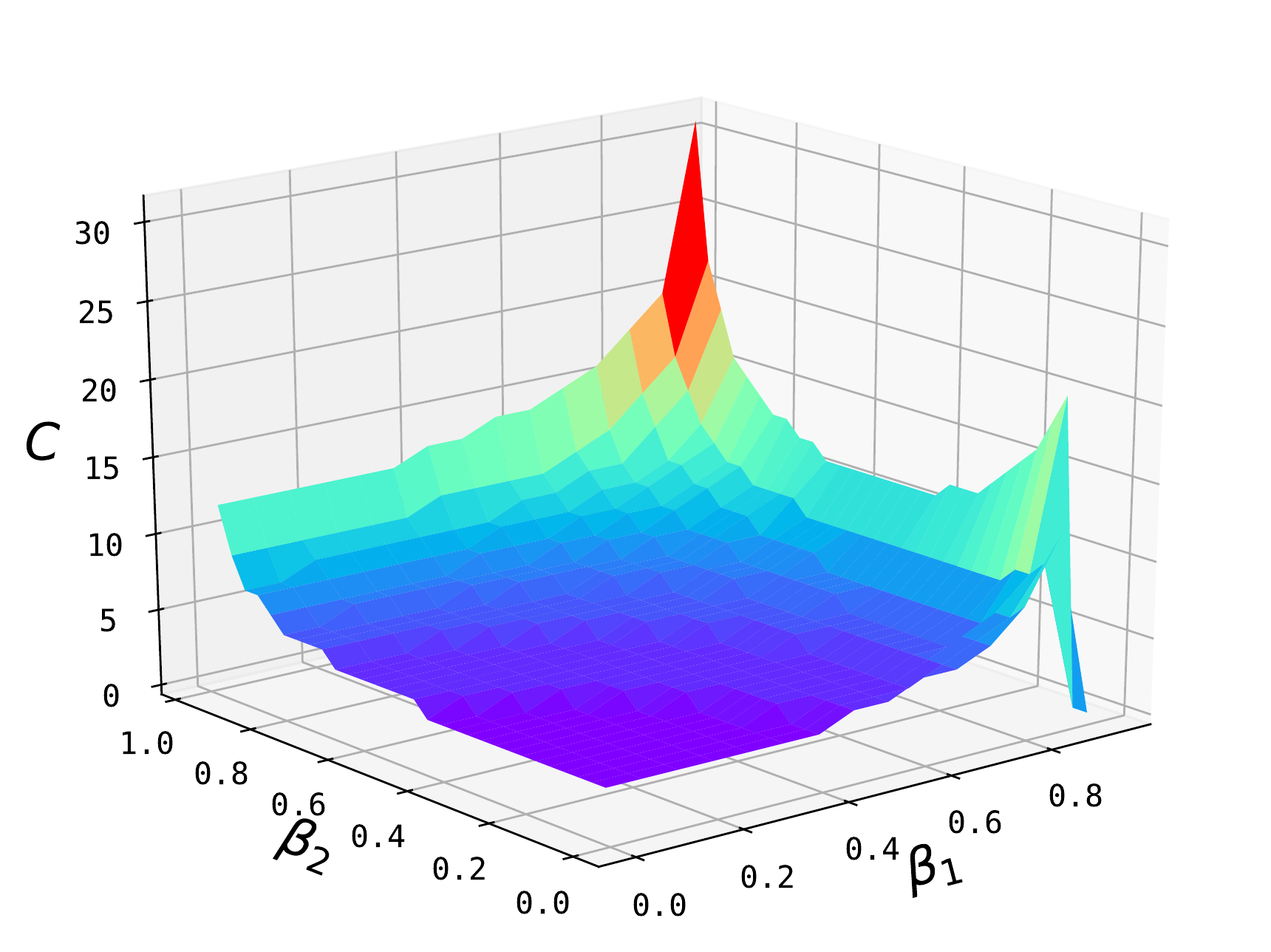}
        \caption{Critical value of $C$ with varying $\beta_1$ and $\beta_2$ under the sequential optimization setting.}
        \label{fig_thc}
    \end{subfigure}	
    \hspace{0.2cm}
	\begin{subfigure}{0.4\textwidth}
    	\centering
        \includegraphics[width=0.99\textwidth]{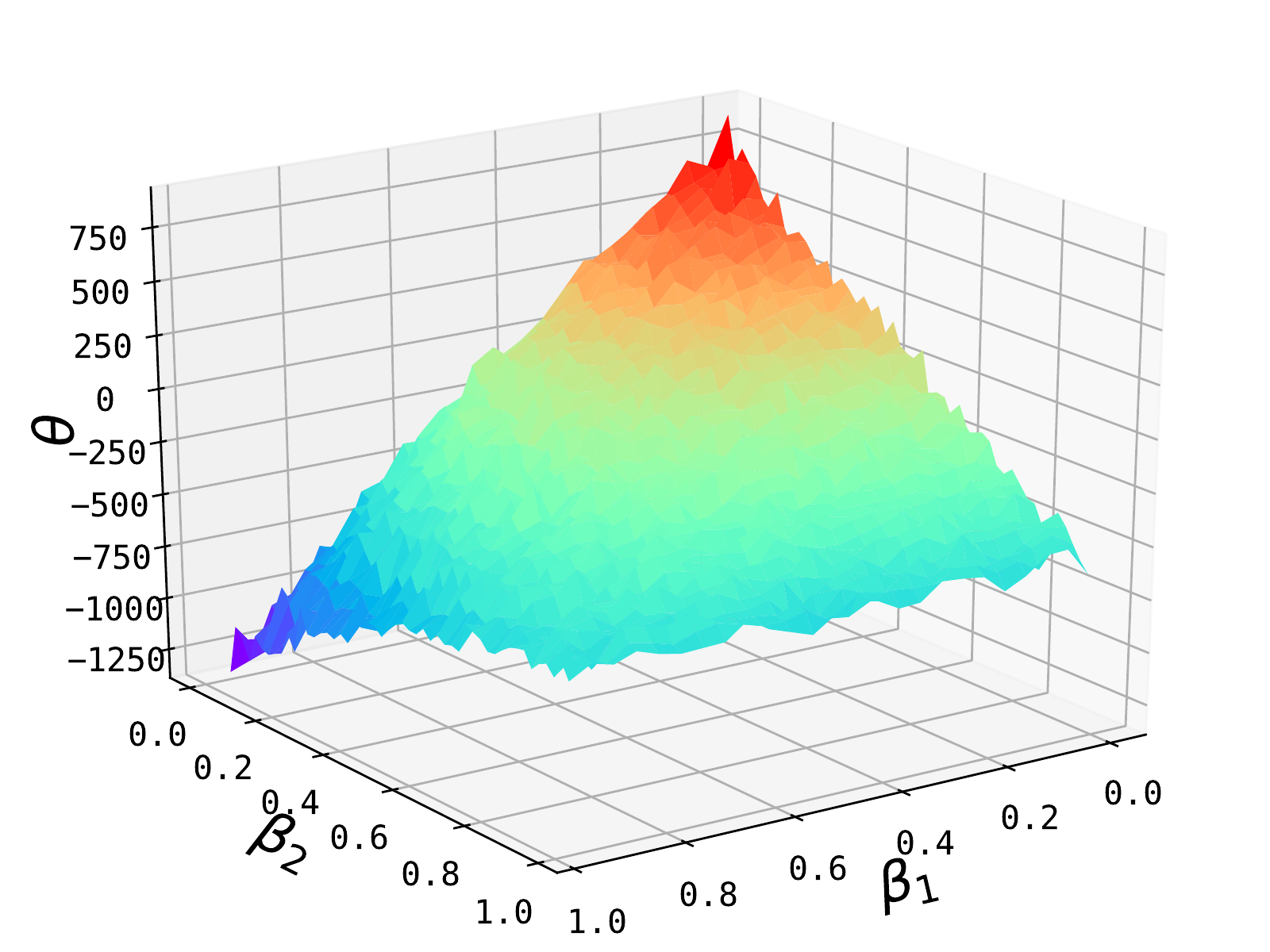}
        \caption{Final result of $\theta$ for stochastic problem after 2000 updates, varied with $\beta_1 $ and  $\beta_2 $.}
        \label{fig_sto_x_C6}
    \end{subfigure}	
	\hspace{0.2cm} 
	\begin{subfigure}{0.4\textwidth}
    	\centering
        \includegraphics[width=0.99\textwidth]{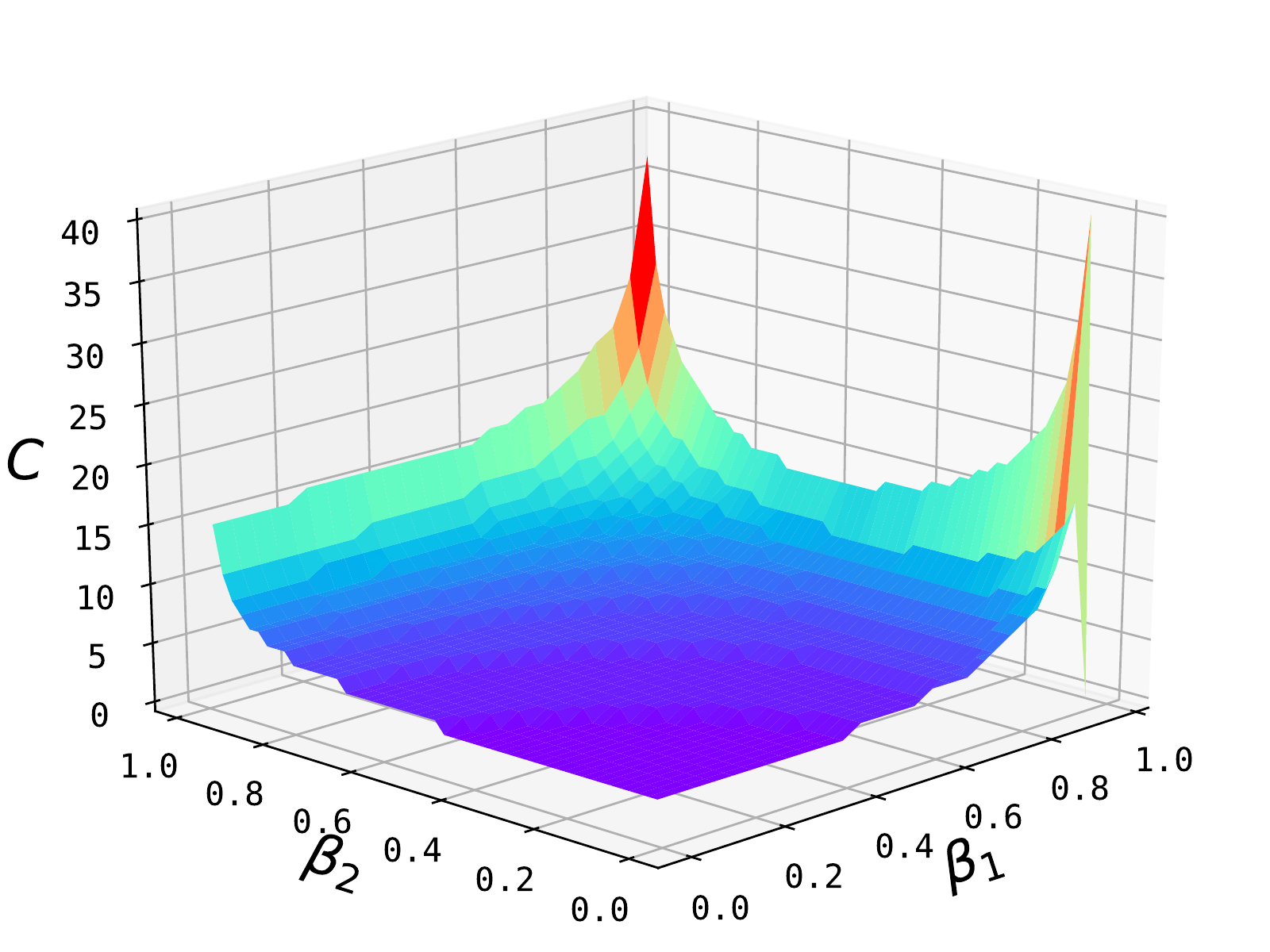}
        \caption{Critical value of $ C $ with varying $\beta_1 $ and $\beta_2  $ under the stochastic optimization setting.}
        \label{fig_sto_thc}
    \end{subfigure}	
    \vspace{-2pt}
	\caption{Both $\beta_1$ and $\beta_2$ influence the direction and speed of optimization in Adam. Critical value of $C_t$, at which Adam gets into non-convergence, increases as $\beta_1$ and $\beta_2$ getting large. Leftmost two for the sequential online optimization problem and rightmost two for stochastic online problem.
	 }\label{exp_X_THC}
\end{figure}

To provide an intuitive impression on the relation among $C, d, \beta_1, \beta_2$ and the convergence of Adam, we let $C=d=6$, initialize $\theta_1=0$, vary $\beta_1$ and $\beta_2$ among $[0,1)$ and let Adam go through 2000 timesteps (iterations). The final result of $\theta$ is shown in \Figref{fig_x_C6}. It suggests that for a fixed sequential online optimization problem, both of $\beta_1$ and $\beta_2$ determine the direction and speed of Adam optimization process. 
Furthermore, we also study the threshold point of $C$ and $d$, under which Adam will change to the incorrect direction, for each fixed $\beta_1$ and $\beta_2$ that vary among $[0,1)$. To simplify the experiments, we keep $d=C$ such that the overall gradient of each epoch being $+1$. The result is shown in \Figref{fig_thc}, which suggests, at the condition of larger $\beta_1$ or larger $\beta_2$, it needs a larger $C$ to make Adam stride on the opposite direction. In other words, large $\beta_1$ and $\beta_2$ will make the non-convergence rare to happen.

We also conduct the experiment in the stochastic problem to analyze the relation among $C$, $\beta_1$, $\beta_2$ and the convergence behavior of Adam. Results are shown in the \Figref{fig_sto_x_C6} and \Figref{fig_sto_thc} and the observations are similar to the previous: larger $C$ will cause non-convergence more easily and a larger $\beta_1$ or $\beta_2$ somehow help to resolve non-convergence issue. In this experiment, we set $\delta=1$.

\begin{lemma}[Critical condition]\label{limit_lemma2}
	In the sequential online optimization problem \Eqref{sequen_counter}, let $\alpha_t$ being fixed, define $\mathcal{S}(\beta_1,\beta_2,C,d)$ to be the sum of the limits of step updates in a $d$-step epoch:
	\begin{equation}\label{sum_value}
	\begin{aligned}
	\mathcal{S}(\beta_1,\beta_2,C) \triangleq & \sum_{i=1}^{d}\lim\limits_{nd \to \infty }{\frac{m_{nd+i}}{\sqrt{v_{nd+i}}}} ~.\\
	\end{aligned}
	\end{equation}
	Let $\mathcal{S}(\beta_1,\beta_2,C)=0 $, assuming $\beta_2$ and $C$ are large enough such that $v_t \gg 1$, we get the equation:
	\begin{equation}\label{sum_equal_zero}
	C+1= \frac{ (1-\beta_1^d)( \sqrt{\beta_2^d}-\beta_1^d )(1-\sqrt{\beta_2}) }{ (1-\beta_1)( \sqrt{\beta_2}-\beta_1 )( 1-\sqrt{\beta_2^d} ) } ~.
	\end{equation}	
\end{lemma}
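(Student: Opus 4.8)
The plan is to reduce the statement to two steady-state limit computations followed by one algebraic simplification. Since $\mathcal{S}$ is a sum of terms $\lim_{n\to\infty} m_{nd+i}/\sqrt{v_{nd+i}}$, I first need closed forms for both $\mu_i \triangleq \lim_{n\to\infty} m_{nd+i}$ and $V_i \triangleq \lim_{n\to\infty} v_{nd+i}$. The latter is already supplied by Lemma~\ref{limit_lemma}. For the former I would start from the bias-corrected moving average, whose limit as $n\to\infty$ coincides with $m_t = (1-\beta_1)\sum_{j=1}^{t}\beta_1^{t-j}g_j$ because the correction factor $1-\beta_1^t \to 1$. Exploiting the periodicity of the gradient sequence (value $C$ at every epoch start, $-1$ elsewhere), I would write the steady state as an all-$(-1)$ baseline plus a correction of size $(C+1)$ at the positions $\equiv 1 \pmod d$, and sum the resulting geometric series to obtain
\[
\mu_i = -1 + \frac{(1-\beta_1)(C+1)}{1-\beta_1^{d}}\,\beta_1^{\,i-1}.
\]
A useful sanity check is that $\sum_{i=1}^d \mu_i = C+1-d$, the true per-epoch gradient sum, confirming that the moving average preserves the epoch mean.

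Next I would invoke the hypothesis $v_t \gg 1$ to drop the additive $1$ in the formula for $V_i$ from Lemma~\ref{limit_lemma}, leaving $\sqrt{V_i} \approx A\,\beta_2^{(i-1)/2}$, where $A=\sqrt{\tfrac{(1-\beta_2)(C^2-1)}{1-\beta_2^{d}}}$ is independent of $i$. Because $A>0$ is a common factor, it drops out of the equation $\mathcal{S}=0$, which therefore reduces to $\sum_{i=1}^{d}\mu_i\,\beta_2^{-(i-1)/2}=0$. Substituting the formula for $\mu_i$ and writing $x=\beta_2^{-1/2}$ splits the left-hand side into two finite geometric series, $\sum_{i=1}^d x^{i-1}=\frac{x^d-1}{x-1}$ and $\sum_{i=1}^d(\beta_1 x)^{i-1}=\frac{(\beta_1 x)^d-1}{\beta_1 x-1}$. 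Isolating $C+1$ then expresses it as the product $\frac{1-\beta_1^d}{1-\beta_1}\cdot\frac{x^d-1}{x-1}\cdot\frac{\beta_1 x-1}{(\beta_1 x)^d-1}$.

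The final and most delicate step is to rewrite this ratio of geometric sums back in terms of $\sqrt{\beta_2}$ to bring it into the stated closed form. Using $x=1/\sqrt{\beta_2}$ one has $x^d-1=\tfrac{1-\sqrt{\beta_2^d}}{\sqrt{\beta_2^d}}$, $x-1=\tfrac{1-\sqrt{\beta_2}}{\sqrt{\beta_2}}$, $\beta_1 x-1=\tfrac{\beta_1-\sqrt{\beta_2}}{\sqrt{\beta_2}}$ and $(\beta_1 x)^d-1=\tfrac{\beta_1^d-\sqrt{\beta_2^d}}{\sqrt{\beta_2^d}}$; the $\sqrt{\beta_2}$ and $\sqrt{\beta_2^d}$ prefactors then cancel, leaving a product of the four binomials $1-\sqrt{\beta_2}$, $1-\sqrt{\beta_2^d}$, $\sqrt{\beta_2}-\beta_1$, $\sqrt{\beta_2^d}-\beta_1^d$. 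I expect the main obstacle to be precisely this bookkeeping: several of these factors are negative, so the sign tracking and the order in which the cancellations are performed is exactly where an error most easily creeps in. Because the whole argument collapses to this one identity, I would pin it down by evaluating the final expression numerically against a direct solution of $\mathcal{S}=0$ for a concrete choice of $(\beta_1,\beta_2,d)$ before trusting the closed form.
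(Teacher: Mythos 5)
Your route coincides with the paper's own proof: it too takes the steady-state limits of $m_{nd+i}$ and $v_{nd+i}$ from (the proof of) Lemma~\ref{limit_lemma} --- your $\mu_i=\frac{1-\beta_1}{1-\beta_1^d}(C+1)\beta_1^{i-1}-1$ is exactly the limit derived there --- drops the additive $1$ in $v$ under the hypothesis $v_t\gg1$, and reduces $\mathcal S=0$ to a ratio of two finite geometric series. The only cosmetic difference is that the paper evaluates the two series $\sum_i(\beta_1/\sqrt{\beta_2})^{i-1}$ and $\sum_i(1/\sqrt{\beta_2})^{i-1}$ inside the expression for $\mathcal S$ before setting it to zero, while you factor out the common scale $A$ first.

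The substantive finding concerns precisely the step you flagged as delicate. Your algebra is correct, and it does \emph{not} land on \eqref{sum_equal_zero}: substituting $x=\beta_2^{-1/2}$ into $C+1=\frac{1-\beta_1^d}{1-\beta_1}\cdot\frac{x^d-1}{x-1}\cdot\frac{\beta_1 x-1}{(\beta_1 x)^d-1}$ gives
\begin{equation*}
C+1=\frac{(1-\beta_1^d)\,(\sqrt{\beta_2}-\beta_1)\,\bigl(1-\sqrt{\beta_2^d}\bigr)}{(1-\beta_1)\,\bigl(\sqrt{\beta_2^d}-\beta_1^d\bigr)\,(1-\sqrt{\beta_2})},
\end{equation*}
whose four $\sqrt{\beta_2}$-binomials are the \emph{reciprocal} of those printed in the lemma. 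The slip is in the paper, not in your derivation: setting the bracket in the paper's own final displayed expression for $\mathcal S$ to zero (modulo an evident typo there, $\beta_2^d-\beta_1^d$ in place of $\sqrt{\beta_2^d}-\beta_1^d$) yields exactly the formula above, so the printed \eqref{sum_equal_zero} inverts its own derivation. The numerical pin-down you proposed would catch this immediately: for $\beta_1=0$, $d=2$, $\beta_2=\tfrac12$, the approximate critical condition $C^2B_2=B_1$ with $B_i=\frac{1-\beta_2}{1-\beta_2^d}(C^2-1)\beta_2^{i-1}$ gives $C=\sqrt2$, matching your form $C+1=(1+\sqrt{\beta_2})/\sqrt{\beta_2}\approx 2.414$, whereas the printed right-hand side equals $\sqrt{\beta_2}/(1+\sqrt{\beta_2})<1$ and would force $C<0$. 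Limiting behavior agrees as well: as $\beta_2\to1$ your formula gives $C+1\to d$, the unweighted epoch balance that must hold when $v_t$ becomes constant, while the printed one tends to $(1-\beta_1^d)^2/\bigl((1-\beta_1)^2d\bigr)$. The only caveat worth adding to your write-up is the degenerate case $\beta_1=\sqrt{\beta_2}$, where $\sum_i(\beta_1x)^{i-1}=d$ and the closed form is understood by continuity.
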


\Eqref{sum_equal_zero}, though being quite complex, tells that both $\beta_1$ and $\beta_2$ are closely related to the counterexamples, and there exists a critical condition among these parameters.

\section{The AdaShift Pseudo Code}

\begin{algorithm}[h]
	\caption{AdaShift: 
	We use a first-in-first-out queue $ Q $ to denote the averaging window with the length of $ n $. $ Push(Q,g_t) $ denotes pushing vector $ g_t $ to the tail of $ Q $, while $ Pop(Q) $ pops and returns the head vector of $ Q $. And $ W $ is the weight vector calculated via $ \beta_1 $. }
	\label{AdaShift_overall}
	\begin{algorithmic}[1]
		\REQUIRE $n$, $\beta_1$, $\beta_2$, $\phi$, $\epsilon$, $\theta_0$, $\{f_t(\theta)\}^T_{t=1}$, $\{\alpha_t\}^T_{t=1}$
		\STATE set $ v_0=0 $, $p_0=1$	
		\STATE $ W = [\beta_1^{n-1}, \beta_1^{n-2},\dots, \beta_1, 1] / {\sum_{i=0}^{n-1}\beta_1^n} $ 
		\FOR{$t=1$ \textbf{to} $T$}
    		\STATE $ g_t=\nabla f_t(\theta_t) $
    		\IF {$ t \leq n $}
    		    \STATE $ Push(Q,g_t) $
    		\ELSE 
    		    \STATE $ g_{t-n}=Pop(Q) $
        		\STATE $ Push(Q,g_t) $
        		\STATE $ m_t=W \cdot Q $
        		\STATE $p_t=p_{t-1}\beta_2$	
        		\FOR{$ i=1 $ \textbf{to} $ M $ }
            		\STATE $v_{t}[i] = \beta_2v_{t-1}[i] + (1-\beta_2)\phi(g^2_{t-n}[i])$
            		\STATE $\theta_{t}[i]=\theta_{t-1}[i]- \alpha_t/(\sqrt{v_{t}[i]/(1-p_t)} + \epsilon) \cdot m_{t}[i]$
        		\ENDFOR	
    		\ENDIF
		\ENDFOR
	\end{algorithmic}
\end{algorithm}

We provided the anonymous code where a Tensorflow implementation of this algorithm is available.

\section{Correlation between $g_{t}$ and $v_{t}$}

In order to verify the correlation between $g_{t}$ and $v_{t}$ in Adam and AdaShift, we conduct experiments to calculate the correlation coefficient between $g_{t}$ and $v_{t}$. We train the Multilayer Perceptron on MNIST until converge and gather the gradient of the second hidden layer of each step. Based on these data, we calculate $v_{t}$ and the correlation coefficient between $g_{t}[i]$ and $g_{t-n}[i]$, between $g_{t}[i]$ and $g_{t-n}[j]$ and between $g_{t}[i]$ and $v_{t}[i]$ of the last $10$ epochs using the Pearson correlation coefficient, which is formulated as follows:
\begin{equation*}
    \rho = \frac{\sum_{i=1}^{n}(X_{i}-\bar{X})(Y_{i}-\bar{Y})}{\sqrt{\sum_{i=1}^{n}(X_{i}-\bar{X})^{2}}\sqrt{\sum_{i=1}^{n}(Y_{i}-\bar{Y})^{2}}}.
\end{equation*}
To verify the temporal correlation between $g_{t}[i]$ and $g_{t-n}[i]$, we range $n$ from $1$ to $10$ and calculate the average temporal correlation coefficient of all variables $i$. Results are shown in Table \ref{table1}.

\begin{table}[!h]
    \caption{Temporal correlation coefficient between $g_{t}[i]$ and $g_{t-n}[i]$.}
    \label{table1}
    \centering
    \begin{tabular}{c|c|c|c|c|c}
         \hline
         $n$ & 1 & 2 & 3 & 4 & 5  \\
         \hline
         $\rho$ & -0.000368929 & -0.000989286 & -0.001540511 & -0.00116966 & -0.001613395 \\
         \hline
         \hline
         $n$ & 6 & 7 & 8 & 9 & 10 \\
         \hline
         $\rho$ & -0.001211721 & 0.000357474 & -0.00082293 & -0.001755237 & -0.001267641 \\
         \hline
    \end{tabular}
\end{table}

To verify the spatial correlation between $g_{t}[i]$ and $g_{t-n}[j]$, we again range $n$ from $1$ to $10$ and randomly sample some pairs of $i$ and $j$ and calculate the average spatial correlation coefficient of all the selected pairs. Results are shown in Table \ref{table2}.

\begin{table}[!h]
    \caption{Spatial correlation coefficient between $g_{t}[i]$ and $g_{t-n}[j]$.}
    \label{table2}
    \centering
    \begin{tabular}{c|c|c|c|c|c}
         \hline
         $n$ & 1 & 2 & 3 & 4 & 5  \\
         \hline
         $\rho$ & -0.000609471 & -0.001948853 & -0.001426661 & 0.000904615 & 0.000329359 \\
         \hline
         \hline
         $n$ & 6 & 7 & 8 & 9 & 10 \\
         \hline
         $\rho$ & 0.000971337 & -0.000644563 & -0.00137805 & -0.001147973 & -0.000592037 \\
         \hline
    \end{tabular}
\end{table}

To verify the correlation between $g_{t}[i]$ and $v_{t}[i]$ within Adam, we calculate $v_{t}$ and the average correlation coefficient between $g_{t}^{2}$ and $v_{t}$ of all variables $i$. The result is \textbf{0.435885276}.

To verify the correlation between $g_{t-n}[i]$ and $v_{t}[i]$ within non-AdaShift and between $g_{t-n}[i]$ and $v_{t}$ within max-AdaShift, we range the keep number $n$ from $1$ to $10$ to calculate $v_{t}$ and the average correlation coefficient of all variables $i$. The result is shown in Table \ref{table3} and Table \ref{table4}.

\begin{table}[!h]
    \caption{Correlation coefficient between $g_{t-n}^{2}[i]$ and $v_{t}[i]$ in non-AdaShift.}
    \label{table3}
    \centering
    \resizebox{0.92\textwidth}{!}{
        \begin{tabular}{c|c|c|c|c|c}
             \hline
             $n$ & 1 & 2 & 3 & 4 & 5  \\
             \hline
             $\rho$ & -0.010897023 & -0.010952548 & -0.010890854 & -0.010853069 & -0.010810747 \\
             \hline
             \hline
             $n$ & 6 & 7 & 8 & 9 & 10 \\
             \hline
             $\rho$ & -0.010777789 & -0.01075946 & -0.010739279 & -0.010728553 & -0.010720019 \\
             \hline
        \end{tabular}
    }
\end{table}

\begin{table}[!h]
    \caption{Correlation coefficient between $g_{t-n}^{2}[i]$ and $v_{t}$ in max-AdaShift.}
    \label{table4}
    \centering
    \begin{tabular}{c|c|c|c|c|c}
         \hline
         $n$ & 1 & 2 & 3 & 4 & 5  \\
         \hline
         $\rho$ & -0.000706289 & -0.000794959 & -0.00076306 & -0.000712474 & -0.000668459 \\
         \hline
         \hline
         $n$ & 6 & 7 & 8 & 9 & 10 \\
         \hline
         $\rho$ & -0.000623162 & -0.000566573 & -0.000542046 & -0.000598015 & -0.000592707 \\
         \hline
    \end{tabular}
\end{table}

\section{Proof of Theorem \ref{beta_1_theory}}

\begin{proof} $ $\\

With bias correction, the formulation of $m_t$ is written as follows
\begin{equation}
m_t = \frac{(1-\beta_1)\sum_{i=1}^{t}\beta_1^{t-i}g_i}{(1-\beta_1)\sum_{i=1}^{t}\beta_1^{t-i}} = \frac{\sum_{i=1}^{t}\beta_1^{t-i}g_i}{\sum_{i=1}^{t}\beta_1^{t-i}}.
\end{equation}
According to L'Hospital‘s rule, we can draw the following:
\begin{equation*}
\lim\limits_{\beta_1 \to 1 } \sum_{i=1}^{t} \beta_1^{t-i} = \lim\limits_{\beta_1 \to 1 } \frac{1-\beta_1^t}{1-\beta_1} = t.
\end{equation*}
Thus,
\begin{equation*}
\lim\limits_{\beta_1 \to 1} m_t = \frac{ \sum_{i=1}^{t} g_i }{t}.
\end{equation*}
According to the definition of limitation, let $g^*=\frac{ \sum_{i=1}^{t} g_i }{t}$, we have, $ \forall \epsilon >0 $, $\exists \beta_1 \in (0,1) $, 
such that
\begin{equation*}
\lVert  m_t - g^*  \rVert_\infty < \epsilon.
\end{equation*}
We set $\epsilon$ to be $|\frac{g^*}{2}|$, then for each dimension of $m_t$, i.e. $m_t[i]$, 
\begin{equation*}
\frac{g^*[i]}{2} \leq m_t[i]\leq \frac{3g^*[i]}{2}
\end{equation*}
So, $m_t$ shares the same sign with $g^*$ in every dimension. 

Given it is a convex optimization problem, let the optimal parameter be $\theta^*$, and the maximum step size is $\frac{\alpha_t}{\sqrt{v_t}}G$ that holds $\epsilon_1/G<\frac{\alpha_t}{\sqrt{v_t}}G<\epsilon_2/G$, we have, 
\begin{equation}
\lim\limits_{t\rightarrow\infty}\rVert\theta_t-\theta^*\rVert_\infty<\epsilon_2/G .
\end{equation}
Given $\lVert \nabla f_t(\theta) \rVert_\infty \leq G$, we have $f_t(\theta)-f_t(\theta^*)<\epsilon_2$, which implies the average regret \begin{equation}
R(T)/T = \sum_{t=1}^{T}[f_t(\theta_t)-f_t(\theta^*)] / T < \epsilon_2.
\end{equation}

\end{proof}

\section{Proof of Lemma \ref{limit_lemma}}

\begin{proof}
	Let $\beta_1,\beta_2 \in [0,1)~\mbox{,}~ d \in \mathbb{N} ~\mbox{,}~ 1 \leq i \leq d ~\mbox{and}~ i \in \mathbb{N}$.
	
	\begin{align*}
	m_{nd+i}
	=& (1-\beta_1) \sum_{j=1}^{nd+i} \beta_1^{nd+i-j} g_j \\
	=& (1-\beta_1) \left[ (C+1) \sum_{j=0}^n \beta_1^{jd+i-1} - \sum_{j=0}^{nd+i-1} \beta_1^j    \right]  \\
	=& (1-\beta_1) \left[ \frac{1-\beta_1^{(n+1)d}}{1-\beta_1^d} \beta_1^{i-1} (C+1) - \frac{1-\beta_1^{nd+i}}{1-\beta_1}  \right] \\
	=& \frac{1-\beta_1^{(n+1)d}}{1-\beta_1^d} (1-\beta_1) \beta_1^{i-1} (C+1) - ( 1- \beta_1^{nd+i} ) \\
	\end{align*}
	For a fixed $ d $, as $n$ approach infinity, we get the limit of $ m_{nd+i} $ as:
	
	\begin{equation*}
	\lim\limits_{nd \to \infty }{m_{nd+i}}= { \frac{1-\beta_1}{1-\beta_1^d}(C+1)\beta_1^{i-1} -1 }
	\end{equation*}
	Similarly, for $ v_{nd+i} $:
	\begin{align*}
	v_{nd+i}
	=& (1-\beta_2) \sum_{j=1}^{nd+i} \beta_2^{nd+i-j} g_j^2 \\
	=& (1-\beta_2) \left[ (C^2-1) \sum_{j=0}^n \beta_2^{jd+i-1} + \sum_{j=0}^{nd+i-1} \beta_2^j    \right]  \\
	=& (1-\beta_2) \left[ \frac{1-\beta_2^{(n+1)d}}{1-\beta_2^d} \beta_2^{i-1} (C^2-1) + \frac{1-\beta_2^{nd+i}}{1-\beta_2}  \right] \\
	=& \frac{1-\beta_2^{(n+1)d}}{1-\beta_2^d} (1-\beta_2) \beta_2^{i-1} (C+1) - ( 1- \beta_2^{nd+i} ) \\
	\end{align*}
	For a fixed $ d $, as $n$ approach infinity, we get the limit of $ v_{nd+i} $ as:
	\begin{equation*}
	\lim\limits_{nd \to \infty }{v_{nd+i}}=  \frac{1-\beta_2}{1-\beta_2^d}(C^2-1)\beta_2^{i-1}+1 ~.
	\end{equation*}

\end{proof}

\section{Proof of Lemma \ref{limit_lemma3}}
		

\begin{proof}
	First, we define $ \widetilde{V}_i $ as:
	\begin{equation*}
	\widetilde{V}_i = \lim\limits_{nd \to \infty} \frac{1}{\sqrt{v_{nd+i}}}
					= \frac{1}{{\sqrt{\frac{1-\beta_2}{1-\beta_2^d}(C^2-1)\beta_2^{(i-1)}  +1}}}
	\end{equation*}
	where $ 1 \leq i \leq d ~\mbox{and}~ i \in \mathbb{N} $. And $ \widetilde{V}_i $ has a period of $ d $. Let $ t^{'}=t-nd $, then we can draw:
	
	\begin{align*}
	\lim\limits_{nd \to \infty} k(g_{nd+i}) =& \sum_{t=nd+i}^{\infty}\frac{(1-\beta_1)\beta_1^{t-nd-i}}{\sqrt{\frac{1-\beta_2}{1-\beta_2^d}(C^2-1)\beta_2^{(t-1) \, \mathbin{\%} \, d}+1}}   \\
	=& \sum_{t^{'}= i }^{\infty} \frac{(1-\beta_1)\beta_1^{t^{'} - i } }{\sqrt{\frac{1-\beta_2}{1-\beta_2^d}(C^2-1)\beta_2^{(t^{'}-1) \, \mathbin{\%} \, d}+1}}    \\
	=& \sum_{l=1}^{\infty}  \sum_{j^{''} = (l-1)d+i }^{ld+i-1}  (1-\beta_1)\beta_1^{ j^{''} -i } \cdot  \widetilde{V}_{ j^{''} }     \\
	=& \sum_{l=1}^{\infty}  \beta_1^{ (l-1)d }  \sum_{j^{'}=i}^{i+d-1} (1-\beta_1) \beta_1^{j^{'}-i} \cdot \widetilde{V}_{j^{'}}            \\
	=& \sum_{l=1}^{\infty} \beta_1^{ (l-1)d }    \sum_{j=0}^{d-1} (1-\beta_1) \beta_1^j \cdot \widetilde{V}_{j+i}          \\
	=& \sum_{l=1}^{\infty} \beta_1^{ (l-1)d }  \left[ \beta_1 \sum_{j=0}^{d-1} (1-\beta_1) \beta_1^j \cdot \widetilde{V}_{j+i+1} + (1-\beta_1)(1-\beta_1^d) \cdot \widetilde{V}_i   \right]      \\
	=&\beta_1 \cdot \lim_{nd \to \infty} k(g_{nd+i+1}) + \sum_{l=1}^{\infty} \beta_1^{ (l-1)d } (1-\beta_1)(1-\beta_1^d) \cdot \widetilde{V}_i
	\end{align*}
	Thus, we can get the forward difference of $ k(g_{nd+i}) $ as:
	\begin{align*}
	\lim\limits_{nd \to \infty} k(g_{nd+i+1}) - \lim\limits_{nd \to \infty} k(g_{nd+i}) 
	=& \sum_{l=1}^{\infty} \beta_1^{ (l-1)d } \left[  (1-\beta_1)^2 \sum_{j=0}^{\infty} \beta_1^j \cdot \widetilde{V}_{j+i+1} +(1-\beta_1)^2 \sum_{j=0}^{\infty} \beta_1^j \cdot \widetilde{V}_i \right]  \\
	=& (1-\beta_1)^2 \sum_{l=1}^{\infty} \beta_1^{(l-1)d} \sum_{j=0}^{d-1} \beta_1^j \cdot  \left[ \widetilde{V}_{j+i+1} - \widetilde{V}_i  \right]
	\end{align*}
	$ \widetilde{V}_{nd+1i} $ monotonically increases within one period, when $ 1 \leq i \leq d $ and $ i \in \mathbb{N} $.  And the weigh $ \beta_1^j $ for every difference term $ \left[ \widetilde{V}_{j+i+1} - \widetilde{V}_i \right]  $ is fixed when $ i $ varies. 
	Thus, the weighted summation $ \sum_{j=0}^{d-1}\beta_1^j \cdot \left[  \widetilde{V}_{j+i+1} - \widetilde{V}_i \right] $ is monotonically decreasing from positive to negative.
	In other words, the forward difference is monotonically decreasing, such that there exists $ j $,  $ 1\leq j \leq d $ and $ \lim\limits_{nd \to \infty} k(g_{nd+1}) $ is the maximum among all net updates.
	Moreover, it is obvious that $ \lim\limits_{nd \to \infty} k(g_{nd+1}) $ is the minimum.

	Hence, we can draw the conclusion: $\exists 1\leq j \leq d $, such that
	\begin{equation*}
	\lim\limits_{nd \to \infty } k(C) = \lim\limits_{nd \to \infty } k(g_{nd+1}) < \lim\limits_{nd \to \infty } k(g_{nd+2}) < \dots < \lim\limits_{nd \to \infty }  k(g_{nd+j})
	\end{equation*}
	and
	\begin{equation*}
	\lim\limits_{nd \to \infty }  k(g_{nd+j}) >   \lim\limits_{nd \to \infty } k(g_{nd+j+1}) > \dots > \lim\limits_{nd \to \infty } k(g_{nd+d+1}) = \lim\limits_{nd \to \infty } k(C),
	\end{equation*}
	where $K(C)$ is the net update factor for gradient $g_i=C$.
\end{proof}

\section{Proof of Lemma \ref{expectation_net_update} }

\begin{lemma} \footnote{See detial in: https://stats.stackexchange.com/questions/5782/variance-of-a-function-of-one-random-variable}
	\label{expectation_of_fx_lemma}
	For a bounded random variable $ X $ and a differentiable function $ f(x) $, the expectation of $ f(X) $ is as follows:
	\begin{equation}\label{expectation_of_fx_equation}
	\mathbb{E} [f(X)] = f(\mathbb{E}[X]) + \frac{f^{''}(\mathbb{E}[X])}{2} {D}(X) + R_3	
	\end{equation}
	where  $ D(X) $ is variance of $ X $, and $ R_3 $ is as follows:
	\begin{align}\label{expectation_formular_R3}
	R_3 =& \frac{ f^{[3]} (\alpha) }{3} \mathbb{E}(X-\mathbb{E}[X])^3 + \\
		+& \int_{|x-\mathbb{E}[X]|>c } \left( f(\mathbb{E}[X]) + f^{'}(\mathbb{E}[X]) (x-\mathbb{E}[X])^2 + f(X) \right) dF(x)
	\end{align}
	$ F(x) $ is the distribution function of $ X $. 
	$R_3$ is a small quantity under some condition.
	And $ c $ is large enough, such that: for any $ \epsilon >0 $,
	\begin{equation}\label{epsilon_c}
	P(X \in [ \mathbb{E}[X]-c, \mathbb{E}[X]+c ] ) = P( |X - \mathbb{E}[X] | \leq c ) \leq 1- \epsilon	
	\end{equation}
	
\end{lemma}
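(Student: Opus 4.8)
The plan is to derive the identity from a second-order Taylor expansion of $f$ about the mean $\mu \triangleq \mathbb{E}[X]$ and then integrate against the law of $X$. Assuming $f$ has enough regularity to reach its third derivative, Taylor's theorem with the Lagrange form of the remainder gives, for each realization $x$,
\begin{equation*}
f(x) = f(\mu) + f'(\mu)(x-\mu) + \frac{f''(\mu)}{2}(x-\mu)^2 + \frac{f^{(3)}(\xi_x)}{6}(x-\mu)^3,
\end{equation*}
where $\xi_x$ lies between $x$ and $\mu$. This is the only analytic input; everything else is bookkeeping of moments.

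First I would take the expectation of both sides with respect to the distribution $F$ of $X$. The constant contributes $f(\mu)$; the linear term drops out because $\mathbb{E}[X-\mu]=0$ by definition of the mean; and the quadratic term contributes $\frac{f''(\mu)}{2}\,\mathbb{E}[(X-\mu)^2]=\frac{f''(\mu)}{2}\,D(X)$, which is precisely the stated second-moment term. Collecting everything that is left into a single quantity $R_3$ then yields the displayed equality by construction, so the content of the lemma is entirely in the description and control of $R_3$.

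The delicate part is to bring $R_3$ into the stated form and to argue that it is negligible. Here I would split the integral $\int \cdot\, dF(x)$ at the threshold $c$, chosen so that $X$ concentrates on $[\mu-c,\mu+c]$ up to tail mass at most $\epsilon$. On the central region $\{|x-\mu|\le c\}$ the cubic Lagrange remainder is handled by a mean-value argument, producing the stated leading term $\tfrac{f^{(3)}(\alpha)}{3}\,\mathbb{E}(X-\mu)^3$ for some intermediate point $\alpha$; on the tail $\{|x-\mu|>c\}$ the residual pieces of the expansion are gathered into the boundary integral. The main obstacle is making this decomposition rigorous and establishing that $R_3$ is genuinely small: because $X$ is bounded, $f'$, $f''$ and $f^{(3)}$ are uniformly bounded on the support, the third central moment is finite, and the tail mass is at most $\epsilon$, so each constituent of $R_3$ is dominated by a bounded factor times either $D(X)$, the third central moment, or $\epsilon$. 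I would then conclude that whenever $X$ is sufficiently concentrated, $R_3$ is negligible, which is the precise sense in which $\mathbb{E}[f(X)]\approx f(\mu)+\tfrac{1}{2}f''(\mu)D(X)$.
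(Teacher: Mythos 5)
A point of order first: the paper never actually proves this lemma---it is imported wholesale via the footnote citation and used as a black box in the proof of Lemma \ref{expectation_net_update}---so there is no in-paper argument to compare yours against; your Taylor-expansion route is certainly the intended derivation. Taking expectations of the second-order expansion about $\mu=\mathbb{E}[X]$, killing the linear term, and identifying $\tfrac{1}{2}f''(\mu)D(X)$ is correct, and your observation that boundedness of $X$ controls the remainder is the right instinct, provided you make explicit the tacit upgrade from ``differentiable'' to $f$ being three times differentiable on a neighborhood of the support (which the statement needs but does not say, and which you use without comment).

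There is, however, one step in your write-up that would fail as stated: pulling a single intermediate point $\alpha$ out of the Lagrange remainder, i.e.\ asserting $\mathbb{E}\bigl[f^{(3)}(\xi_X)(X-\mu)^3\bigr]=f^{(3)}(\alpha)\,\mathbb{E}(X-\mu)^3$ ``by a mean-value argument.'' The weighted mean value theorem requires the weight to have constant sign, and $(x-\mu)^3$ changes sign at $\mu$; indeed, for $X$ symmetric about $\mu$ the right-hand side vanishes for every choice of $\alpha$, while the left-hand side is generally nonzero when $f^{(3)}$ is nonconstant. To repair this you must either split the central region at $\mu$ and apply the mean value theorem separately on $\{x<\mu\}$ and $\{x>\mu\}$ (yielding two intermediate points, not one), or retreat to the bound $\sup|f^{(3)}|\cdot\mathbb{E}|X-\mu|^3/6$, which is weaker than the stated equality but suffices for the ``$R_3$ is small'' conclusion that is all Lemma \ref{expectation_net_update} actually uses downstream. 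Relatedly, your Taylor remainder correctly carries the factor $1/6=1/3!$, yet you claim it ``produces the stated leading term'' with the paper's factor $1/3$; these differ by a factor of two, and you should flag the discrepancy (almost surely a typo in the statement, of a piece with the reversed inequality in \Eqref{epsilon_c} and the garbled tail integrand, which should read as $f(x)$ minus the quadratic Taylor polynomial on the tail) rather than silently absorb it.
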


\begin{proof}(\textbf{Proof of Lemma \ref{expectation_net_update} })
	In the stochastic online optimization problem \eqref{stochastic_counter}, the gradient subjects the distribution as:
	\begin{equation}\label{stochastic_gradient_distribution}
	g_i = 
	\begin{cases}
	C,  & \text{with probability $ p:=\frac{1+\delta}{C+1} $};\\[4pt]
	-1, & \text{with probability $ 1-p := \frac{C-\delta}{C+1} $ }.
	\end{cases},
	\end{equation}
	Then we can get the expectation of $ g_i $ :
	\begin{equation}\label{expectation_g}
	\mathbb{E}[g_i] = \delta
	\end{equation}
	\begin{equation}\label{expectation_g^2}
	\mathbb{E}[g_i^2] = C^2 \cdot \frac{1+\delta}{C+1} + \frac{C-\delta}{C+1}
	=C + \delta(C+1)	
	\end{equation}
	\begin{equation}\label{var_g}
	D[g_i] = C+ \delta(C+1) -\delta^2
	\end{equation}
	\begin{equation}\label{expectation_g^4}
	\mathbb{E}[g_i^4] = C(C^2-C+1) + \delta (C-1)(C^2+1) 
	\end{equation}
	\begin{equation}\label{var_g^2}
	D[g_i^2] = C^3 - 2C^2+C + \delta( C^3-3C^2 -C - 1) - \delta^2(C+1)^2
	\end{equation}
	
	Meanwhile, under the assumption that gradients are i.i.d.,  the expectation and variance of $ v_i $ are as following when $ nd \to \infty $:
	\begin{equation}\label{expectation_v}
	\mathbb{E}[v_i] = \lim\limits_{i \to \infty} (1-\beta_2) \sum_{j=1}^{i} \beta_2^{i-j} \mathbb{E}[ g_{j}^2] 
	= \lim\limits_{i \to \infty} (1-\beta_2^i) \mathbb{E}[g_j^2] = C+\delta(C+1)
	\end{equation}
	\begin{equation}\label{variance_v}
	D[v_i] = \lim\limits_{i \to \infty} (1-\beta_2) \sum_{j=1}^{i} \beta_2^{i-j} D[ g_{j}^2] 
	= \lim\limits_{i \to \infty} (1-\beta_2^i) D[g_j^2] = D[g_j^2]
	\end{equation}
	
	Then, for the gradient $ g_i $, the net update factor is as follows:
	\begin{equation*}
	k(g_i) = \sum_{t=0}^{\infty} \frac{(1-\beta_1)\beta_1^t  }{ \sqrt{ \beta_2^{t+1} v_{i-1} +(1-\beta_2)\beta_2^t \cdot g_i^2  + (1-\beta_2)\sum_{j=1}^{t}  \beta_2^{t-j} g_{i+j}^2 } }
	\end{equation*}
	It should to be clarified that we define $ \sum_{j=1}^{t}  \beta_2^{t-j} g_{i+j}^2  $ equal to zero when $ t = 0 $. Then we define $ X_t $ as:
	\begin{equation*}
	X_t = \beta_2^{t+1} v_{i-1} + (1-\beta_2)\beta_2^t \cdot g_i^2 + (1-\beta_2) \sum_{j=1}^{t} \beta_2^{t-j} g_{i+j}^2
	\end{equation*}
	\begin{align}\label{expectation_X_t}
	\mathbb{E}[X_t] 
	=& \beta_2^{t+1} \mathbb{E}[v_{i-1} ]+ (1-\beta_2)\beta_2^t \cdot g_i^2 + (1-\beta_2) \sum_{j=1}^{t}\beta_2^{t-j} \mathbb{E}[g_{i+j}^2]      \\
	=& \beta_2^{t+1} \mathbb{E}[g^2] + (1-\beta_2)\beta_2^t \cdot g_i^2 + (1-\beta_2^t) \mathbb{E}[g^2]  \\ 
	=& (1+ \beta_2^{t+1} - \beta_2^t ) \mathbb{E}[g^2] + (1-\beta_2)\beta_2^t \cdot g_i^2
	\end{align}
	\begin{align}\label{variance_X_t}
	D[X_t] 
	=& \beta_2^{2(t+1)} D[v_{i-1}] + \frac{(1-\beta_2)^2 (1-\beta_2^{2t})}{ 1-\beta_2^2 } D[g_{i+j}^2]          \\
	=& \left[ \beta_2^{2(t+1) } + \frac{(1-\beta_2)^2 (1-\beta_2^{2t})}{ 1-\beta_2^2 }  \right] D[g^2]
	\end{align}
	For the function $ f(x) = \frac{1}{\sqrt{x}} $:
	\begin{equation*}
	f^{''}(x) = \frac{3 \cdot x^{ -5/2 }}{8}
	\end{equation*}
	According to lemma \ref{expectation_of_fx_lemma}, we can the expectation of $ f(X_t) $ as follows:
	\begin{equation}\label{expectation_f_X_t}
	\mathbb{E}[f(X_t)] = \left(\mathbb{E}[X_t] \right)^{-1/2} + \frac{3}{8} (\mathbb{E}[X_t])^{-5/2} \cdot D[X_t] 
	\end{equation}
	$ \mathbb{E}[X_t] $ and $ D[X_t] $ are expressed by \eqref{expectation_X_t} and \eqref{variance_X_t}. Then we can obtain the expectation expression of net update factor as follows:
	
	\begin{equation}\label{epectation_k}
	\resizebox{.9\hsize}{!}{
		$k(g_i) = \sum_{t=0}^{\infty} (1-\beta_1)\beta_1^t \left[  \frac{1}{\sqrt{ (1-\beta_2)\beta_2^t g_i^2 + (1+\beta_2^{t+1}-\beta_2^k) \mathbb{E}[g_i^2]  } } + \frac{ 3D_t }{8 [ (1-\beta_2)\beta_2^t g_i^2 +( 1+\beta_2^{t+1}-\beta_2^t ) \mathbb{E}[g_i^2] ]^{\frac{5}{2}} } \right]$
	}
	\end{equation}
	where $ D_t = D[X_k] $. Then for gradient $ C $ and $ -1 $, the net update factor is as follows:
	\begin{equation}\label{limit_value_mt_dfffs}
	\resizebox{.9\hsize}{!}{
		$k(C) = \sum_{t=0}^{\infty} (1-\beta_1)\beta_1^t \left[  \frac{1}{\sqrt{ (1-\beta_2)\beta_2^t C^2 + (1+\beta_2^{t+1}-\beta_2^k) \mathbb{E}[g_i^2]  } } + \frac{ 3D_t }{8 [ (1-\beta_2)\beta_2^t C^2 +( 1+\beta_2^{t+1}-\beta_2^t ) \mathbb{E}[g_i^2] ]^{\frac{5}{2}} } \right]$
	}
	\end{equation}
	and
	\begin{equation}\label{net_k_dec_ssss}
	\resizebox{.9\hsize}{!}{
		$k(-1) = \sum_{t=0}^{\infty} (1-\beta_1)\beta_1^t \left[  \frac{1}{\sqrt{ (1-\beta_2)\beta_2^t 
				+ (1+\beta_2^{t+1}-\beta_2^k) \mathbb{E}[g_i^2]  } } + \frac{ 3D_t }{8 [ (1-\beta_2)\beta_2^t  + ( 1+\beta_2^{t+1}-\beta_2^t ) \mathbb{E}[g_i^2] ]^{\frac{5}{2}} } \right]$
	}
	\end{equation}

	We can see that each term in the infinite series of $k(C)$ is smaller than the corresponding one in $k(-1)$. Thus, $k(C)<k(-1)$. 
	
\end{proof}

\section{Proof of Lemma \ref{limit_lemma2}}
\begin{proof}
	From Lemma \ref{limit_lemma}, we can get:
	\begin{equation*}
	\lim\limits_{nd \to \infty} \frac{m_{nd+i}}{\sqrt{ v_{nd+i} }} = \frac{ \frac{1-\beta_1}{1-\beta_1^d}(C+1)\beta_1^{i-1} - 1 }{\sqrt{\frac{1-\beta_2}{1-\beta_2^d} (C^2-1)\beta_2^{i-1} + 1 }}
	\end{equation*}
	We sum up all updates in an epoch, and define the summation as $ \mathcal{S}(\beta_1,\beta_2,C) $.
	\begin{equation*}
	\mathcal{S}(\beta_1,\beta_2,C) =  \sum_{i=1}^{d}\lim\limits_{nd \to \infty }{\frac{m_{nd+i}}{\sqrt{v_{nd+i}}}} 
	\end{equation*}
	Assume $ \beta_2 $ and $ C $ are large enough such that  $v_t \gg 1$, we get the approximation of limit of $ v_{nd+i} $ as:
	\begin{equation*}
	\lim \limits_{nd \to \infty }v_{nd+i} \approx \frac{1-\beta_2}{1-\beta_2^d}(C^2-1)\beta_2^{i-1}
	\end{equation*}
	Then we can draw the expression of $ \mathcal{S}(\beta_1,\beta_2,C) $ as:
	\begin{align*}
	\mathcal{S}(\beta_1,\beta_2,C) 
	=& \sum_{i=1}^{d} \frac{ \frac{1-\beta_1}{1-\beta_2^d}(C+1)\beta_1^{i-1} -1 }{\sqrt{ \frac{1-\beta_2}{1-\beta_2^d}(C^2-1)\beta_2^{i-1} }}  \\
	=& \sum_{i=1}^{d} \frac{\frac{1-\beta_1}{1-\beta_1^d} (C+1) \beta_1^{i-1} }{ \sqrt{\frac{1-\beta_2}{1-\beta_2^d}(C^2-1)\beta_2^{i-1} } }
	- \sum_{i=1}^{d} \frac{1}{\sqrt{\frac{1-\beta_2}{1-\beta_2^d}(C^2-1)\beta_2^{i-1} }} \\
	=&  \sqrt{\frac{1-\beta_2^d}{(1-\beta_2)\beta_2^{d-1} }}  \sqrt{\frac{C+1}{C-1}} \cdot \frac{1-\beta_1}{1-\beta_1^d } \cdot \frac{ \sqrt{\beta_2^d} - \beta_1^d }{\sqrt{\beta_2} - \beta_1 } - \sqrt{\frac{1-\beta_2^d}{(1-\beta_2)\beta_2^{d-1}  }} \cdot \frac{1}{\sqrt{C^2-1}}  \frac{\sqrt{\beta_2^d}-1}{\sqrt{\beta_2}-1}  \\
	=& \sqrt{\frac{1-\beta_2^d}{(1-\beta_2)\beta_2^{d-1}(C-1) }} \left[ \frac{(1-\beta_1)(\beta_2^d-\beta_1^d) \sqrt{C+1} }{(1-\beta_1^d)(\sqrt{\beta_2} - \beta_1 )}  -  \frac{\sqrt{\beta_2^d}-1}{ \sqrt{C+1} (\sqrt{\beta_2} -1 ) }  \right] 
	\end{align*}
	
	Let $\mathcal{S}(\beta_1,\beta_2,C)=0 $, we get the equation about critical condition:
	\begin{equation*}
	C+1= \frac{ (1-\beta_1^d)( \sqrt{\beta_2^d}-\beta_1^d )(1-\sqrt{\beta_2}) }{ (1-\beta_1)( \sqrt{\beta_2}-\beta_1 )( 1-\sqrt{\beta_2^d} ) }
	\end{equation*}
	
\end{proof}

\section{Hyper-parameters Investigation}

\subsection{Hyper-parameters setting}
Here, we list all hyper-parameter setting of all above experiments.
\begin{table}[!h]
	\caption{Hyper-parameter setting of logistic regression in Figure \ref{exp_fig_LR}. }
	\label{table_para_LR}
	\centering
	\begin{tabular}{c|c|c|c|c}
		\hline
		Optimizer & learning rate & $ \beta_1 $ & $ \beta_2 $ & $ n $  \\
		\hline
		SGD & 0.1 & N/A & N/A & N/A \\
		\hline
		Adam & 0.001 & 0 & 0.999 & N/A \\
		\hline
		AMSGrad & 0.001 & 0 & 0.999 & N/A \\
		\hline
		non-AdaShift & 0.001 & 0 & 0.999 & 1 \\
		\hline
		max-AdaShift & 0.01 & 0 & 0.999 & 1 \\
		\hline
	\end{tabular}
\end{table}

\begin{table}[!h]
	\caption{Hyper-parameter setting of  Multilayer Perceptron on MNIST in Figure \ref{exp_fig_MultiLayer}.  }
	\label{table_para_MLP}
	\centering
	\begin{tabular}{c|c|c|c|c}
		\hline
		Optimizer & learning rate & $ \beta_1 $ & $ \beta_2 $ & $ n $  \\
		\hline
		SGD & 0.001 & N/A & N/A & N/A \\
		\hline
		Adam & 0.001 & 0 & 0.999 & N/A \\
		\hline
		AMSGrad & 0.001 & 0 & 0.999 & N/A \\
		\hline
		non-AdaShift & 0.0005 & 0 & 0.999 & 1 \\
		\hline
		max-AdaShift & 0.01 & 0 & 0.999 & 1 \\
		\hline
	\end{tabular}
\end{table}

\begin{table}[!h]
	\caption{Hyper-parameter setting of WGAN-GP in Figure \ref{fig_wGAN_D_train_loss}.  }
	\label{table_para_wGAN}
	\centering
	\begin{tabular}{c|c|c|c|c}
		\hline
		Optimizer & learning rate & $ \beta_1 $ & $ \beta_2 $ & $ n $  \\
		\hline
		Adam & 1e-5 & 0 & 0.999 & N/A \\
		\hline
		AMSGrad & 1e-5 & 0 & 0.999 & N/A \\
		\hline
		AdaShift & 1.5e-4 & 0 & 0.999 & 1 \\
		\hline
	\end{tabular}
\end{table}

\begin{table}[!h]
	\caption{Hyper-parameter setting of Neural Machine Translation BLEU in Figure \ref{fig_lstm}.  }
	\label{table_para_lstm}
	\centering
	\begin{tabular}{c|c|c|c|c}
		\hline
		Optimizer & learning rate & $ \beta_1 $ & $ \beta_2 $ & $ n $  \\
		\hline
		Adam & 0.0001 & 0.9 & 0.999 & N/A \\
		\hline
		AMSGrad & 0.0001 & 0.9 & 0.999 & N/A \\
		\hline
		AdaShift & 0.01 & 0.9 & 0.999 & 30 \\
		\hline
	\end{tabular}
\end{table}

\begin{table}[!h]
	\caption{Hyper-parameter setting of ResNet on Cifar-10 in Figure \ref{exp_resnet_cifar10}, DenseNet on Cifar-10 in Figure \ref{exp_densenet_cifar10} and DenseNet on Tiny-Imagenet in Figure \ref{exp_densenet_tiny}. }
	\label{table_para_resnet_densenet_cifar10_dense_tiny}
	\centering
	\begin{tabular}{c|c|c|c|c}
		\hline
		Optimizer & learning rate & $ \beta_1 $ & $ \beta_2 $ & $ n $  \\
		\hline
		Adam & 0.001 & 0.9 & 0.999 & N/A \\
		\hline
		AMSGrad & 0.001 & 0.9 & 0.999 & N/A \\
		\hline
		AdaShift & 0.01 & 0.9 & 0.999 & 10 \\
		\hline
	\end{tabular}
\end{table}

\subsection{Learning rate $\alpha_t$ Sensitivity}

In this section, we discuss the learning rate $\alpha_t$ sensitivity of AdaShift. We set $ \alpha_t \in \{0.1, 0.01, 0.001\}$ and let $ n=10 $, $\beta_1 = 0.9 $ and $ \beta_2 = 0.999 $. The results are shown in Figure \ref{exp_lr_senstivity_resnet_cifar10} and Figure \ref{exp_lr_sensitivity_densenet_cifar10}. Empirically, we found that when using the $max$ spatial operation, the best learning rate for AdaShift is around ten times of Adam. 

\begin{figure}[h]
\vspace{-0.2cm}
\centering
    \centering
	\begin{subfigure}{0.49\linewidth}
    	\centering
        \includegraphics[width=0.99\textwidth]{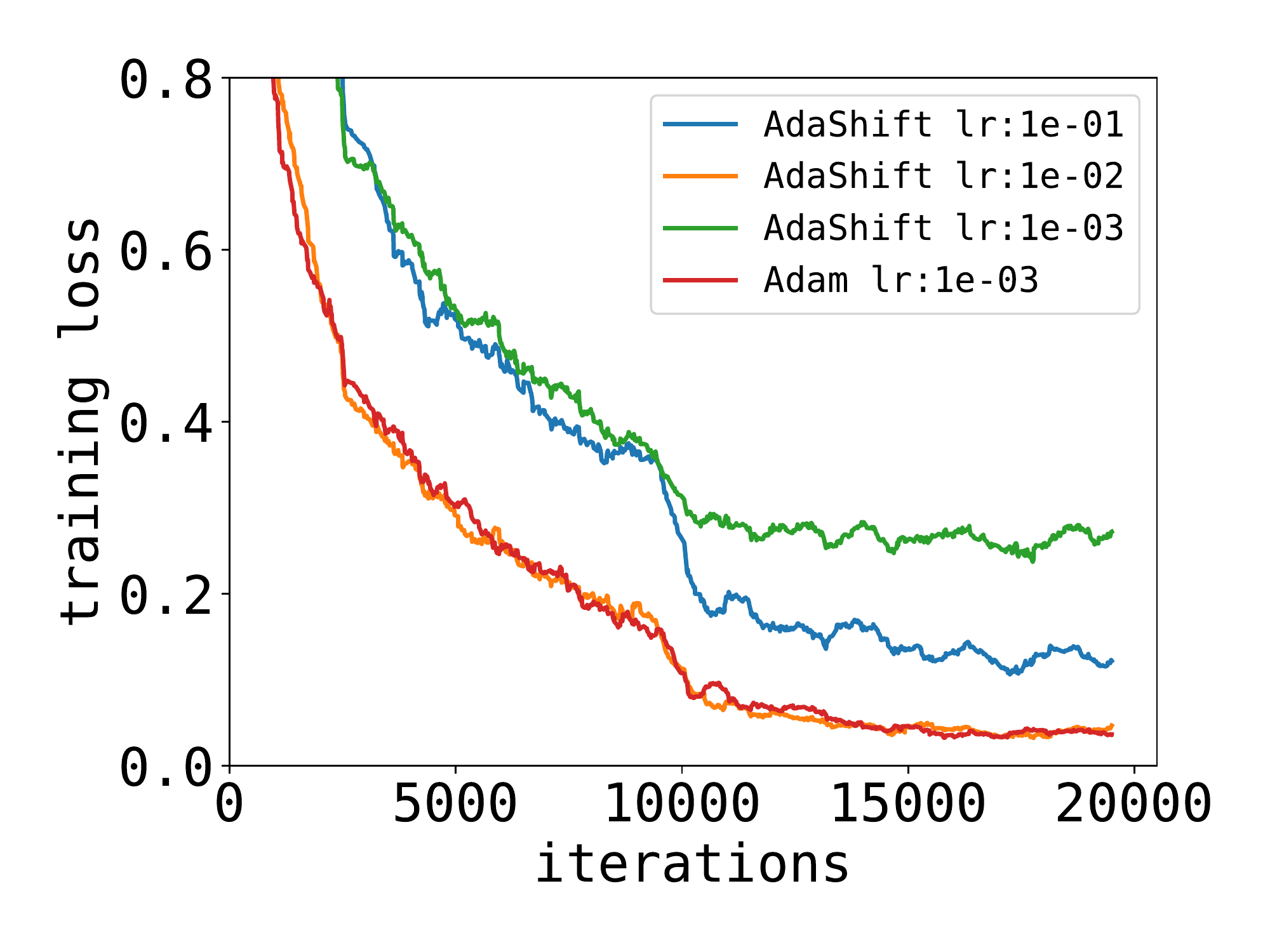}
    \end{subfigure}	
	\begin{subfigure}{0.49\linewidth}
    	\centering
        \includegraphics[width=0.99\textwidth]{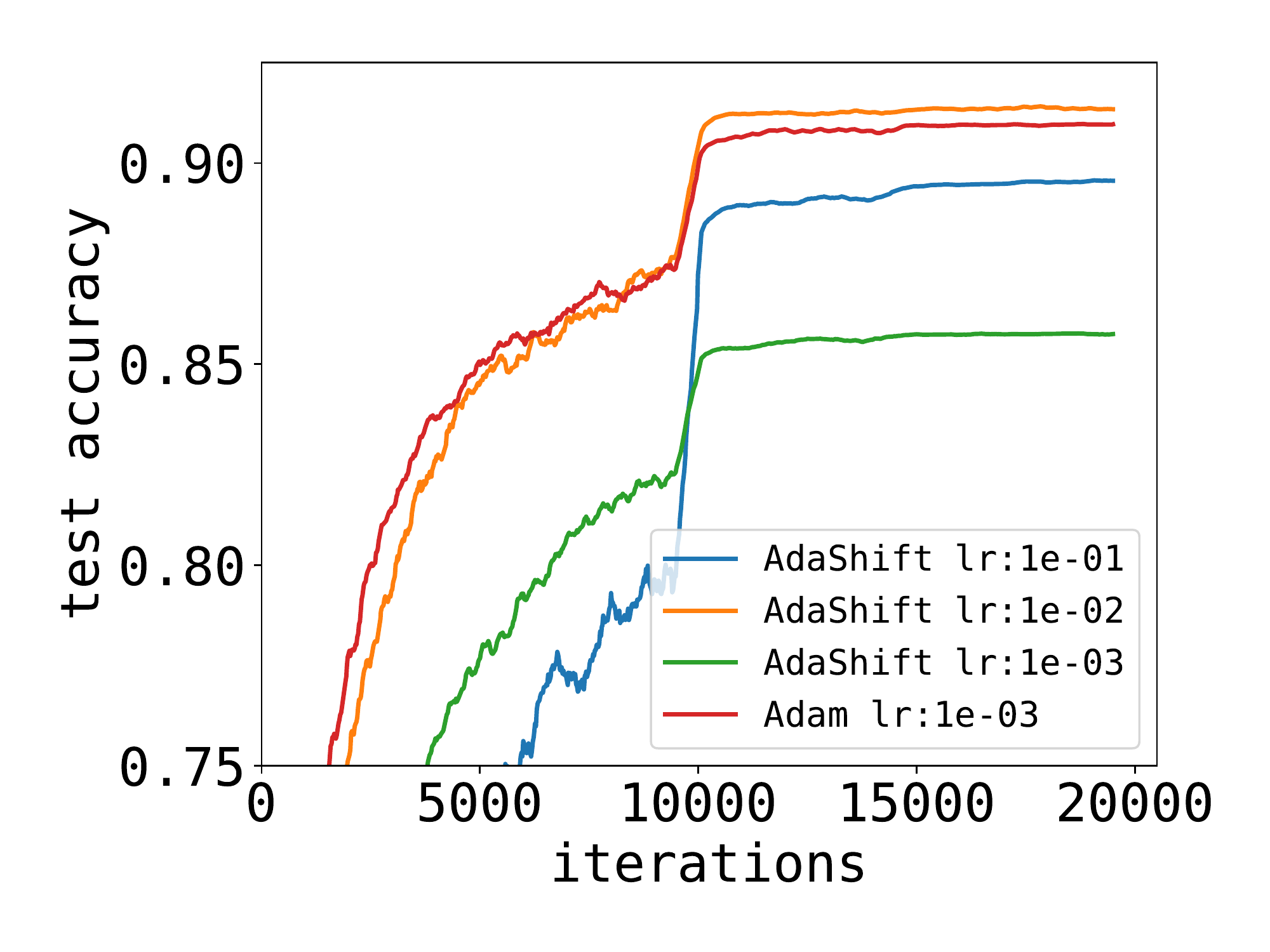}
    \end{subfigure}	
	\vspace{-0.50cm}
	\caption{Learning rate sensitivity experiment with ResNet on CIFAR-10. }
	\label{exp_lr_senstivity_resnet_cifar10}
	\vspace{-0.4cm}
\end{figure}

\begin{figure}[h]
\vspace{-0.20cm}
    \centering
	\begin{subfigure}{0.49\linewidth}
    	\centering
        \includegraphics[width=0.99\textwidth]{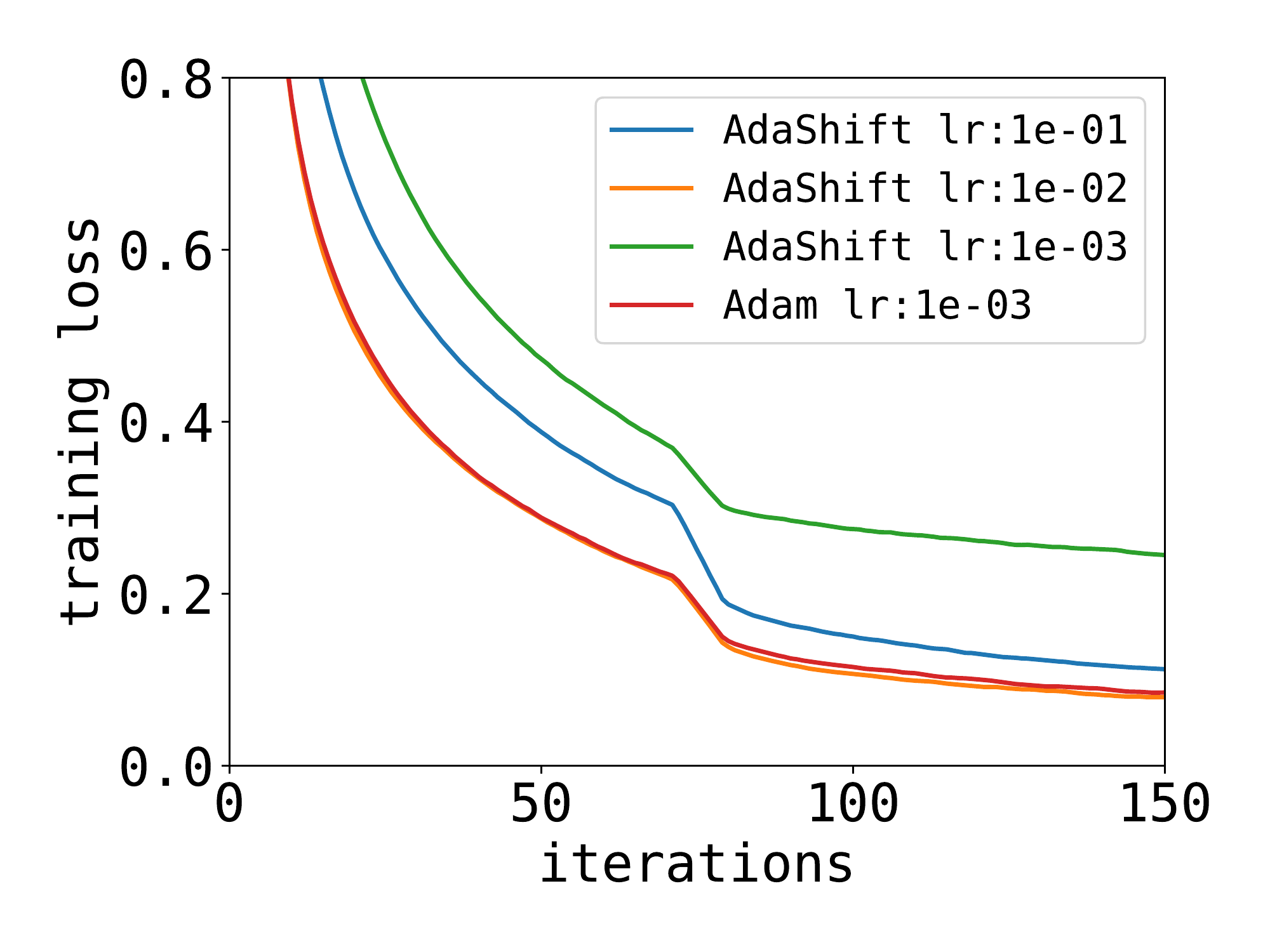}
    \end{subfigure}	
	\begin{subfigure}{0.49\linewidth}
    	\centering
        \includegraphics[width=0.99\textwidth]{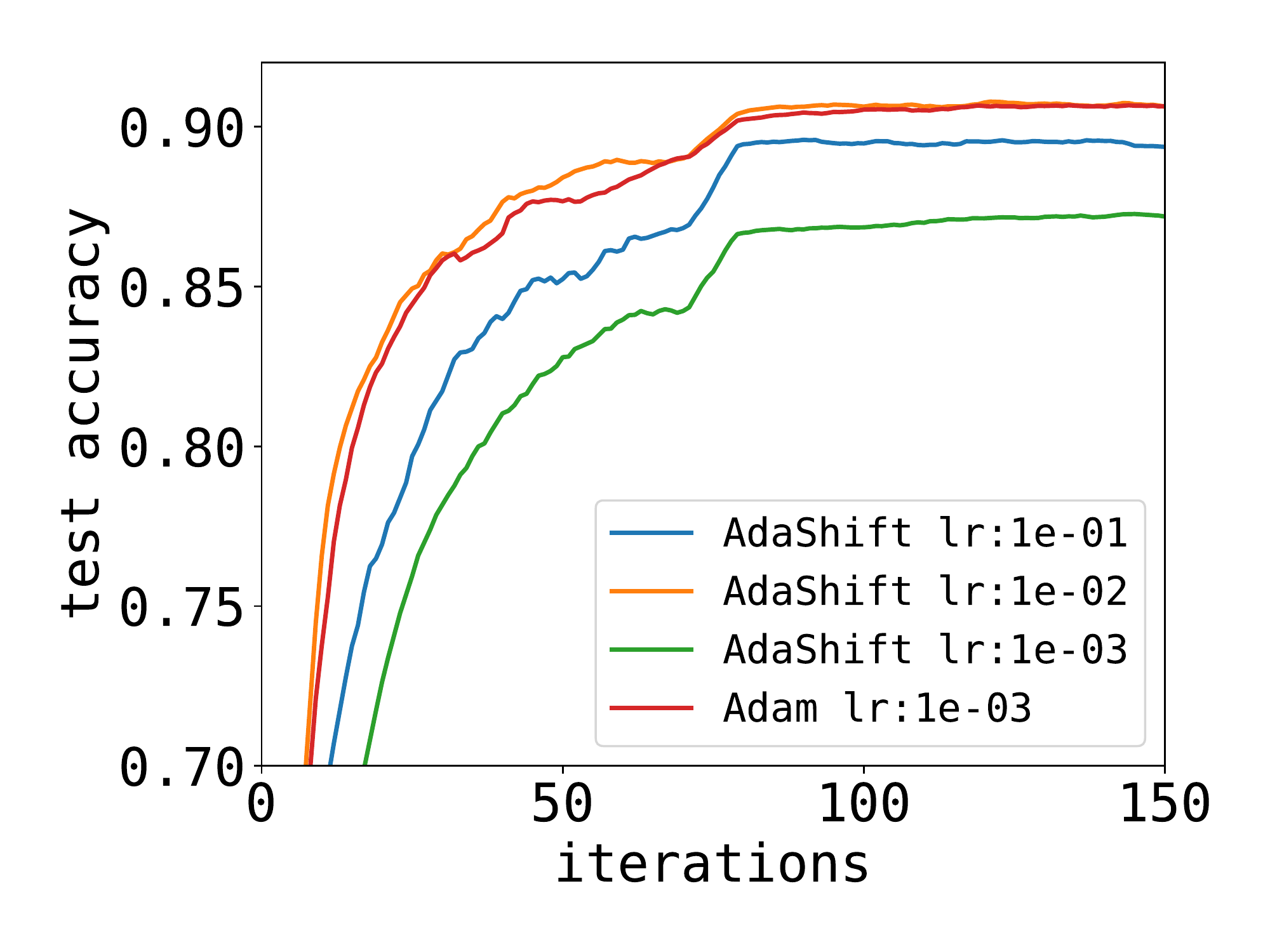}
    \end{subfigure}	
	\vspace{-0.50cm}
	\caption{ Learning rate sensitivity experiment with DenseNet on CIFAR-10. }
	\label{exp_lr_sensitivity_densenet_cifar10}
    \vspace{-0.4cm}
\end{figure}

\subsection{$\beta_1$ and $\beta_2$ Sensitivity}
In this section, we discuss the $\beta_1$ and $\beta_2$ sensitivity of AdaShift. We set $ \alpha = 0.01 $, $n=10$ and let $ \beta_1 \in \{0, 0.9\} $ and $ \beta_2 \in \{0.9, 0.99, 0.999 \} $. The results are shown in Figure \ref{exp_para_senstivity_resnet_cifar10} and Figure \ref{exp_para_sensitivity_densenet_cifar10}.
According to the results, AdaShift holds a low sensitivity to $ \beta_1 $ and $ \beta_2 $. In some tasks, using the first moment estimation (with $\beta_1=0.9$ and $n=10$) or using a large $\beta_2$, e.g., $0.999$ can attain better performance. The suggested parameters setting is $n = 10, \beta_1 = 0.9, \beta_2 = 0.999 $. 

\begin{figure}[h]
\vspace{-0.2cm}
\centering
    \centering
	\begin{subfigure}{0.49\linewidth}
    	\centering
        \includegraphics[width=0.99\textwidth]{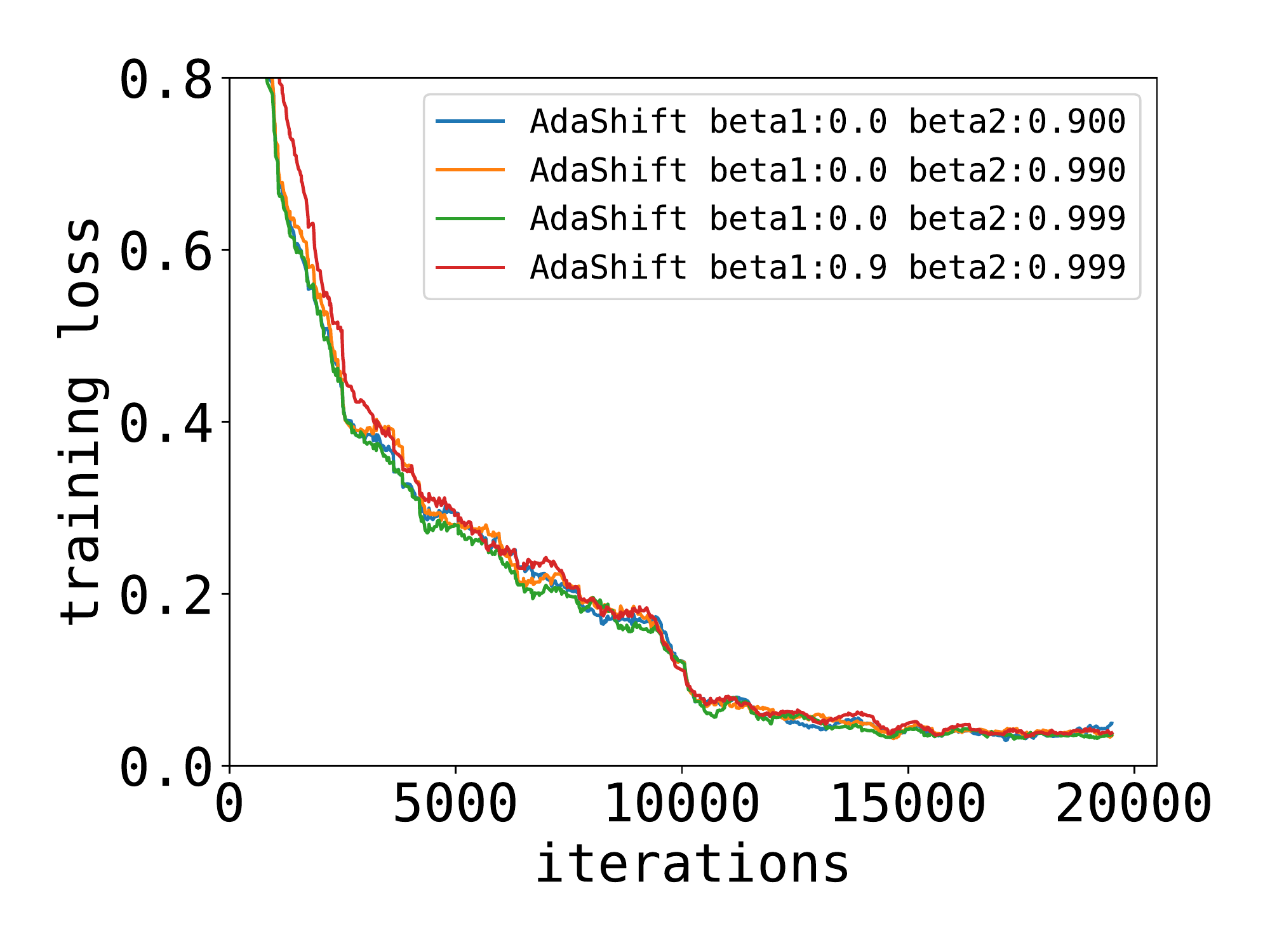}
    \end{subfigure}	
	\begin{subfigure}{0.49\linewidth}
    	\centering
        \includegraphics[width=0.99\textwidth]{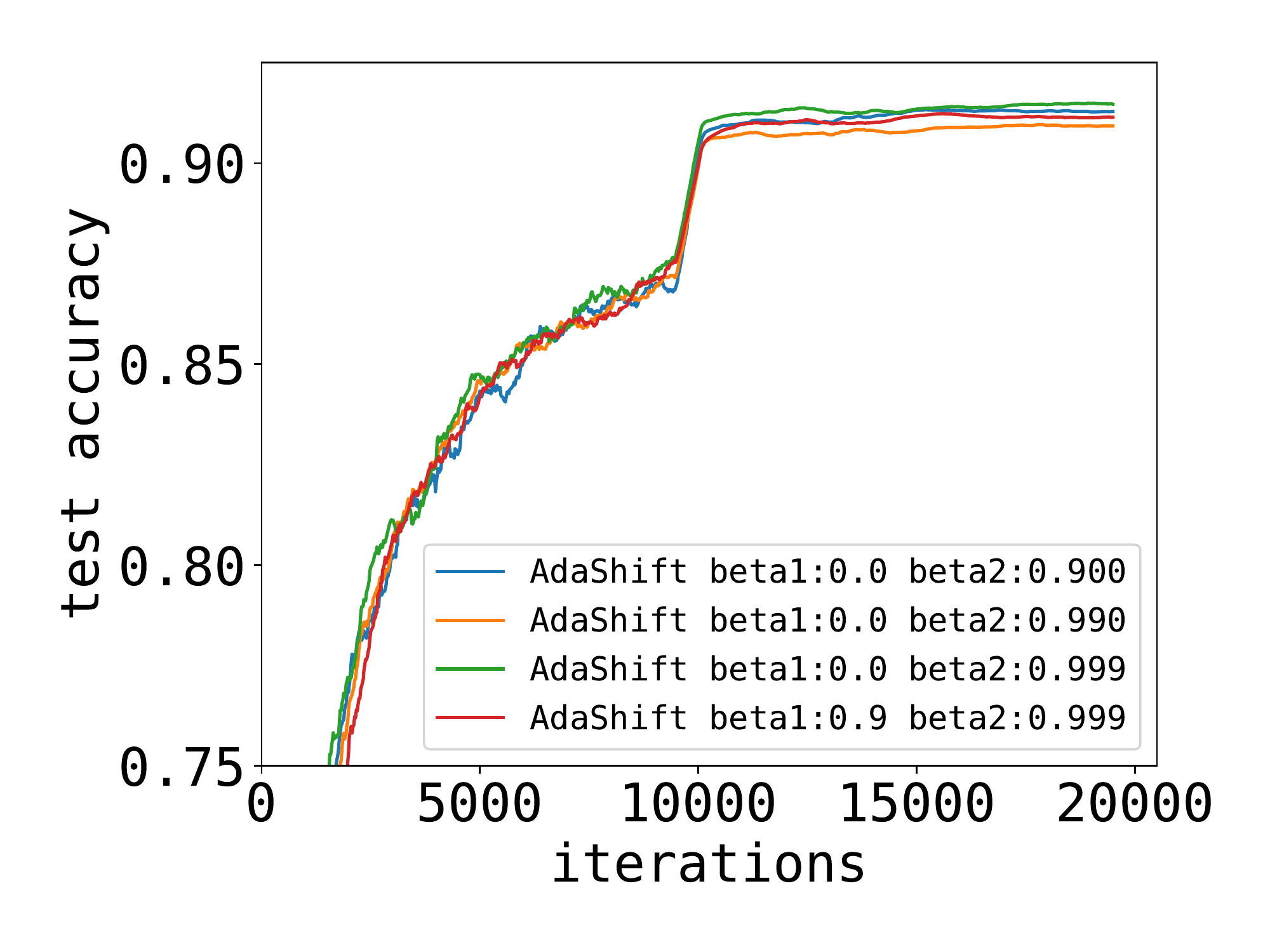}
    \end{subfigure}	
	\vspace{-0.50cm}
	\caption{$\beta_1$ and $\beta_2$ sensitivity experiment with ResNet on CIFAR-10. }
	\label{exp_para_senstivity_resnet_cifar10}
	\vspace{-0.4cm}
\end{figure}

\begin{figure}[h]
\vspace{-0.35cm}
    \centering
	\begin{subfigure}{0.49\linewidth}
    	\centering
        \includegraphics[width=0.99\textwidth]{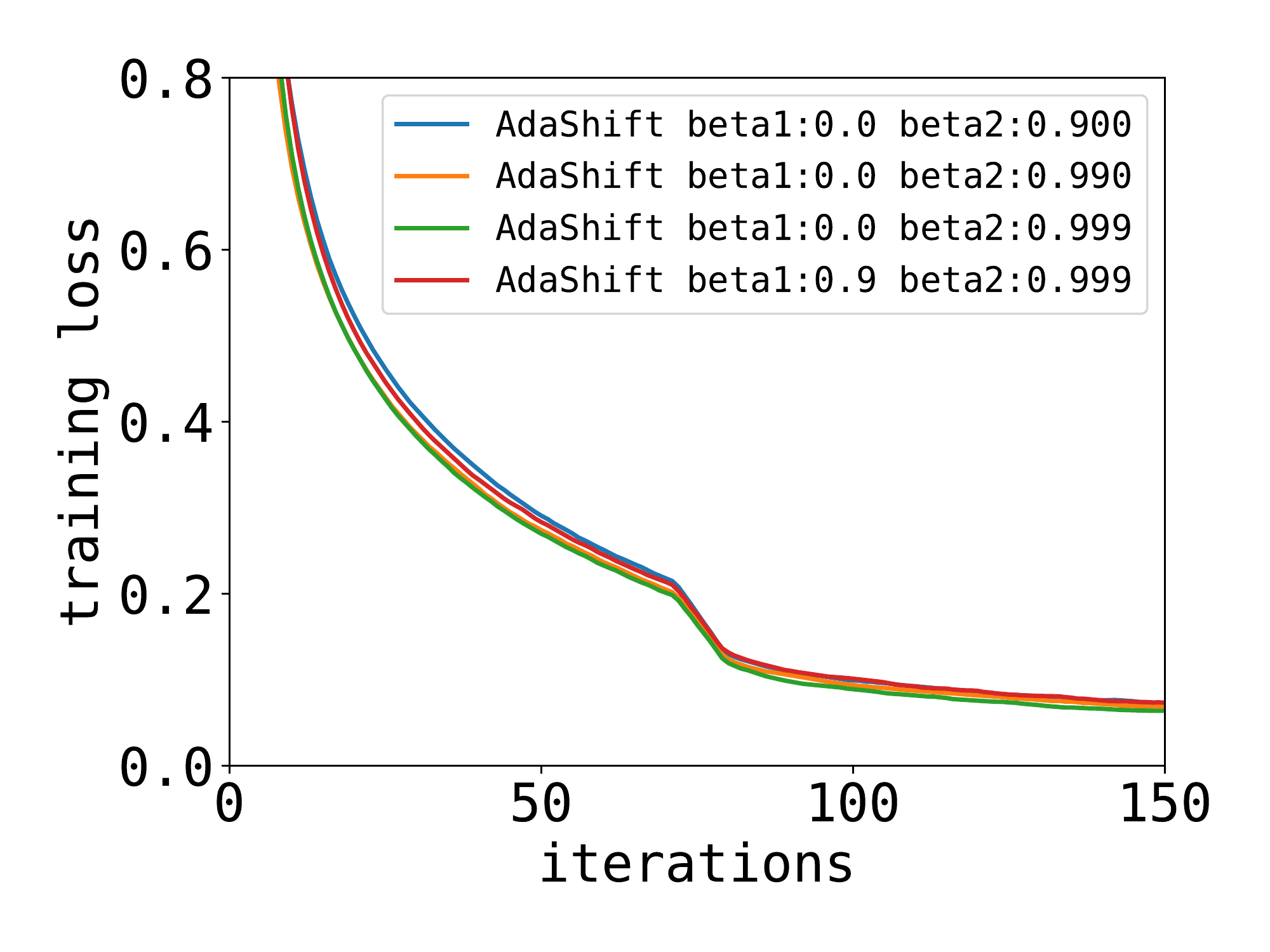}
    \end{subfigure}	
	\begin{subfigure}{0.49\linewidth}
    	\centering
        \includegraphics[width=0.99\textwidth]{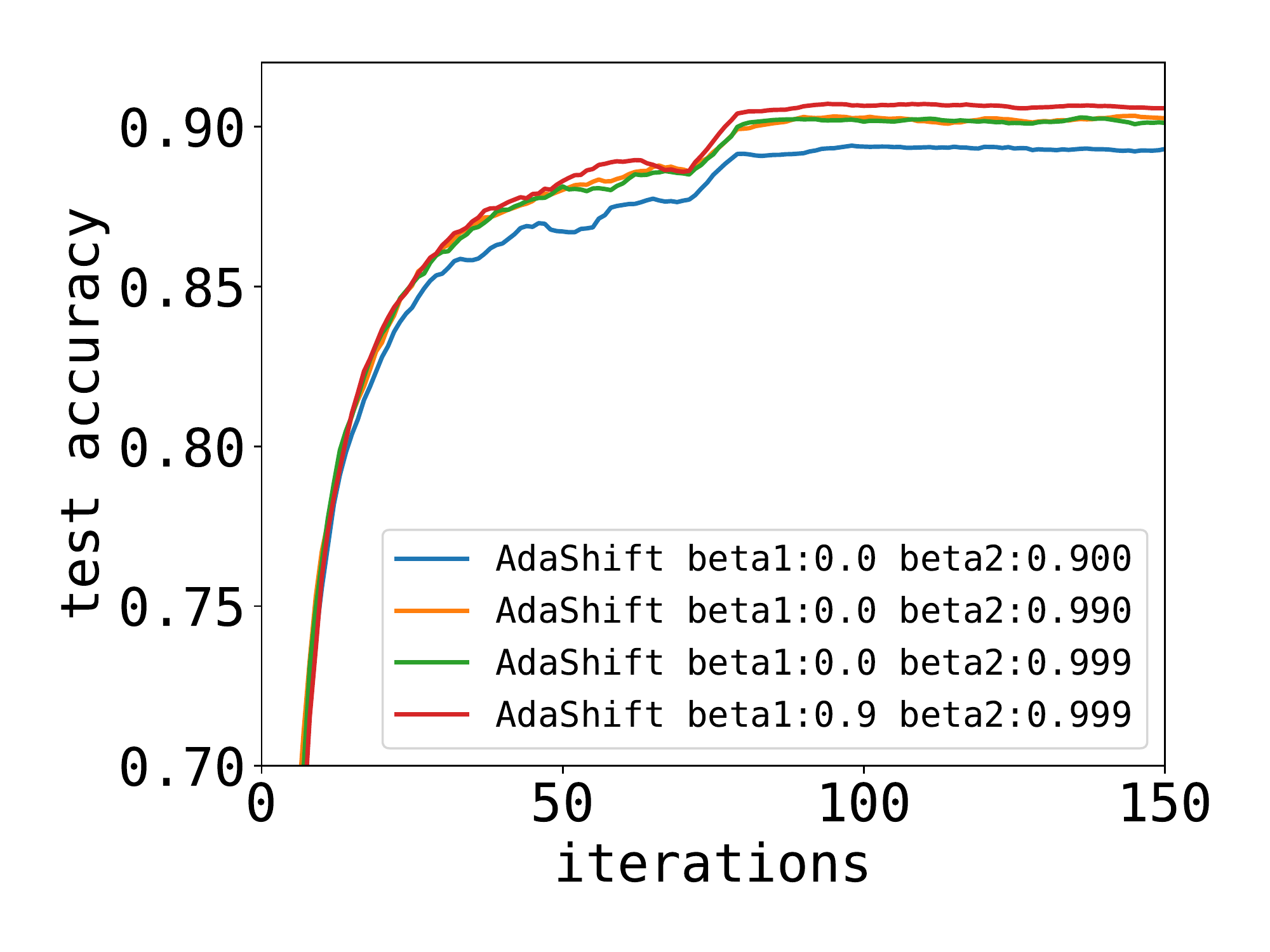}
    \end{subfigure}	
	\vspace{-0.50cm}
	\caption{ $\beta_1$ and $\beta_2$ sensitivity experiment with DenseNet on CIFAR-10. }
	\label{exp_para_sensitivity_densenet_cifar10}
\end{figure}

\subsection{$n$ and $m$ Sensitivity}

In this section, we discuss the $n$ sensitivity of AdaShift. Here we also test a extended version of first moment estimation where it only uses the latest $m$ gradients ($m\leq n$):
\begin{equation}\label{eq_moving_copy}
v_t = \beta_2v_{t-1} + (1-\beta_2)\phi(g^2_{t-n}) ~~\mbox{and}~~m_t = \frac{ \sum_{i=0}^{m-1} \beta_1^i g_{t-i}}{\sum_{i=0}^{m-1} \beta_1^i}.
\end{equation}
We set $\beta_1=0.9$, $\beta_2=0.999$. The results are shown in Figure \ref{exp_n_sensitivity_densenet_tiny}, Figure \ref{exp_m_sensitivity_densenet_tiny} and Figure \ref{exp_n_sensitivity_nmt}. In these experiments, AdaShift is fairly stable when changing $n$ and $m$. 
We have not find a clear pattern on the performance change with respect to $n$ and $m$. 

\begin{figure}[h!]
    \centering
	\begin{subfigure}{0.49\linewidth}
    	\centering
        \includegraphics[width=0.99\textwidth]{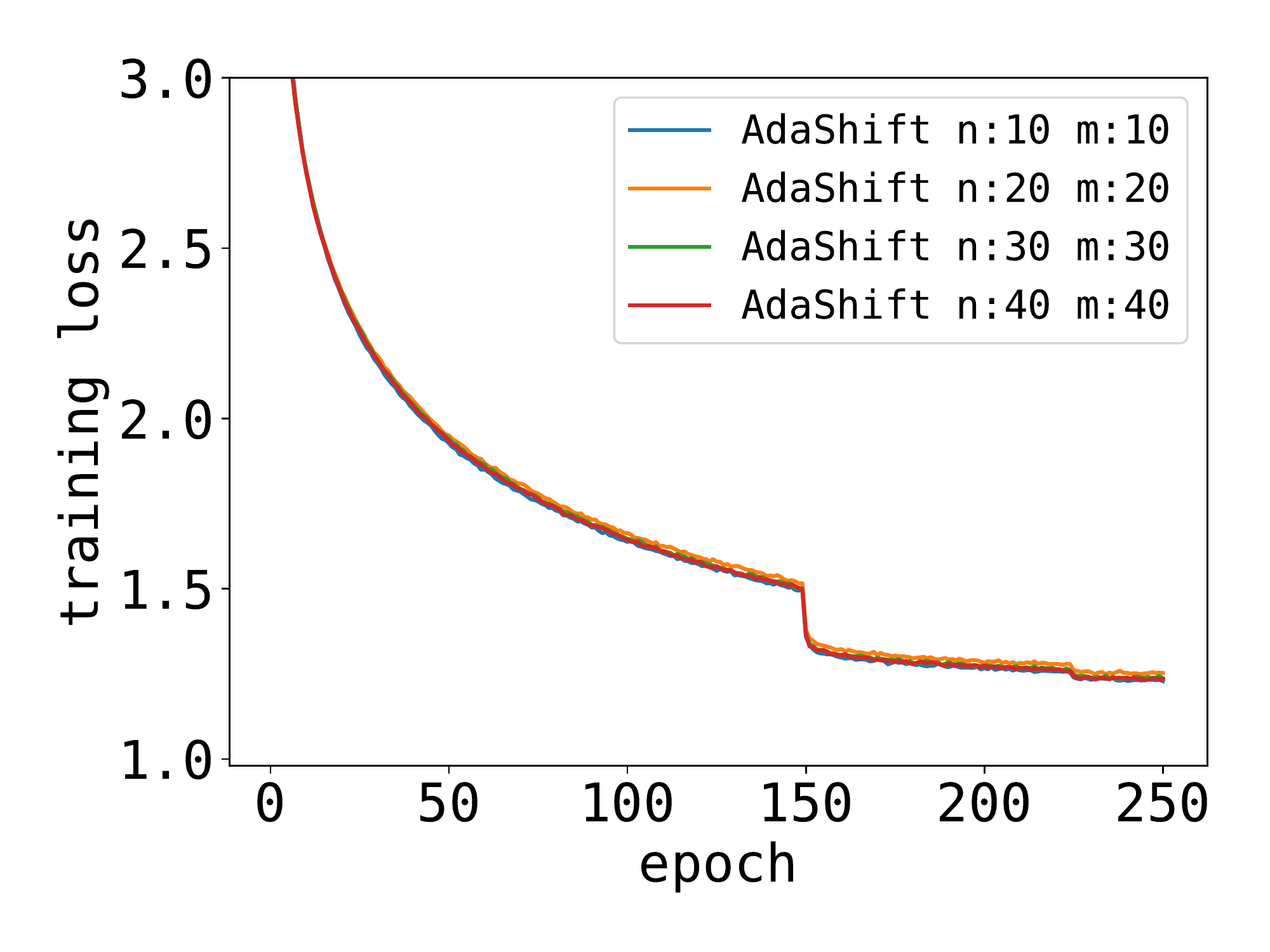}
    \end{subfigure}	
	\begin{subfigure}{0.49\linewidth}
    	\centering
        \includegraphics[width=0.99\textwidth]{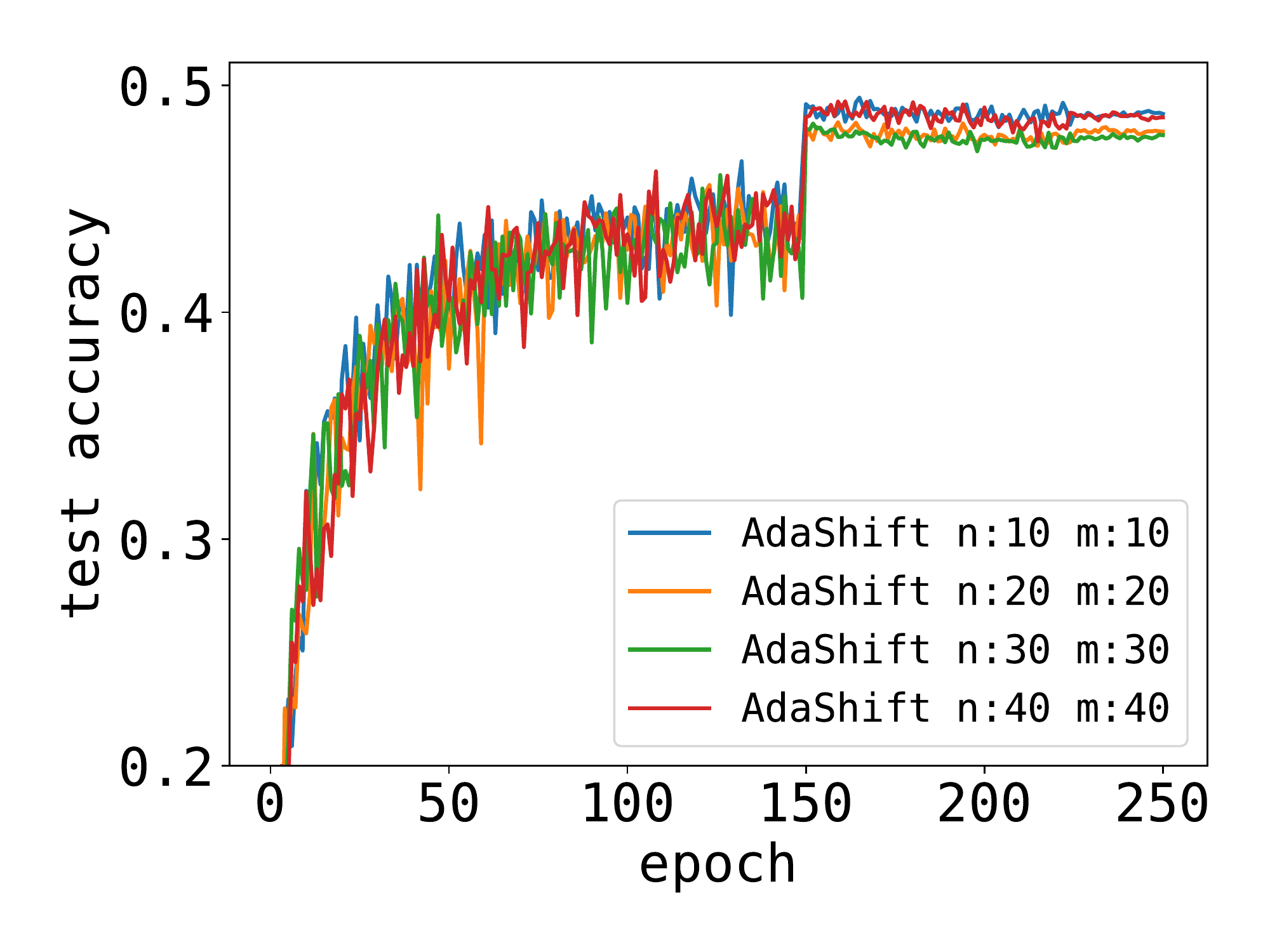}
    \end{subfigure}	
	\vspace{-0.50cm}
	\caption{ $ n $ sensitivity experiment with DenseNet on Tiny-ImageNet.}
	\label{exp_n_sensitivity_densenet_tiny}
\end{figure}

\begin{figure}[h]
    \centering
	\begin{subfigure}{0.49\linewidth}
    	\centering
        \includegraphics[width=0.99\textwidth]{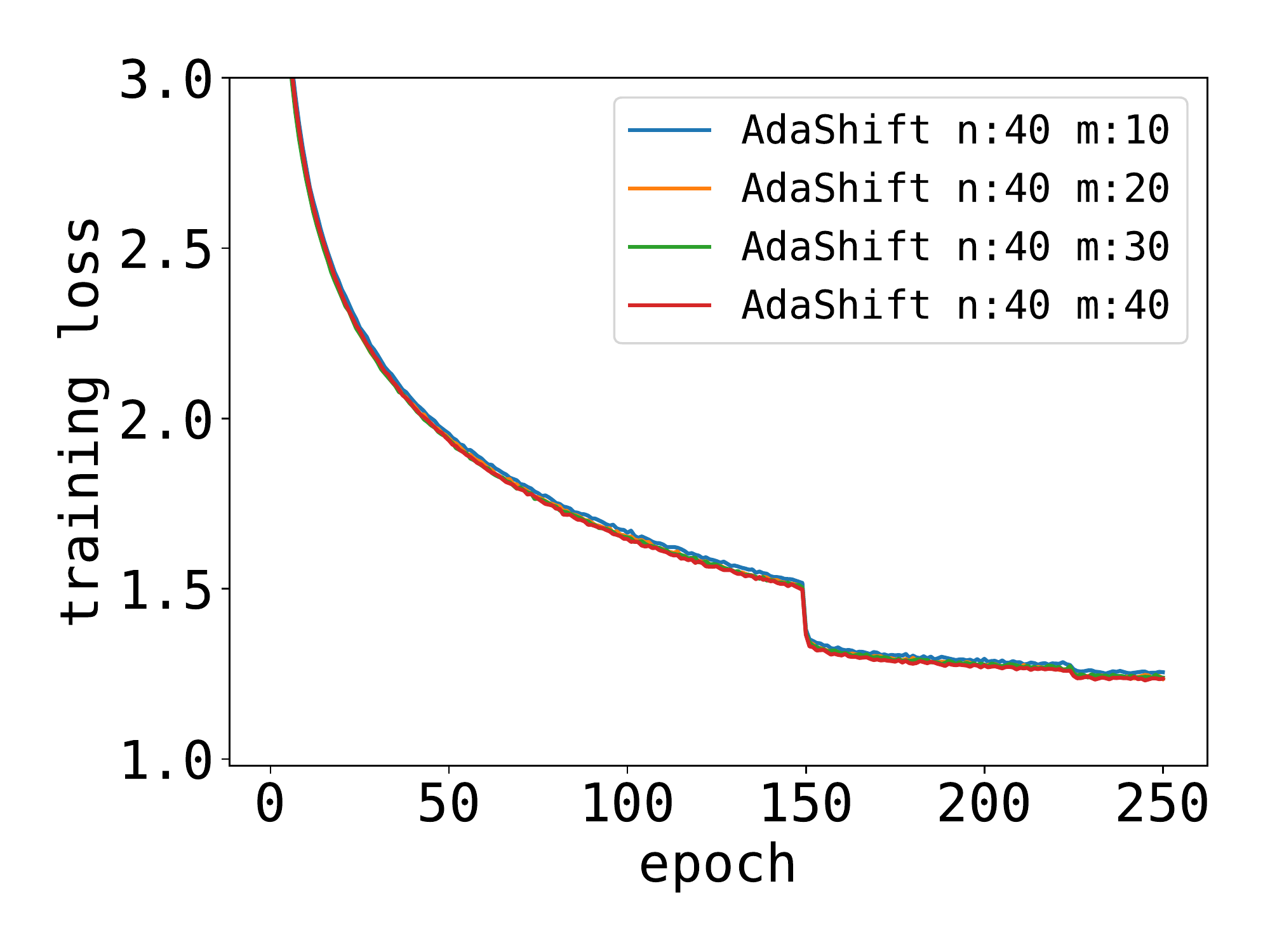}
    \end{subfigure}	
	\begin{subfigure}{0.49\linewidth}
    	\centering
        \includegraphics[width=0.99\textwidth]{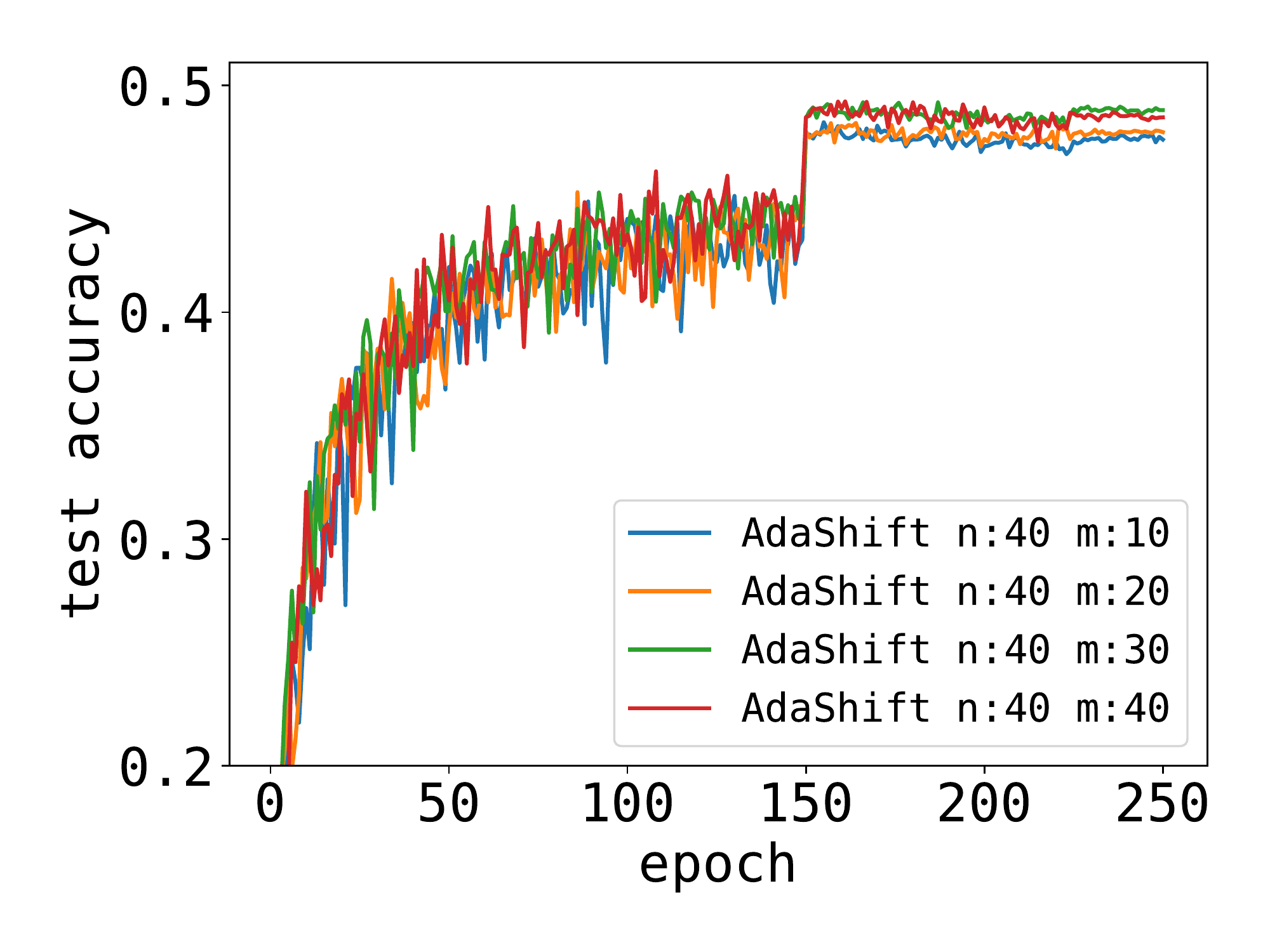}
    \end{subfigure}	
	\vspace{-0.50cm}
	\caption{ $ m $ sensitivity experiment with DenseNet on Tiny-ImageNet.}
	\label{exp_m_sensitivity_densenet_tiny}
\end{figure}

\begin{figure}[h!]
    \centering
	\begin{subfigure}{0.49\linewidth}
    	\centering
        \includegraphics[width=0.99\textwidth]{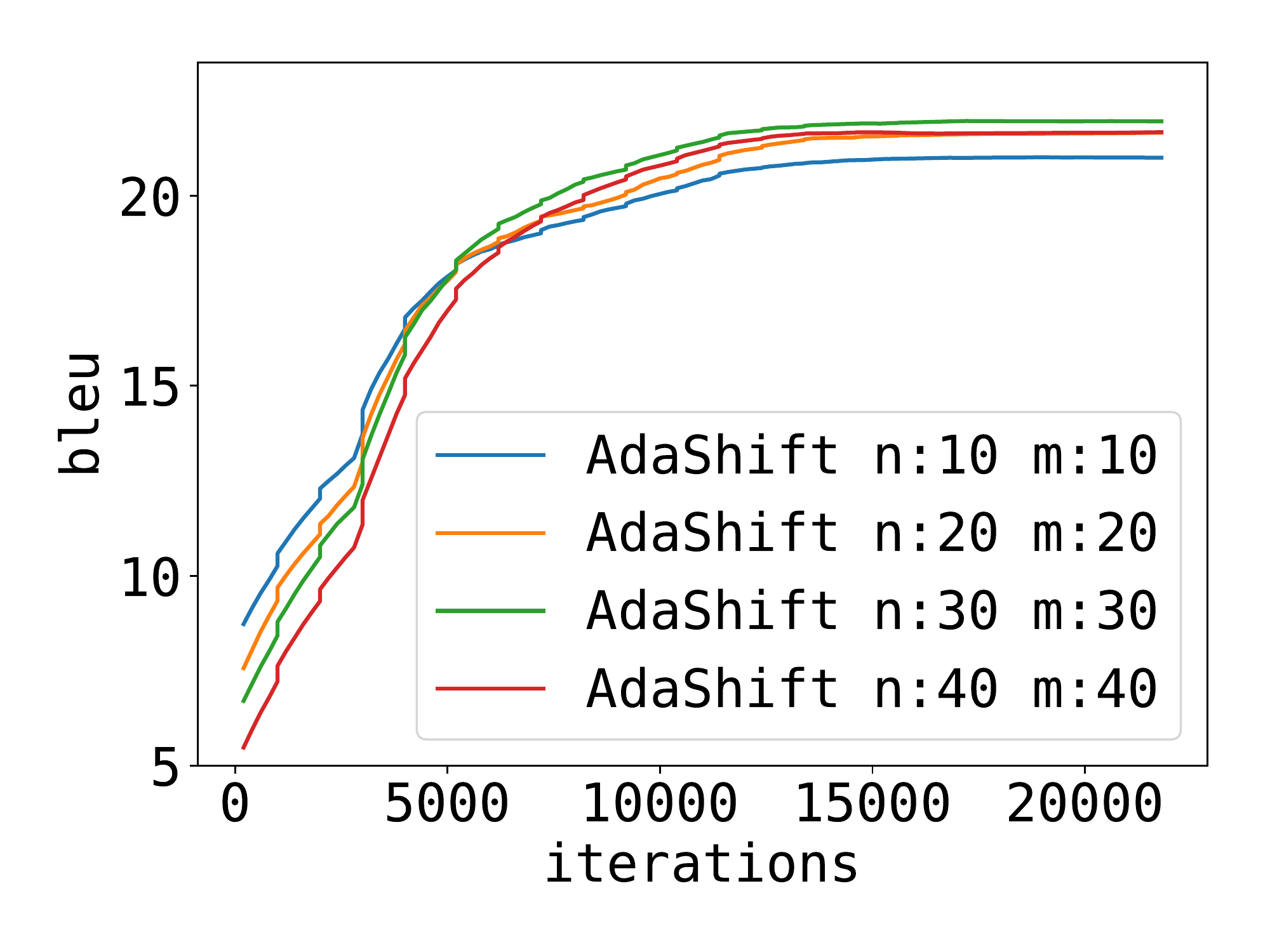}
    \end{subfigure}	
	\begin{subfigure}{0.49\linewidth}
    	\centering
        \includegraphics[width=0.99\textwidth]{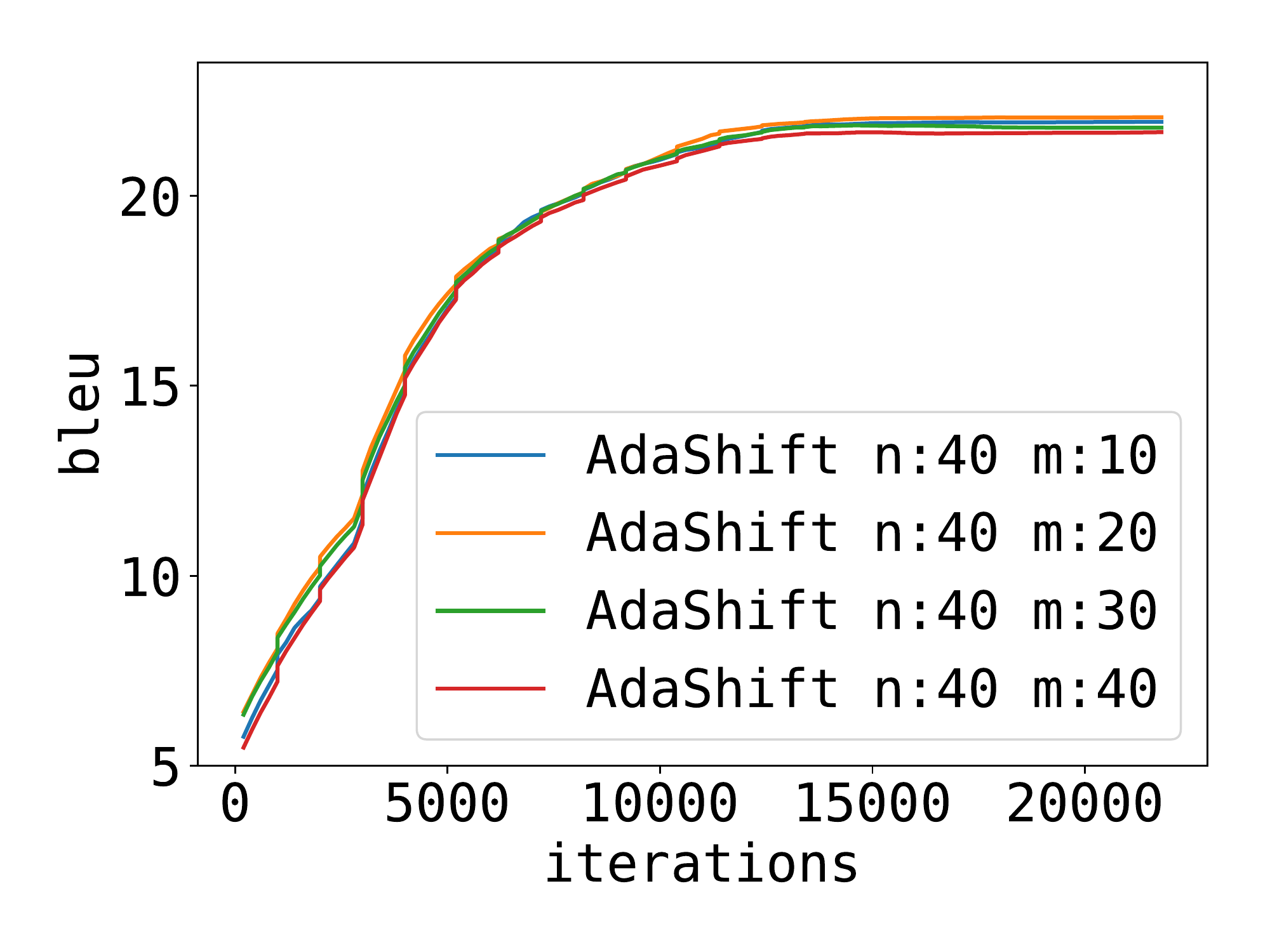}
    \end{subfigure}	
	\vspace{-0.50cm}
	\caption{ $n$ and $m$ sensitivity experiment with Neural Machine Translation BLEU.}
	\label{exp_n_sensitivity_nmt}
\end{figure}

\section{Temporal-only and Spatial-only}


In our proposed algorithm, we apply a spatial operation on the temporally shifted gradient $g_{t-n}$ to update $v_t$: $v_{t}[i] = \beta_2v_{t-1}[i] + (1-\beta_2)\phi(g^2_{t-n}[i])$. It is based on the temporal independent assumption, i.e., $g_{t-n}$ is independent of $g_t$. And according to our argument in Section~\ref{Spatial}, one can further assume every element in $g_{t-n}$ is independent of the $i$-th dimension of $g_t$. 

We purposely avoid involving the spatial elements of the current gradient $g_t$, where the independence might not holds: when a sample which is rare and has a large gradient appear in the mini-batch $x_t$, the overall scale of gradient $g_t$ might increase. However, for the temporally already decorrelation $g_{t-i}$, further taking the advantage of the spatial irrelevance will not suffer from this problem. 

We here provide extended experiments on two variants of AdaShift: (i) AdaShift (temporal-only), which only uses the vanilla temporal independent assumption and evaluate $v_t$ with: $v_{t} = \beta_2v_{t-1} + (1-\beta_2)g_{t-n}^2$; (ii) AdaShift (spatial-only), which directly uses the spatial elements without temporal shifting. 

\begin{figure}[h!]
\centering
    \centering
	\begin{subfigure}{0.49\linewidth}
    	\centering
        \includegraphics[width=0.99\textwidth]{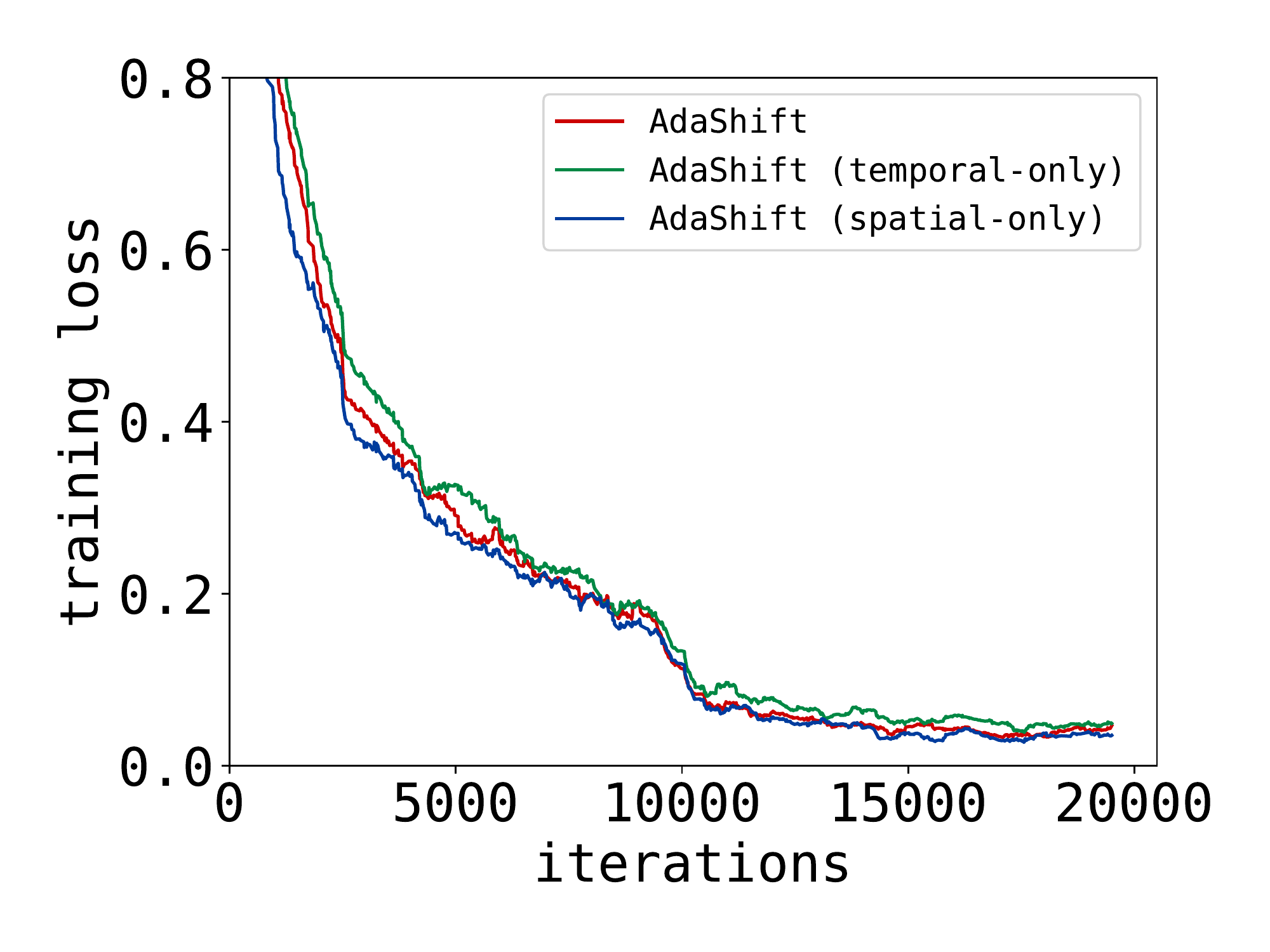}
    \end{subfigure}	
	\begin{subfigure}{0.49\linewidth}
    	\centering
        \includegraphics[width=0.99\textwidth]{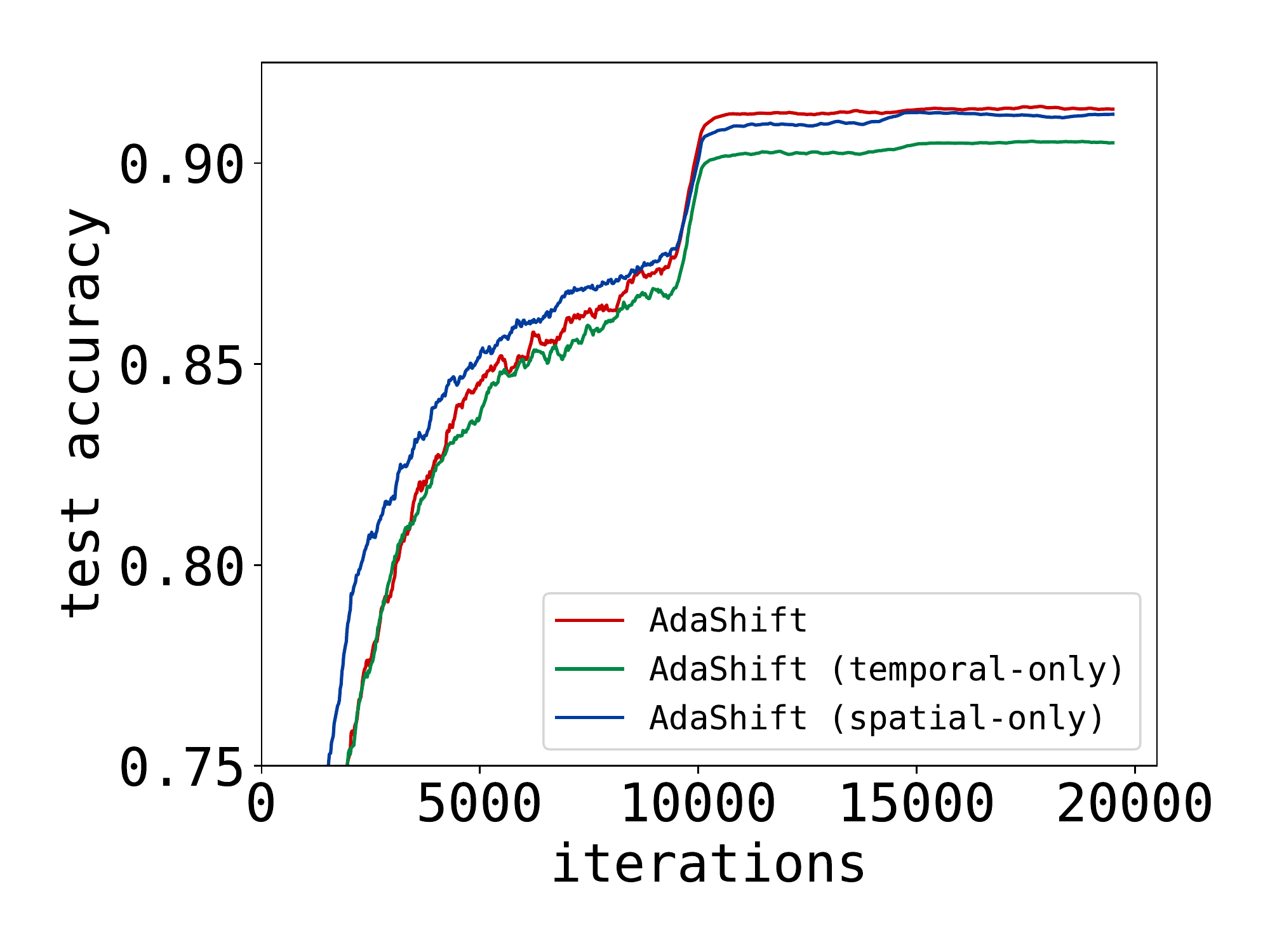}
    \end{subfigure}	
	\vspace{-0.50cm}
	\caption{ResNet on CIFAR-10.}
	\label{exp_selfcompare_resnet_cifar10}
	\vspace{-0.4cm}
\end{figure}

\begin{figure}[h!]
    \centering
	\begin{subfigure}{0.49\linewidth}
    	\centering
        \includegraphics[width=0.99\textwidth]{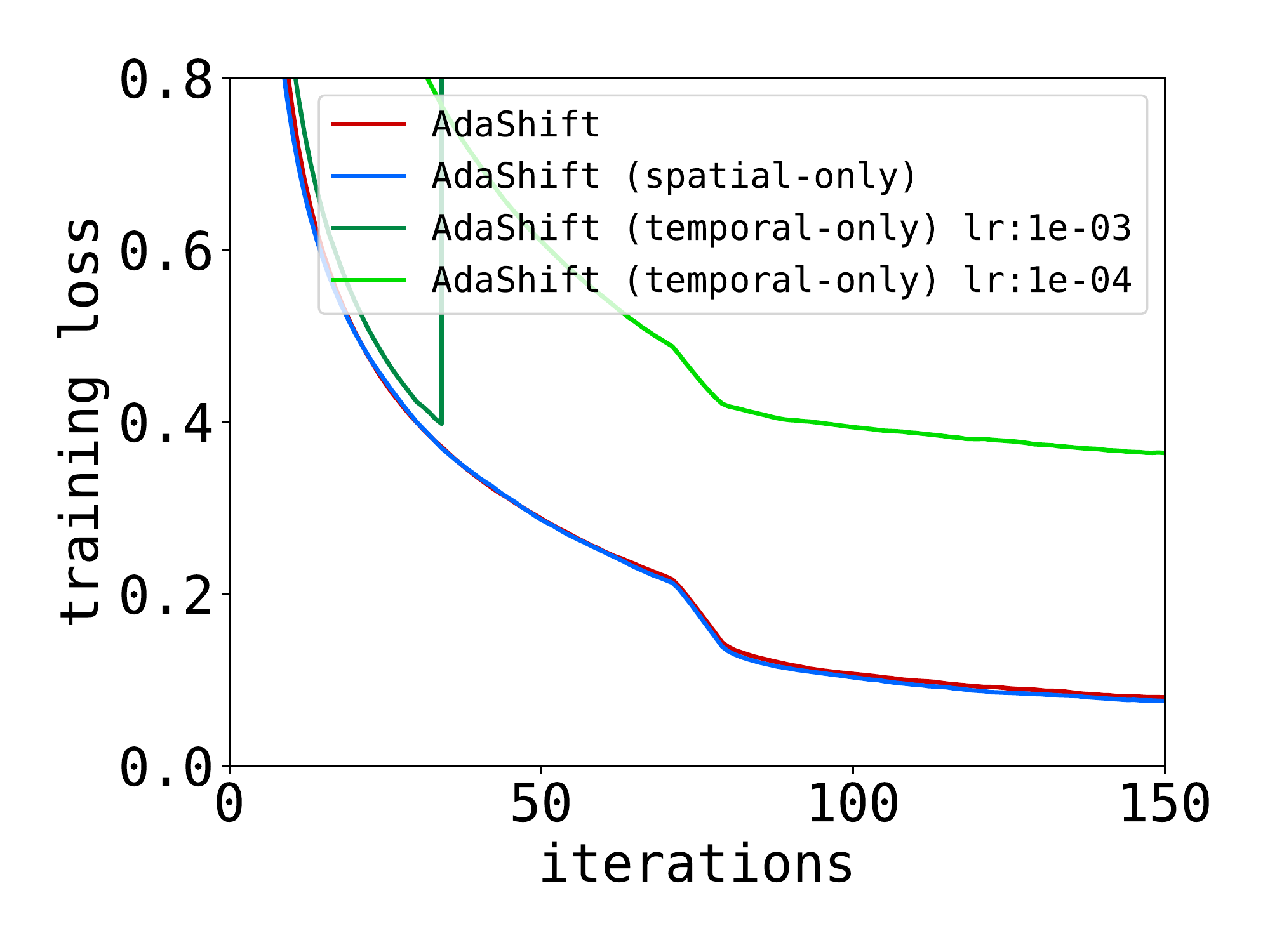}
    \end{subfigure}	
	\begin{subfigure}{0.49\linewidth}
    	\centering
        \includegraphics[width=0.99\textwidth]{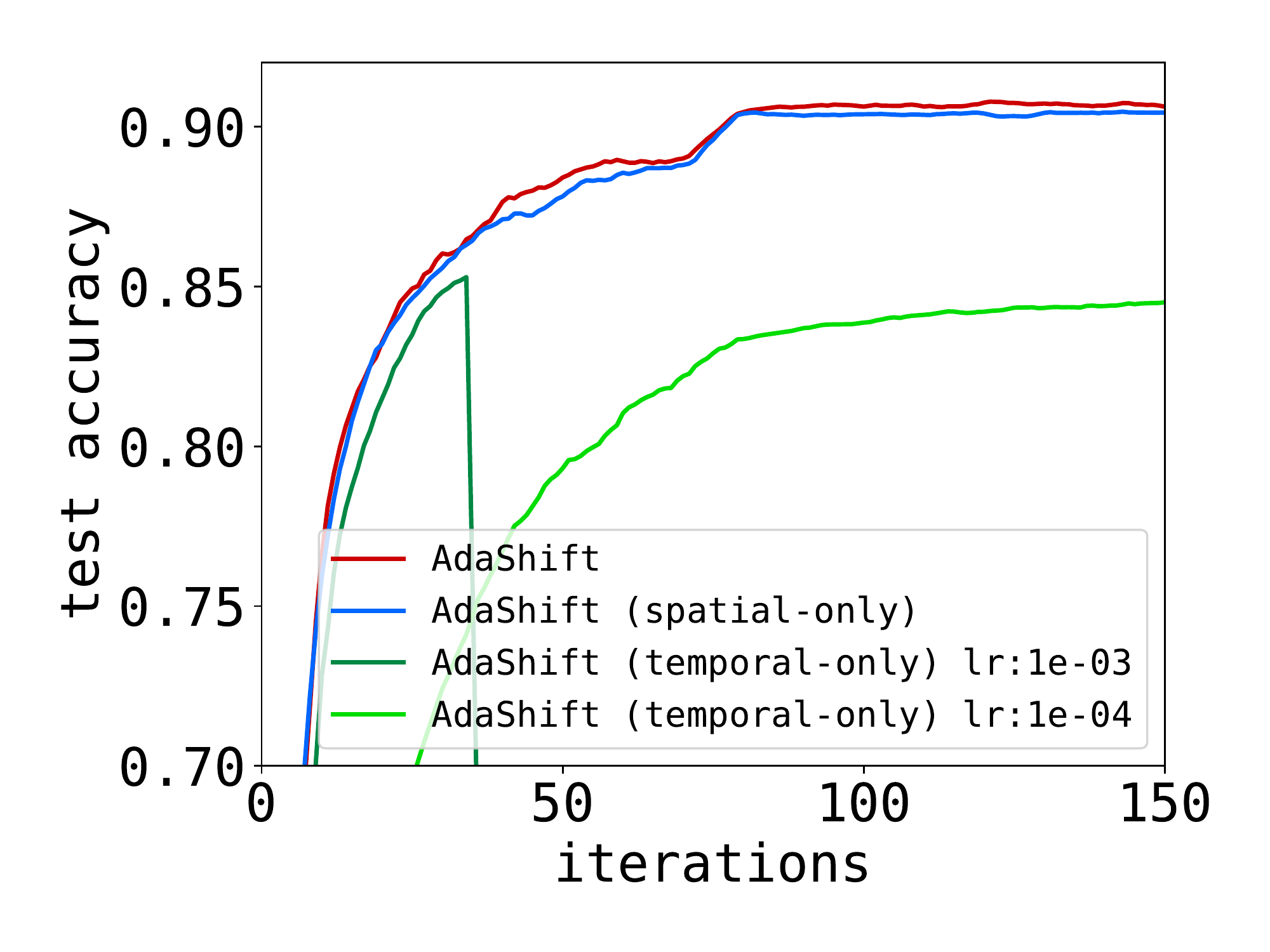}
    \end{subfigure}	
	\vspace{-0.50cm}
	\caption{DenseNet on CIFAR-10. }
	\label{exp_selfcompare_densenet_cifar10}
    \vspace{-0.4cm}
\end{figure}

According to our experiments, AdaShift (temporal-only), i.e., without the spatial operation, is less stable than AdaShift. In some tasks, AdaShift (temporal-only) works just fine; while in some other cases, AdaShift (temporal-only) suffers from explosive gradient and requires a relatively small learning rate. The performance of AdaShift (spatial-only) is close to Adam. More experiments for AdaShift (spatial-only) are included in the next section.

\section{Extended Experiments: Nadam and AdaShift(Space Only)}

In this section, we extend the experiments and add the comparisons with Nadam and AdaShift (spatial-only). The results are shown in Figure \ref{exp_resnet_cifar10_app}, Figure\ref{exp_densenet_cifar10_app} and Figure\ref{exp_densenet_tiny_app}. According to these experiments, Nadam and AdaShift (spatial-only) share similar performence as Adam. 



\begin{figure}[h]
\centering
    \centering
	\begin{subfigure}{0.49\linewidth}
    	\centering
        \includegraphics[width=0.99\textwidth]{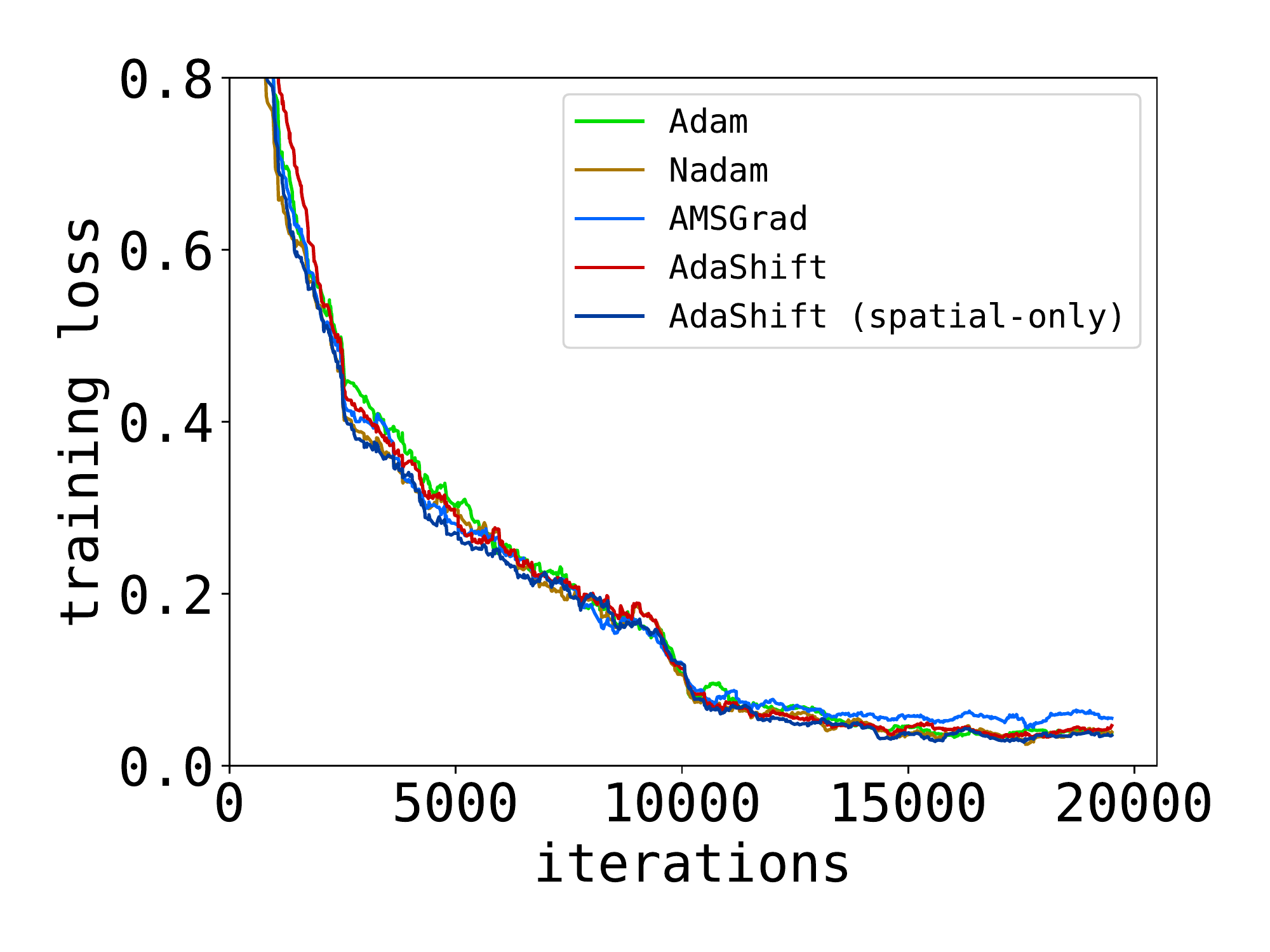}
    \end{subfigure}	
	\begin{subfigure}{0.49\linewidth}
    	\centering
        \includegraphics[width=0.99\textwidth]{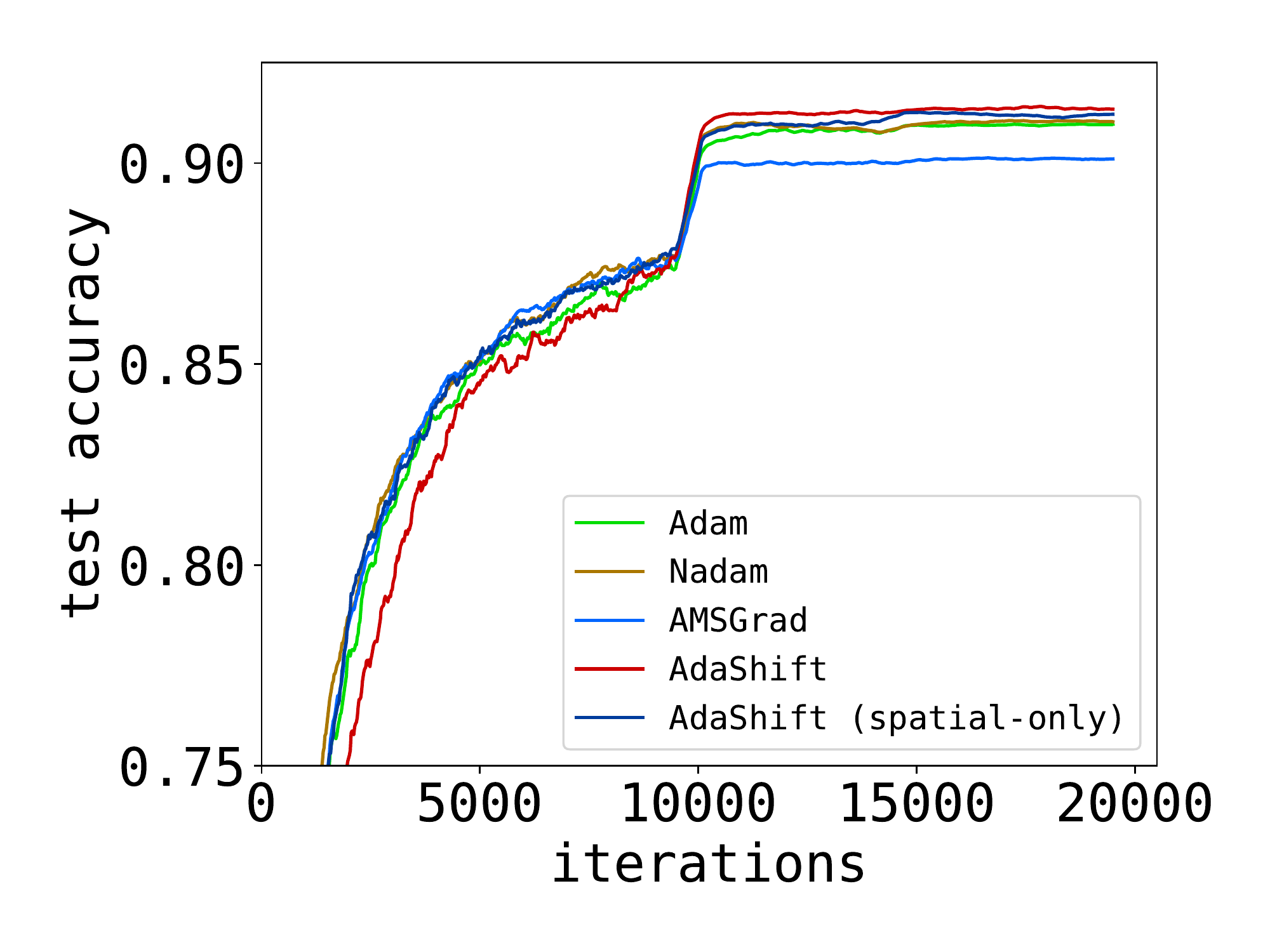}
    \end{subfigure}	
	\vspace{-0.50cm}
	\caption{ResNet on CIFAR-10. }
	\label{exp_resnet_cifar10_app}
\end{figure}

\begin{figure}[h]
    \centering
	\begin{subfigure}{0.49\linewidth}
    	\centering
        \includegraphics[width=0.99\textwidth]{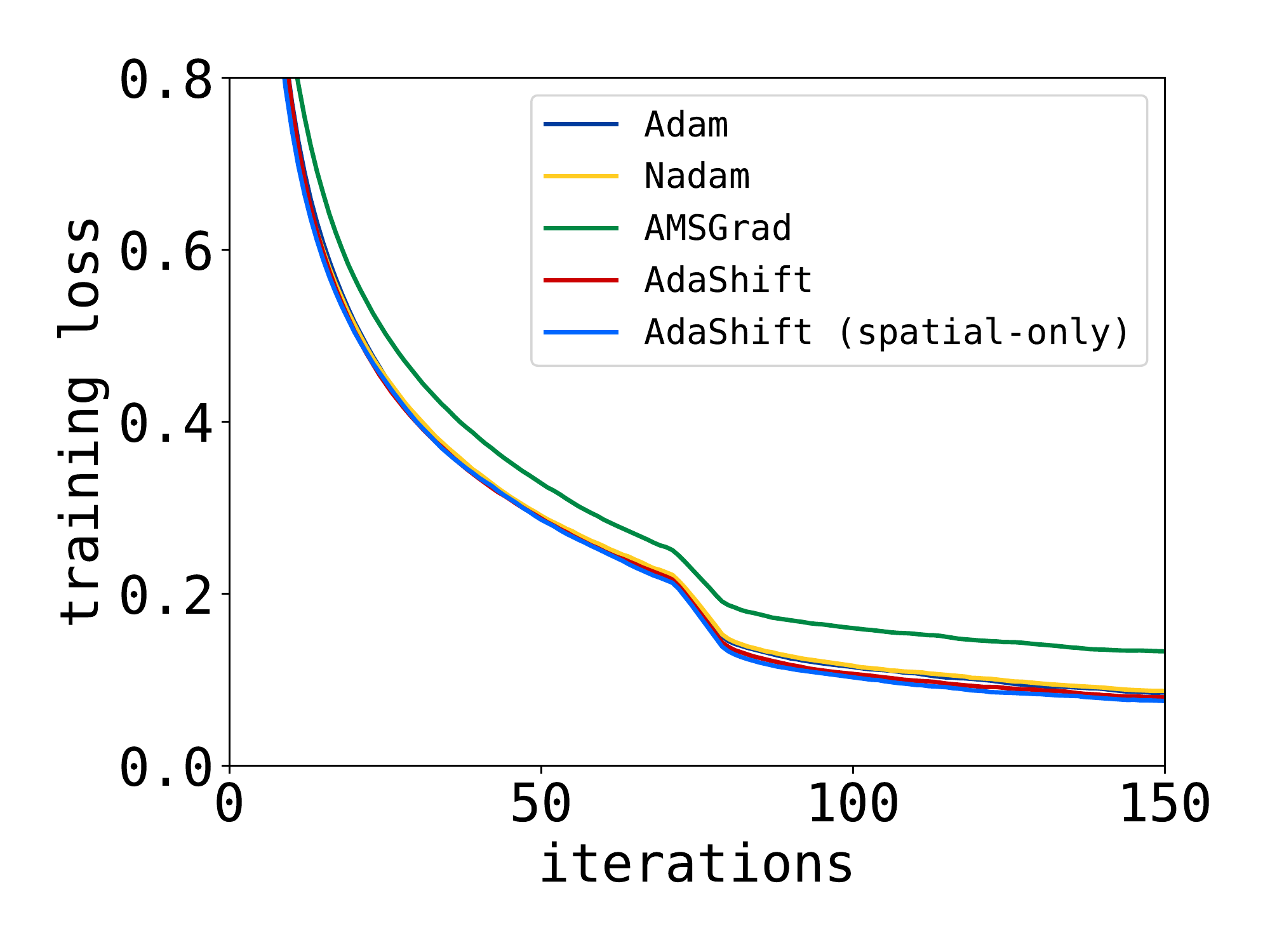}
    \end{subfigure}	
	\begin{subfigure}{0.49\linewidth}
    	\centering
        \includegraphics[width=0.99\textwidth]{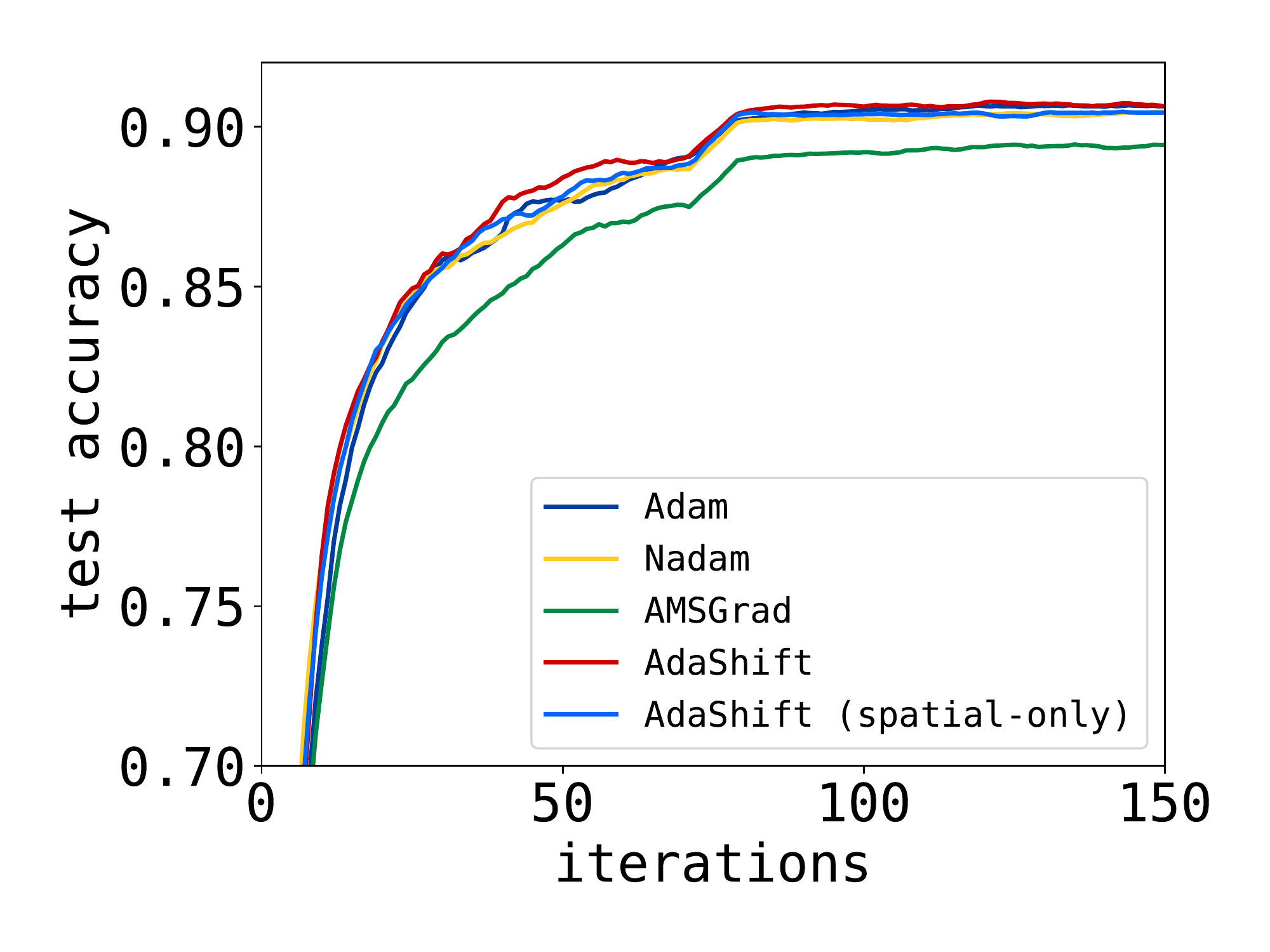}
    \end{subfigure}	
	\vspace{-0.50cm}
	\caption{DenseNet on CIFAR-10.}
	\label{exp_densenet_cifar10_app}
\end{figure}

\begin{figure}[h]
\centering
    \centering
	\begin{subfigure}{0.49\linewidth}
    	\centering
        \includegraphics[width=0.99\textwidth]{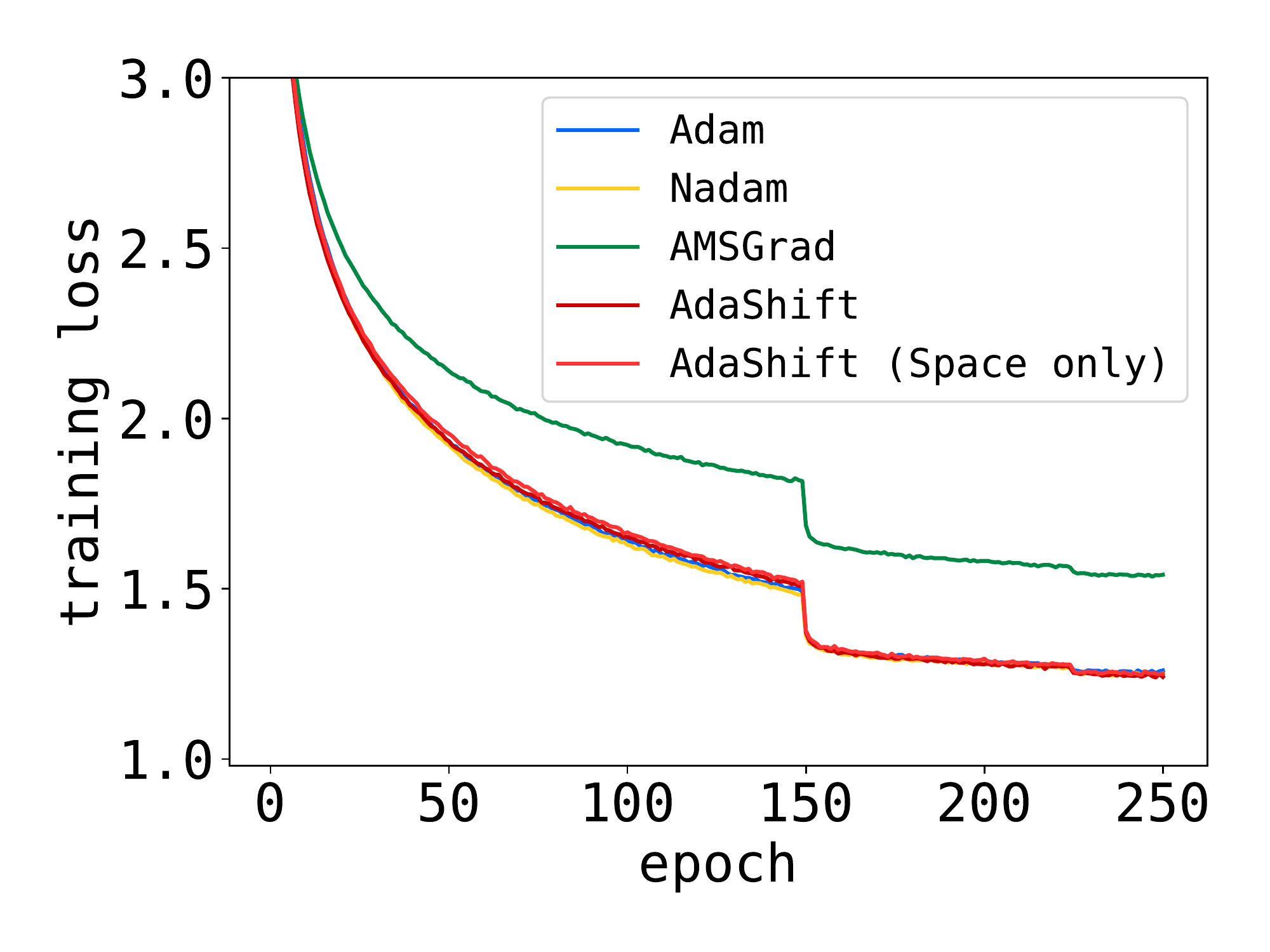}
    \end{subfigure}	
	\begin{subfigure}{0.49\linewidth}
    	\centering
        \includegraphics[width=0.99\textwidth]{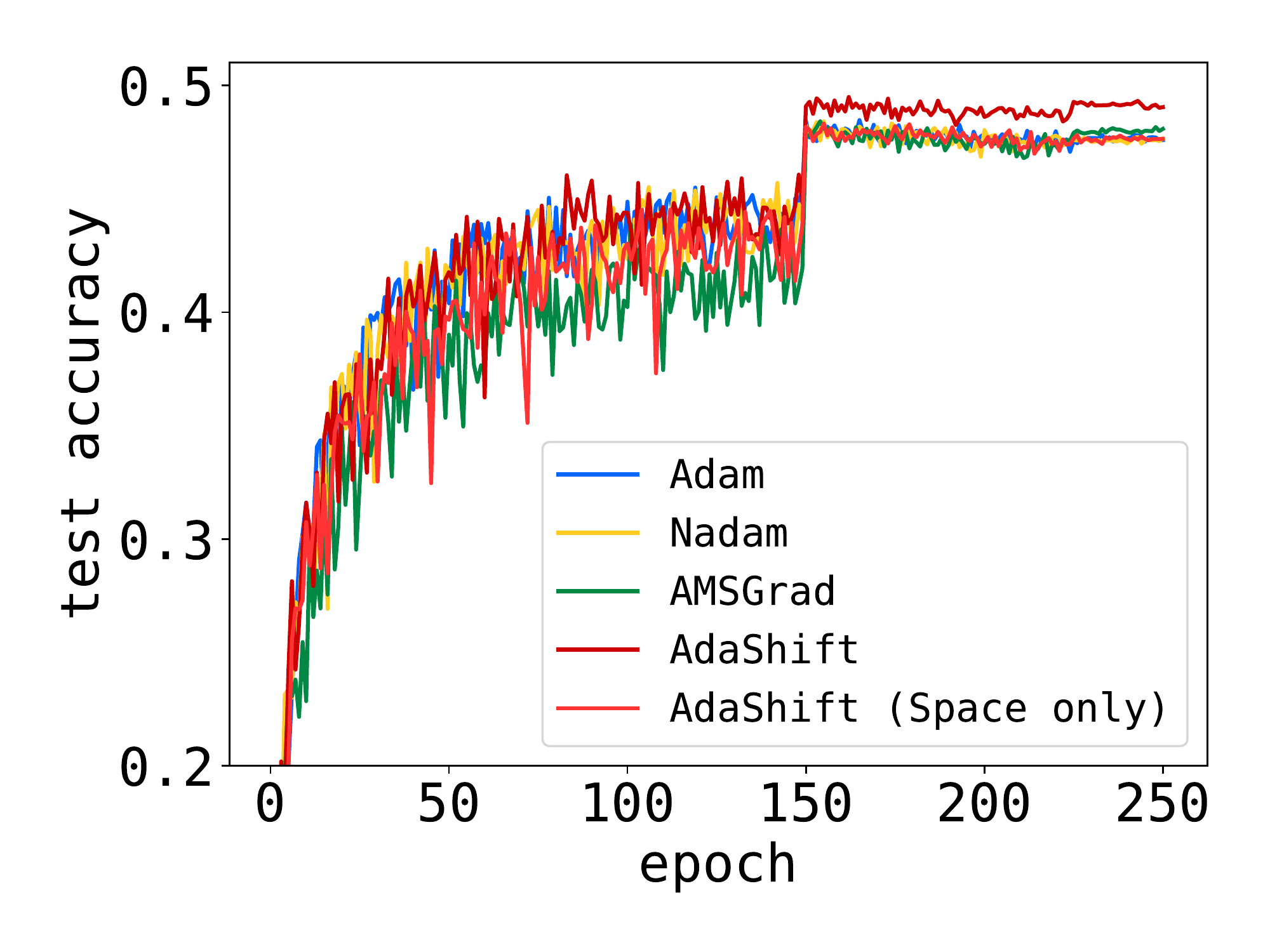}
    \end{subfigure}	    
	\vspace{-0.50cm}
	\caption{DenseNet on Tiny-ImageNet. }
	\label{exp_densenet_tiny_app}
\end{figure}

\section{Extension experiments: ill-conditioned quadratic problem}

\cite{test_of_time} raise the point, at “test of time” talk at NIPS 2017, that it is suspicious that gradient descent (aka back-propagation) is ultimate solution for optimization. A ill-conditioned quadratic problem with Two Layer Linear Net is showed to be challenging for gradient descent based methods, while alternative solutions, e.g., Levenberg-Marquardt, may converge faster and better. The problem is defined as follows:
\begin{equation}
L(W_1,W_2; A) = \E_{x\in N(0,1)} \lVert W_1 W_2 x - Ax \rVert^2
\end{equation}
where $A$ is some known badly conditioned matrix ($k=10^{20}$ or $10^5$), and $W_1$ and $W_2$ are the trainable parameters.

We test SGD, Adam and AdaShift with this problem, the results are shown in Figure \ref{exp_ill_condition_extend}, Figure \ref{exp_ill_condition}. It turns out as long as the training goes enough long, SGD, Adam, AdaShift all basically converge in this problem. Though SGD is significantly better than Adam and AdaShift. 

We would tend to believe this is a general issue of adaptive learning rate method when comparing with vanilla SGD. Because these adaptive learning rate methods generally are scale-invariance, i.e., the step-size in terms of $g_t/sqrt(v_t)$ is basically around one, which makes it hard to converge very well in such a ill-conditioning quadratic problem. SGD, in contrast, has a step-size $g_t$; as the training converges SGD would have a decreasing step-size, makes it much easier to converge better. 
The above analysis is confirmed with Figure \ref{exp_ill_condition_extend2} and  Figure \ref{exp_ill_condition_extend3}, with a decreasing learning rate, Adam and AdaShfit both converge very good.

\begin{figure}[h]
    \includegraphics[width=0.92\textwidth]{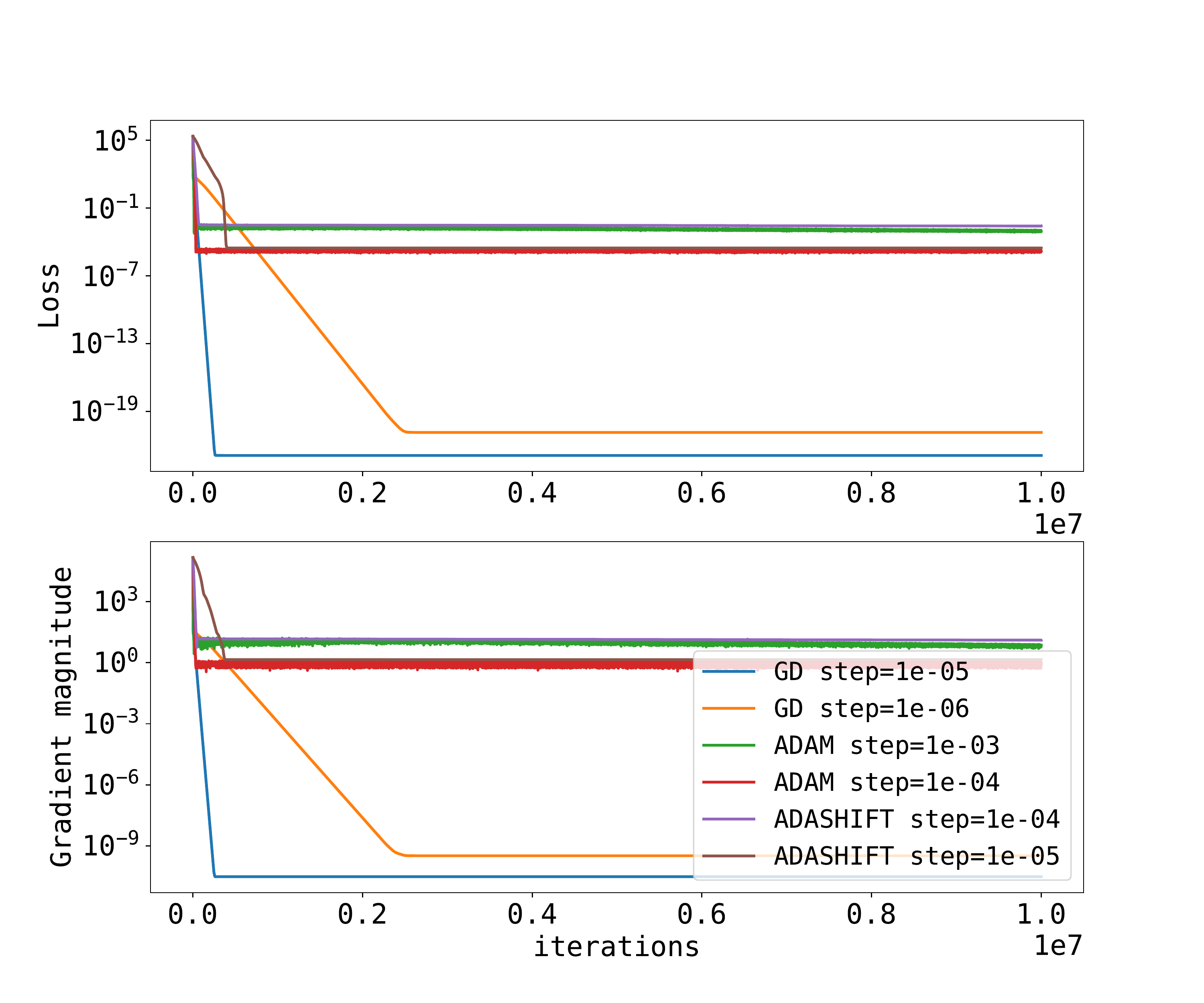}
	\vspace{-0.45cm}
	\caption{Ill-conditioned quadratic problem, with fixed learning rate.}
	\label{exp_ill_condition_extend}
\end{figure}

\begin{figure}[h]
    \includegraphics[width=0.92\textwidth]{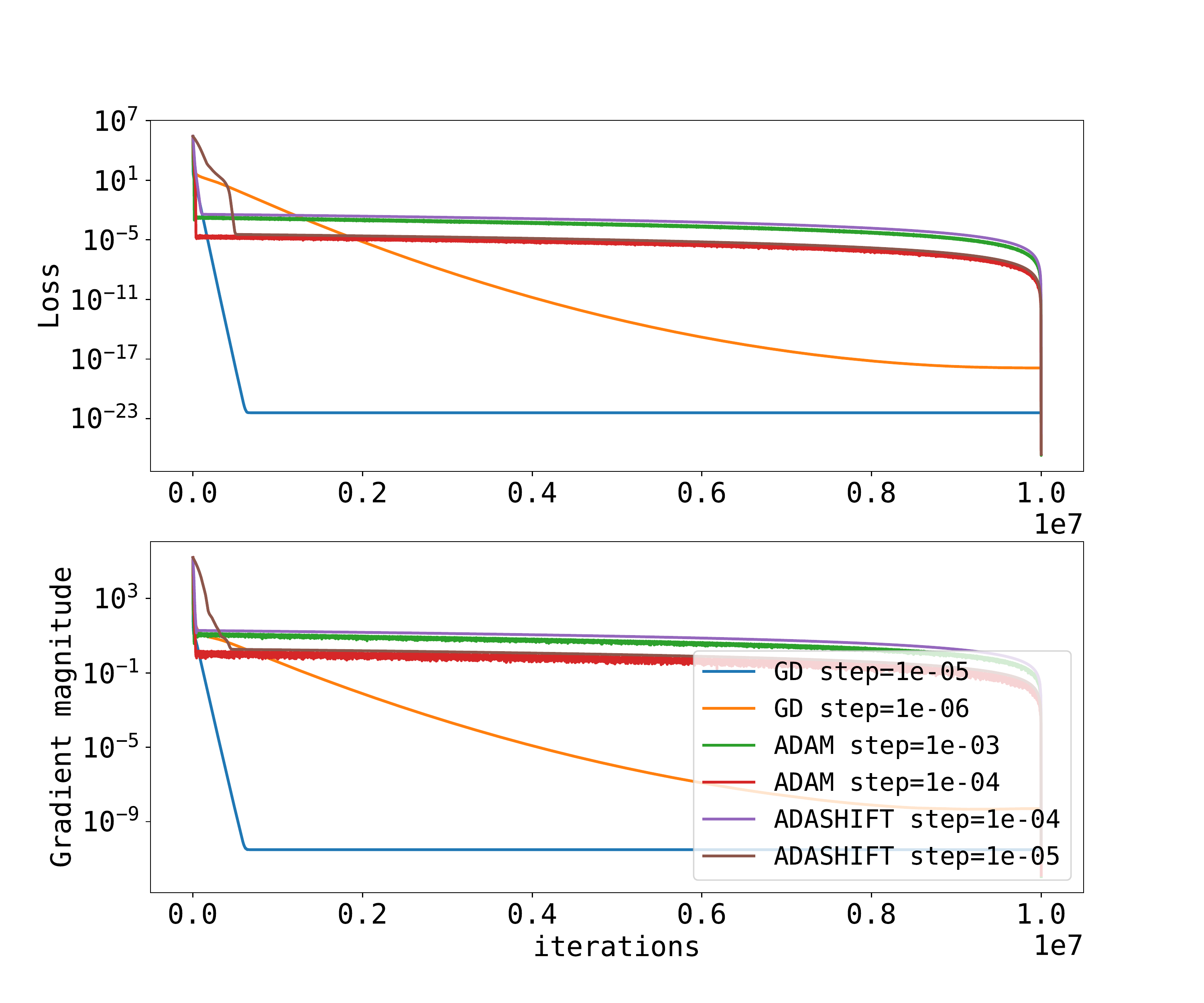}
	\vspace{-0.45cm}
	\caption{Ill-conditioned quadratic problem, with linear learning rate decay.}
	\label{exp_ill_condition_extend2}
\end{figure}

\begin{figure}[h]
    \includegraphics[width=0.92\textwidth]{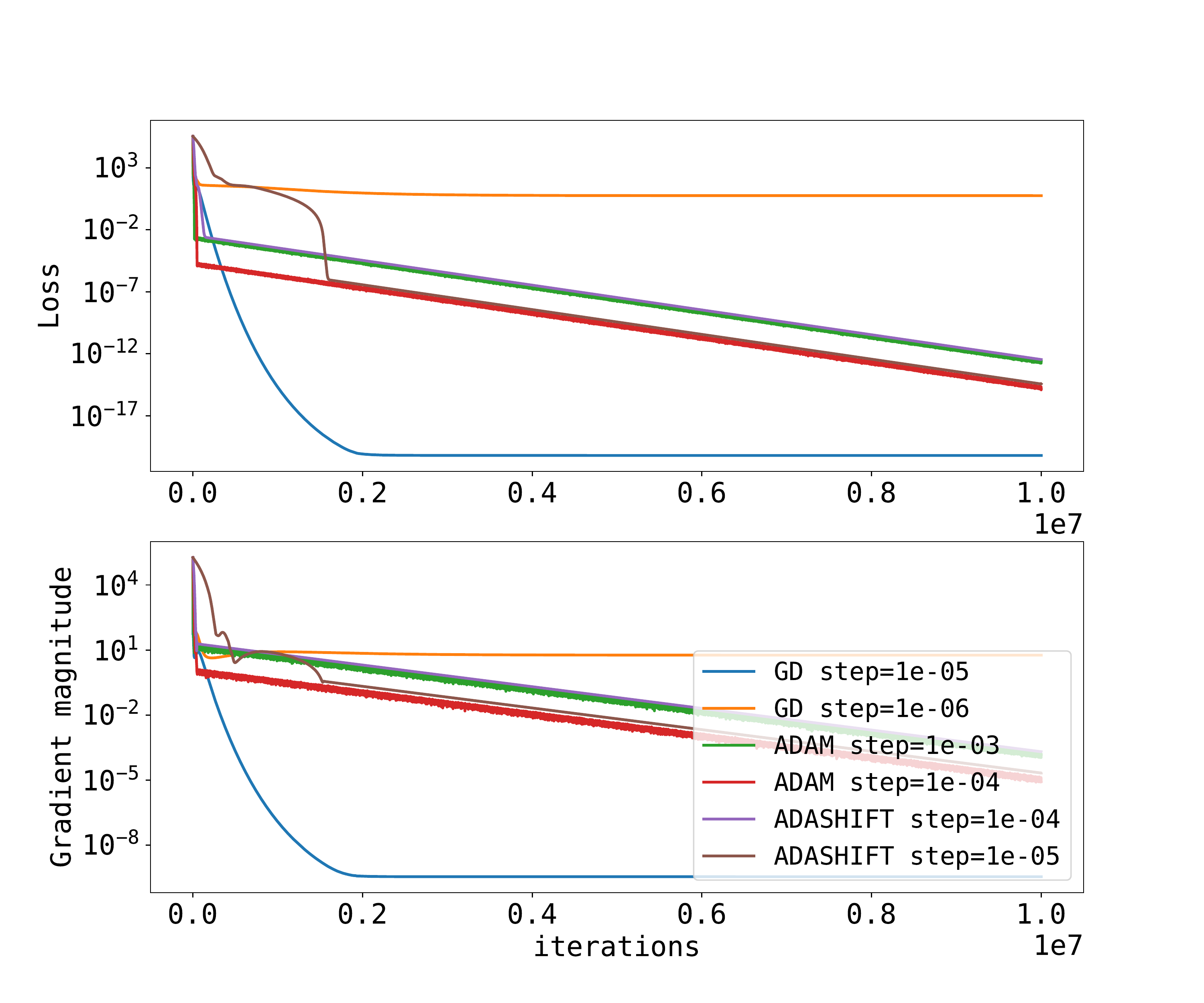}
	\vspace{-0.45cm}
	\caption{Ill-conditioned quadratic problem, with exp learning rate decay.}
	\label{exp_ill_condition_extend3}
\end{figure}

\begin{figure}[t!]
    \includegraphics[width=0.92\textwidth]{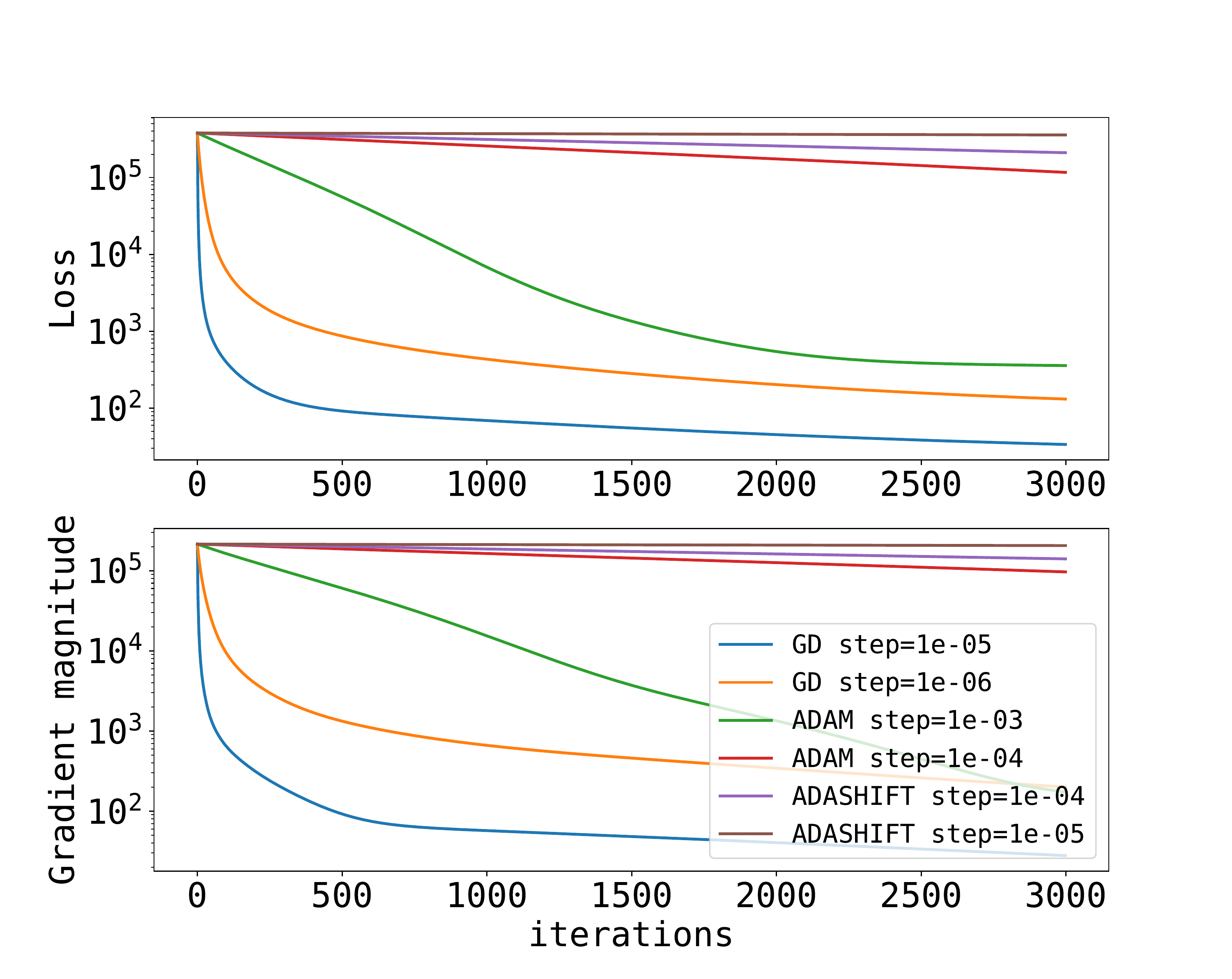}
	\vspace{-0.45cm}
	\caption{Ill-conditioned quadratic problem, with fixed learning rate and insufficient iterations.}
	\label{exp_ill_condition}
\end{figure}

\end{document}